\DeclareMathOperator*{\argmin}{argmin}
\begin{document}

	\pagenumbering{arabic}
	\setcounter{page}{1}
	
	%\jmlrheading{x}{xxxx}{1-xx}{xx/xx}{xx/xx}{meila00a}{Fanghui Liu, Lei Shi, Xiaolin Huang, Jie Yang and Johan A.K. Suykens}
	
	% Short headings should be running head and authors last names
	
	\ShortHeadings{Generalization Properties of hyper-RKHS and its Applications}{Liu, Shi, Huang, Yang and Suykens}
	\firstpageno{1}

\title{Generalization Properties of hyper-RKHS and its Applications}

\author{\name Fanghui Liu\thanks{Fanghui Liu and Lei Shi contributed equally to this work. Corresponding authors: Fanghui Liu and Jie Yang.} \email fanghui.liu@kuleuven.be \\
\addr Department of Electrical Engineering, ESAT-STADIUS, KU Leuven \\
Kasteelpark Arenberg 10, Leuven, B-3001, Belgium
       \AND
       \name Lei Shi${\color{black}^*}$ \email leishi@fudan.edu.cn \\
       \addr Shanghai Key Laboratory for Contemporary Applied Mathematics\\
       School of Mathematical Sciences, Fudan University \\
       Shanghai, 200433, China
   \AND
\name Xiaolin Huang \email xiaolinhuang@sjtu.edu.cn
\AND
\name Jie Yang \email jieyang@sjtu.edu.cn \\
\addr Institute of Image Processing and Pattern Recognition,
Shanghai Jiao Tong University \\
\addr Institute of Medical Robotics, Shanghai Jiao Tong University\\
Shanghai, 200240, China
\AND
\name Johan A.K. Suykens \email johan.suykens@esat.kuleuven.be \\
\addr Department of Electrical Engineering, ESAT-STADIUS, KU Leuven \\
Kasteelpark Arenberg 10, Leuven, B-3001, Belgium
}

\editor{Jean-Philippe Vert}

\maketitle

\begin{abstract}%   <- trailing '%' for backward compatibility of .sty file
This paper generalizes regularized regression problems in a hyper-reproducing kernel Hilbert space (hyper-RKHS), illustrates its utility for kernel learning and out-of-sample extensions, and proves asymptotic convergence results for the introduced regression models in an approximation theory view.
Algorithmically, we consider two regularized regression models with bivariate forms in this space, including kernel ridge regression (KRR) and support vector regression (SVR) endowed with hyper-RKHS, and further combine divide-and-conquer with Nystr\"{o}m approximation for scalability in large sample cases. This framework is general: the underlying kernel is learned from a broad class, and can be positive definite or not, which adapts to various requirements in kernel learning. Theoretically, we study the convergence behavior of regularized regression algorithms in hyper-RKHS and derive the learning rates, which goes beyond the classical analysis on RKHS due to the non-trivial independence of pairwise samples and the characterisation of hyper-RKHS. Experimentally, results on several benchmarks suggest that the employed framework is able to learn a general kernel function form an arbitrary similarity matrix, and thus achieves a satisfactory performance on classification tasks.
\end{abstract}

\begin{keywords}
  hyper-RKHS, approximation theory, kernel learning, out-of-sample extensions
\end{keywords}

\section{Introduction}

Reproducing kernel Hilbert spaces (RKHS) \citep{aronszajn1950theory,saitoh2016theory} provide the ability to approximate functions by nonparametric functional representations, and thus have developed into an important tool in many areas, especially kernel methods in machine learning \citep{suykens2002least,scholkopf2018learning}.
For any two data points $\bm x, \bm x' \in X$, kernel methods work under the setting: the original data are mapped to high or infinite dimensional RKHS such that $k(\bm x, \bm x') = \langle \varphi(\bm x), \varphi(\bm x') \rangle_{\mathcal{H}}$ with an implicit feature mapping $\varphi: X \rightarrow \mathcal{H}$.
Here the kernel is required to be symmetric and positive definite (PD)\,\footnote{Because of the confusing terminology on functions and their counterparts on matrices, here, we follow the convention that a positive definite function corresponds to a positive semi-definite (PSD) matrix.}, and corresponds a unique RKHS. The ``reproducing" terminology indicates the \emph{reproducing property} of RKHS
\begin{equation*}
	f(\bm x) = \langle f, k(\bm x, \cdot) \rangle_{\mathcal{H}},~~\mbox{for all $f \in \mathcal{H}$ and $\bm x \in X$ }\,.
\end{equation*}
Accordingly, for each $\bm x \in X$, the point evaluation function $f \rightarrow f(\bm x)$ is continuous, showing that strong convergence in RKHS implies point-wise convergence.
This property makes RKHS an appealing choice in machine learning problems with nice theoretical guarantees in an approximation theory view \citep{caponnetto2007optimal,cucker2007learning}.
%the set of functions that can be represented is still sufficiently rich so as to permit close approximation of a large class of functions. 
%This property makes RKHS an appealing choice in many learning problems where we want to estimate an unknown function that is specified as optimal with respect to some empirical risk.
The structure of RKHS is determined by the choice of the kernel $k$, but selecting appropriate kernels is not a trivial task.
In fact, RKHS is not large enough \citep{bach2017breaking,steinwart2020reproducing} and thus would lead to a lack of adaptivity in many learning problems.
In this paper, we consider to learn the kernel from a hyper-reproducing kernel Hilbert space (hyper-RKHS) \citep{Cheng2005Learning} associated with the reproducing hyper-kernel \citep{kondor2007gaussian}.
Different from RKHS, every element in hyper-RKHS is a kernel function, which allows for significant model flexibility from a broad class.
Specifically, the learned kernel endowed by hyper-RKHS has the property of translation
and rotation invariant simultaneously \citep{motai2015kernel}, and thus is extensively applied to feature representations \citep{raj2016local} and other applications such as classification \citep{Tsang2006Efficient}, density estimation \citep{ganti2008hyperkernel}, and out-of-sample extensions \citep{Pan2016Out}.

Learning in hyper-RKHS $\underline{\mathcal{H}}$ is general to cover various settings or applications, e.g., kernel learning, out-of-sample extensions, and indefinite kernels (real, symmetric but not positive definite).
First, in kernel learning aspect, let $X$ be a compact metric space, $\widetilde{Y} \subseteq  \mathbb{R}$ be the output space\footnote{The symbol $Y$ is used to denote another output space that will be introduced later.}, $\mathcal{D}=\{ (\bm x_i, \tilde{y}_i)\}_{i=1}^m$ be the training set with $\bm x_i \in X$ and the response $\tilde{y}_i \in \widetilde{Y}$.
Let $k(\cdot,\cdot): X \times X \rightarrow \mathbb{R}$ be a positive definite kernel function that we need to learn.
Figure~\ref{framework} shows the kernel learning framework in hyper-RKHS via two stages.
Stage 1 (in blue) formulates kernel learning as a regression problem in hyper-RKHS by minimizing the quality function $\mathcal{T}(k, \bm K)$ between the learned kernel $k$ and the \emph{target kernel} $\tilde{\bm y} \tilde{\bm y}^{\!\top}$. Here the used \emph{target kernel}, regarded as an ``ideal kernel", is able to guide the kernel learning task in hyper-RKHS as it can directly recognise the training data with certainly 100\% accuracy. The used quality function $\mathcal{T}(k, \bm K)$ evaluates the similarity between the learned kernel $k$ and the pre-given $\tilde{\bm y} \tilde{\bm y}^{\!\top}$, which will be formally defined in Section~\ref{sec:hyper}.
This scheme is similar to \emph{target alignment} \citep{Cortes2012Algorithms,Wang2015An} that evaluates how well the learned Gram matrix aligns to the \emph{target kernel} based on the multiple kernel learning framework. However, different from them, the studied kernel learning framework here is formulated as a regularized regression problem in hyper-RKHS from a broader class instead of only acquiring the linear combination of basic kernel(s).
In stage 2 (in red), we aim to find a hypothesis function $f \in \mathcal{H}$ evaluated by a convex continuous loss functional $\mathbb{E}_{\bm x, \tilde{y}}[\ell (f(\bm x), \tilde{y})]$ and a Tikhonov regularizer $\| f \|^2_{\mathcal{H}}$, where the convex loss $\ell: \mathcal{H} \times \widetilde{Y} \rightarrow \mathbb{R}$ quantifies the merit of the evaluation $f(\bm x)$ at $\bm x \in X$.
Note that the learned kernel can be indefinite that is not associated with RKHS, we then discuss it later in this section.

Second, if we consider other types beyond the \emph{target kernel}, e.g., a pre-given kernel matrix $\bm K$, the above kernel learning process is transformed to tackle the out-of-sample extensions problem \citep{Bengio2004Out,Pan2016Out}, i.e., learning an underlying/unknown kernel $k$ from a pre-defined or manually specified kernel/similarity matrix $\bm K$. In fact, the above kernel learning process by the \emph{target kernel} can be also regarded as a special case of this framework due to the \emph{target kernel} only defined on the training data. 
The out-of-sample extension topic widely exists in many research areas, such as 1) nonparametric kernel learning~\citep{Lu2009Geometry,liu2020learning}: the kernel learning scheme is in a data specific manner, i.e., obtains the ``similarity values" instead of learning a similarity function.
2) metric learning~\citep{Kulis2013Metric,Jain2017Learning}: it often learns a Mahalanobis-like matrix from the given training data, but is infeasible to new data.
3) nonlinear manifold learning~\citep{Hong2013Anml,Xie2013Onn}: the low-dimensional data coordinates are computed only for the initially available training data and can not be extended to the test data in a straightforward way.
In these cases, since the learned Mahalanobis-like matrix can be regarded as a kernel matrix, and the nonlinear mapping in manifold learning can be represented by a kernel, here we investigate them in a unified framework.
Hence, we aim to tackle the following question:
\centerline{\emph{How to learn an underlying kernel/similarity function from the pre-given data-specific matrix?}}

  \begin{figure}[t]
        \centering
          \begin{tikzpicture}[scale=0.82]
          \colorlet{blueg}{blue!70} \colorlet{redg}{red!70}
      \path (0,2) node[circle,draw,fill=lightgray] (x1) {\large{\color{red}$f^*$}}
      (9.5,2) node[circle,draw,fill=lightgray](y1) {\Large{\color{blue}$\bm k^*$}};
            %\draw[<-,orange,ultra thick] (x1) -- (y1);
 \draw[line width=2pt,redg,latex-] (x1) -- node[auto] {} (y1);
      \path (17.5,2)node[circle,draw,fill=lightgray] (y2){$\tilde{\bm y}{\tilde{\bm y}}^{\!\top}$};
      \draw[line width=2pt,blueg,latex-] (y1) -- node[auto] {} (y2);
      \path (9,4) node(a){~};  \path (11,4) node(b){~};
      \draw[line width=2pt,redg,latex-] (a) -- node[auto] {} (b);
       \path (14.4,4) node(c){{\color{red}stage 2: learning in RKHS or RKKS}};
             \path (9,4.5) node(a1){~};  \path (11,4.5) node(b1){~};
      %\draw[<-,blue,ultra thick] (a1) -- (b1);
      \draw[line width=2pt, blueg,latex-] (a1) -- node[auto] {} (b1);
       \path (14,4.5) node(c1){{\color{blue}stage 1: learning in hyper-RKHS}};
   \path (5,2.4) node(c1){{\color{red}classifier: SVM in RKHS or RKKS}};
   %\path (5,2.4) node(c2){{\color{red}regressor: KRR in RKHS or RKKS}};
   \path (13.4,2.5) node(c3){{\color{blue}SVR or KRR in hyper-RKHS}};
   \path (13.7,1.2) node(q1){$k^* \!:=\! \argmin\limits_{k \in \underline{\mathcal{H}}}  \mathcal{T}\big(k, \tilde{\bm y}{\tilde{\bm y}}^{\!\top} \!\big) \!+\! \lambda \langle k, k \rangle_{\underline{\mathcal{H}}}$};
    \path (5.2,1.2) node(q1){$f^* := \argmin\limits_{{ f} \in \mathcal{H}(k^*)}  \mathbb{E}_{\bm x, \tilde{y}} \big[\ell (f(\bm x), \tilde{y}) \big] + \lambda \| f \|_{\mathcal{H}(k^*)}^2$};
      \end{tikzpicture}
        \caption{The two-stage kernel learning framework endowed by hyper-kernels with stage 1 (in blue): learning the kernel $k$ by minimizing the quality functional $\mathcal{T}$ between $k$ and the target kernel $\tilde{\bm y} \tilde{\bm y}^{\!\top}$ in hyper-RKHS and stage 2 (in red): learning the hypothesis $f$ in RKHS or RKKS for classification.}
\label{framework}
\vspace{-0.2cm}
\end{figure}

Third, the learned kernel in stage 1 is not limited to be positive definite since hyper-RKHS has the capability of generating an indefinite kernel that is associated with a reproducing kernel Kre\u{\i}n space (RKKS) \citep{Cheng2004Learning,bognar1974indefinite} instead of RKHS. 
This operation is reasonable since we can hardly predict whether the underlying kernel is positive definite or indefinite even if the pre-defined $\bm K$ or the \emph{target kernel} is PSD in the above two situations.
Additionally imposing the positive definiteness on the learned kernel would exclude indefinite kernel learning \citep{Ga2016Learning,Schleif2015Indefinite}.
In practice, indefinite kernel learning is ubiquitous in many real-world applications, e.g., the hyperbolic tangent kernel \citep{smola2001regularization} and the ``dot-product attention” in Transformers \citep{wright2021transformers}. Besides, some PD kernels would degenerate to indefinite ones, e.g., a linear combination of PD kernels (with negative coefficient) \citep{Cheng2005Learning}, dot-product kernels by $\ell_2$ normalization \citep{pennington2015spherical,liu2020fast}, and Gaussian kernels with some geodesic distances \citep{Feragen2015Geodesic}.
Regarding to descriptions about RKKS, and justification, model formulation, optimization for indefinite kernel based algorithms, see \citep{Schleif2015Indefinite,oglic18a,liu2020analysis} among others.
Accordingly, Firgure~\ref{framework} include indefinite kernel learning to cover various requirements for a general kernel learning framework endowed by hyper-RKHS.

Now that learning in hyper-RKHS is adopted for numerous research fields, there is a key question left unanswered in the theoretical aspect.
The convergence behavior of learning algorithms in $\underline{\mathcal{H}}$ has not been fully investigated in learning theory.
In this paper, we generalize two regularized regression problems in hyper-RKHS, illustrates its utility for kernel learning and out-of-sample extensions, and proves asymptotic convergence results for the introduced regression models in an approximation theory. In particular, we make the following contributions:

Algorithmically, in Section~\ref{sec:hyper}, motivated by \cite{Cheng2005Learning}, we consider regularized regression problems with squared loss and $\varepsilon$-insensitive loss (i.e., KRR and SVR) in hpyer-RKHS for kernel learning and out-of-sample extensions.
Specifically, the developed models are general to output PD or indefinite kernels, which allows for significant model flexibility and universality.
To make our kernel learning framework applicable to large scale situations, we combine the divide-and-conquer scheme with Nystr\"{o}m approximation for further improvement on computational efficiency.
%The effectiveness of the scalable approach is demonstrated by theoretical guarantees and experimental verification.

Theoretically, our main results on generalization properties of KRR and SVR in hyper-RKHS are presented in Section~\ref{sec:learningrate}, and the proofs are given in Section~\ref{sec:prooframes}.
Since learning in hyper-RKHS involves pairs of samples, which is no longer mutual pairwise independent \citep{Luby2005Pairwise}, the standard approximation analysis for RKHS \citep{cucker2007learning,Suzuki2012Fast,Mendelson2010REGULARIZATION} in learning theory cannot be directly applied to hyper-RKHS. 
This work addresses this issue, provides the asymptotic analysis of regularized regression problems in hyper-RKHS, and fills a theoretical gap.

Experimentally, in Section~\ref{experiment}\,, we present numerical results on several benchmark datasets to verify the effectiveness of our two-stage kernel learning framework.
For stage 1, we observe that our regression methods in hyper-RKHS accurately fits the given kernel matrix including PSD and non-PSD ones with small approximation errors.
For stage 2, the learned kernel incorporated into SVM performs well in terms of classification accuracy whatever the pre-given kernel matrix is.
Further, the developed kernel scalability method reduces the complexity of our kernel learning algorithms by orders of magnitude.
Finally we discuss the related work close to our framework in Section~\ref{sec:diss} and draw the conclusion in Section~\ref{sec:con}\,.

\section{Learning in hyper-RKHS}
\label{sec:hyper}
In this section, we formulate the regression problem in hyper-RKHS as a regularized risk minimization problem, and then devise two regression algorithms associated with hyper-RKHS.

\subsection{Regularized Risk Minimization in hyper-RKHS}
The elements in hyper-RKHS are kernel functions, and thus the associated reproducing kernel is called the hyper-kernel (kernel of kernel), termed as $\underline{k}$.
The definition of this space and its associated (reproducing) hyper-kernel is presented as follows.

\begin{definition}\label{definitehyperrkhs}
[hyper-RKHS and its (reproducing) hyper-kernel \citep{Cheng2005Learning}] Let $X$ be a compact metric space, $\underline{{X}}={X} \times {X}$ and $\underline{\mathcal{H}}$ denotes a Hilbert space of functions $k:~\underline{{X}} \rightarrow \mathbb{R}$.
Then for any $\underline{\bm{x}}, \underline{\bm{x}}' \in \underline{{X}}$, the inner product space $\underline{\mathcal{H}}$ is called a hyper-RKHS endowed with the dot product $\langle \cdot, \cdot \rangle_\mathcal{\underline{H}}$ (and the norm $\| k \|_{\mathcal{\underline{H}}} = \sqrt{\langle k, k \rangle_\mathcal{\underline{H}}}$) if there exists a hyper-kernel $\underline{k}: \underline{{X}} \times \underline{{X}} \rightarrow \mathbb{R}$ with the following properties:
\begin{itemize}
	  \item (reproducing) $\underline{k}$ has the reproducing property $\langle k, \underline{k}(\underline{\bm{x}},\cdot) \rangle_{\underline{\mathcal{H}}} = k(\bm{\underline{x}})$ for all $k \in \underline{\mathcal{H}}$; in particular, we have $\langle \underline{k}(\underline{\bm{x}},\cdot), \underline{k}(\underline{\bm{x}}',\cdot) \rangle_{\underline{\mathcal{H}}} = \underline{k}(\underline{\bm{x}},\underline{\bm{x}}')$.
	\item (symmetric) $\underline{k}((\bm{x},\bm{y}),(\bm{r},\bm{s})) = \underline{k}((\bm{y},\bm{x}),(\bm{r},\bm{s})) $ for all $\bm{x}, \bm{y}, \bm{r}, \bm{s} \in {X}$. Further, for any fixed $\bm{\underline{x}} \in \underline{X}$, the hyper kernel $\underline{k}$ is a kernel in its second argument, i.e., $k(\bm{x},\bm{x}') := \underline{k}(\bm{\underline{x}}, (\bm{x},\bm{x}'))$ with $\bm{x}, \bm{x}' \in {X}$.
	\item (positive definite) $\underline{k}$ is positive definite on $\underline{X}$ and $\underline{k}(\bm{\underline{x}},\cdot)$ is positive definite on $X$ for any $\underline{\bm{x}} \in \underline{{X}}$.
  \item $\underline{k}$ spans $\mathcal{\underline{H}}$, i.e., $\mathcal{\underline{H}}=\overline{\text{span}\{ \underline{k}(\bm{\underline{x}},\cdot)| \bm{\underline{x}} \in \mathcal{\underline{X}}\}}$.
\end{itemize}
\end{definition}
Here we can view the hyper-kernel as a function of four arguments, $\underline{k}((\bm x_1, \bm x_2), (\bm x'_1, \bm x'_2))$ or a function of two pairs, $\underline{k}(\underline{\bm x}, \underline{\bm x'})$ with $\underline{\bm x} := (\bm x_1, \bm x_2)$ and $\underline{\bm x'} := (\bm x'_1, \bm x'_2))$.
The reproducing and symmetric property ensures $\underline{k}$ to be a kernel, and $\underline{k}(\underline{\bm x}, (\bm x, \bm x'))$ is also a kernel for any fixed pair $\underline{\bm x}$.
Besides, $\underline{k}$ should be positive definite so as to induce a hyper-RKHS (a special case of RKHS) based on Definition~\ref{definitehyperrkhs}.
Denote $C({\underline{X}})$ as the space of continuous functions on ${\underline{X}}$ with the norm $\| f \|_{\infty}:=\sup_{\underline{\bm x} \in \underline{X}}|f(\underline{\bm x})|$ for $f\in C({\underline{X}})$. Due to the continuity of the kernel function $\underline{k}$ and compactness of ${\underline{X}}$, we have
\begin{equation*}
  \mathcal{G} := \sup_{\underline{\bm x} \in \underline{X}}  \sqrt{\underline{k}( \underline{\bm x},\underline{\bm x})} < \infty \,.
\end{equation*}
Hence the reproducing property in hyper-RKHS indicates that $\underline{\mathcal{H}}\subset C({\underline{X}})$ and
\begin{equation}\label{fnormdiff}
  \|k\|_{\infty} = \sup_{\underline{\bm x} \in \underline{X}} \big| \langle k, \underline{k}(\underline{\bm x},\cdot) \rangle_{\underline{\mathcal{H}}} \big| \leq \mathcal{G}  \langle k,k\rangle_{\mathcal{\underline{H}}}  = \mathcal{G} \| k\|_{\underline{\mathcal{H}}}^2, \quad \forall k \in \mathcal{\underline{H}}\,.
\end{equation}

We can see that $\mathcal{\underline{H}}$ is different from a normal RKHS $\mathcal{{H}}$ on the particular form of its index set ${\underline{X}}$ and the additional condition on the hyper-kernel $\underline{k}$ to be symmetric in its first two arguments, and thus in its second two arguments as well.
%It can be noticed that even if the hyper-kernel $\underline{k}$ is positive definite (PD), the kernel functions generated by hyper-RKHS can be PD or indefinite.
%In $\mathcal{\underline{H_K}}$, the actual kernel learning procedure involved in $k$ is very similar to conventional kernel methods, except that now regularized risk minimization is to be performed over all pairs of data points.
%Accordingly, the hyper-RKHS provides us a justification to learn a general kernel function $k$ from the given kernel matrix $\bm K$, by introducing a quality functional $\mathcal{T}(k, \bm K)$ which describes how well $k$ approximates $\bm K$ on $\underline{X}$.
Here we investigate the regularized regression problem in hyper-RKHS, which is formulated as
%\vspace{-0.2cm}
\begin{equation}\label{mainQ}
  \min \limits_{k \in \underline{\mathcal{H}}} ~ \frac{1}{m^2} \sum_{i,j=1}^{m} \mathcal{T}(Y_{ij}, k(\bm x_i, \bm x_j)) + \lambda \langle k, k \rangle_{\underline{\mathcal{H}}}\,,
\end{equation}
where the first term is the quality functional $\mathcal{T}(\bm Y, k)$ based on its point-wise definition and the regularization parameter $\lambda := \lambda(m) > 0$ satisfies $\lim_{m \rightarrow \infty} \lambda(m) = 0$.
The response variable is $\bm Y$.
In kernel learning via target alignment, $\bm Y$ is chosen as the target kernel (matrix), i.e., $\bm Y = \tilde{\bm y} \tilde{\bm y}^{\!\top}$.
For the out-of-sample extensions issue, $\bm Y$ is chosen as a pre-given kernel/similarity matrix $\bm K$.
The quality functional $\mathcal{T}(\bm Y, k)$ focuses on the approximation ability of a kernel function $k$ to the given $\bm Y$.
In regression problem, it should satisfy $\mathcal{T}(Y_{ij}, k(\bm x_i, \bm x_j))=\ell(Y_{ij}-k(\bm x_i, \bm x_j))$, where the loss function $\ell(\cdot)$ can be chosen as the squared loss in least-squares, the $\varepsilon$-insensitive loss function in SVR, and so on.
Using the representer theorem in hyper-RKHS \citep{Cheng2005Learning}, the minimizer $k^* \in \underline{\mathcal{H}}$ of problem~\eqref{mainQ} admits
%\begin{eqnarray}
%\nonumber % Remove numbering (before each equation)
% & k_{\rho}(\bm x, \bm x') \!=\! \sum_{i=1}^m \sum_{j=1}^{m} \bm \beta_{ij} \underline{k}\big( (\bm x_i, \bm x_j), (\bm x, \bm x') \big) \\
% \label{rkrkks}
% & ~\mbox{with}~\bm x, \bm x'\! \in \!\! X, \bm \beta_{ij} \in \mathbb{R}
%\end{eqnarray}
\begin{equation}\label{rkrkks}
\begin{split}
   k^*(\bm x, \bm x') = & \sum_{i=1}^m \sum_{j=1}^{m} \beta_{ij} \underline{k}\big( (\bm x_i, \bm x_j), (\bm x, \bm x') \big) \quad
  \mbox{with}~\bm x, \bm x' \in X,~ \beta_{ij} \in \mathbb{R}\,,
  \end{split}
\end{equation}
where $\bm \beta$ is the expansion coefficient matrix.
In our formulation, $k^*$ can be a general kernel, \emph{i.e.}, PD or indefinite.
To be exact, in hyper-RKHS, the hyper-kernel $\underline{k}$ is positive definite, but the coefficient $\beta_{ij}$ in the above formulation might be negative, which results in an indefinite kernel endowed by RKKS.
Therefore, as we expect, the learned solution $k^*$ can be a positive definite kernel or an indefinite one.
Such a general framework in hyper-RKHS provides strong adaptivity in kernel learning.

\subsection{Regression Models in hyper-RKHS}
Here we consider two regression algorithms including KRR and SVR in hyper-RKHS.
By choosing the squared loss, the least-squares regression algorithm in hyper-RKHS is
\begin{equation}\label{krr}
  \min_{k \in \underline{\mathcal{H}}}~ \frac{1}{m^2} \sum_{i,j=1}^{m} \Big( k(\bm x_i, \bm x_j) - Y_{ij} \Big)^2 + \lambda \langle k, k \rangle_{\underline{\mathcal{H}}}\,,
\end{equation}
where $\lambda$ seeks for a tradeoff between the complexity of $k$ and the fitting ability in regression.
Compared to the conventional kernel ridge regression problem, our formulation in Eq.~\eqref{krr} is in a bivariate form because we optimize over the kernel function.

Using the representer theorem in hyper-RKHS, problem~\eqref{krr} can be reformulated as
\begin{equation}\label{krreq}
  \min_{\bm \beta} \frac{1}{m^2} \big\| \underline{\bm K} \mathop{\mathrm{vec}}({\bm \beta}) - \mathop{\mathrm{vec}}(\bm Y) \big\|^2_2 + \lambda \mathop{\mathrm{vec}}({\bm \beta})^{\top} \underline{\bm K} \mathop{\mathrm{vec}}({\bm \beta})\,,
\end{equation}
with the coefficient vector $\mathop{\mathrm{vec}}({\bm \beta}) \in \mathbb{R}^{m^2}$.
The hyper-kernel matrix is $\bm{\underline{K}} \in \mathbb{R}^{m^2 \times m^2}$ with entries $\bm{\underline K}_{d(i,j,m)d(r,s,m)} = \underline{k}\big( (\bm{x}_i, \bm{x}_j), (\bm{x}_r, \bm{x}_s) \big)$, in which the function $d(i,j,m)=m(i-1)+j$ maps the pair $(i,j)$ to the row or column index of $\bm{\underline K}$.
The hyper-kernel matrix $\underline{\bm K}$ is PSD when we choose a positive definite hyper-kernel $k$.
This model is studied in \citep{Pan2016Out} by additionally adding the non-negative constraint on $\bm \beta$ so as to output a PD kernel.
Comparably, the expansion coefficient $\beta_{ij}$ is not constrained to be nonnegative, which breaks through the restriction of the nonnegative constraint on the expansion coefficients in the representer theorem \eqref{rkrkks} in hyper-RKHS and thus is able to yield an indefinite kernel $k$.
Accordingly, the solution to problem~\eqref{krreq} can be directly given by
\begin{equation}\label{krrsolu}
  \mathop{\mathrm{vec}} (\bm \beta) = \Big(\underline{\bm K} + \lambda m^2 {\bm I}\Big)^{-1} \mathop{\mathrm{vec}}(\bm Y) \,.
\end{equation}
It can be noticed that, solving this model would be time-consuming due to the hyper-kernel matrix $ \underline{\bm K} \in \mathbb{R}^{m^2 \times m^2}$.
In Section~\ref{sec:large}, we will consider its scalability in large scale datasets by the combination of distributed learning and Nystr\"{o}m approximation.

Apart from exploiting the squared loss in hyper-RKHS, we study the $\varepsilon$-insensitive loss for bivariate-support vector regression as a quality functional for regression, namely
\begin{equation}\label{svrprimal}
  \begin{split}
&\mathop{\mathrm{min}}\limits_{k \in \underline{\mathcal{H}}, b, \hat{\bm \xi}, \check{\bm \xi} }~~ \frac{1}{2} \langle k, k \rangle_{\underline{\mathcal{H}}} + C \sum_{i,j=1}^{m} ( \hat{\xi}_{ij} + \check{\xi}_{ij} ) \\
&s.t.~k(\bm{x}_i, \bm{x}_j) +b - Y_{ij} \leq \varepsilon + \hat{\xi}_{ij} \\
&~~~~~Y_{ij} - k(\bm{x}_i, \bm{x}_j) - b \leq \varepsilon + \check{\xi}_{ij} \\
&~~~~~ \hat{\xi}_{ij}, ~\check{\xi}_{ij} \geq 0 \quad \forall i,j = 1,2,\cdots,m\,,
\end{split}
\end{equation}
where $b$ is a bias term, $C$ is a tradeoff between the fitting ability and the smoothness of the learned $k$.
The notations $\hat{\bm \xi}, \check{\bm \xi}$ are two slack variables associated with the quality functional $\mathcal{T}\mathcal(\bm Y,k)$.
Analogous to the derived KRR in hyper-RKHS, our SVR formulation is also in a bivariate form.
By the representer theorem in hyper-RKHS, the dual form of problem~\eqref{svrprimal} is formulated as
%\end{eqnarray}
%\newcounter{mytempeqncnt}
%\begin{figure*}[!t]
% \begin{equation}\label{hrkkssvrdual}
% \begin{split}
% &\mathop{\mathrm{max}}\limits_{\hat{\bm \beta}, \check{\bm \beta} } -\frac{1}{2} \sum_{i,j=1}^{m} \sum_{r,s=1}^{m} (\hat{\bm \beta}_{ij} \!- \! \check{\bm \beta}_{ij}) (\hat{\bm \beta}_{rs} \!-\! \check{\bm \beta}_{rs}) \underline{k}\big( (\bm{x}_i, \bm{x}_j), (\bm{x}_r, \bm{x}_s) \big) \!+\!\!\! \sum_{i,j=1}^{m}\! \!\Big\{ \! Y_{ij}(\hat{\bm \beta}_{ij} \!-\! \check{\bm \beta}_{ij})\! -\! \varepsilon(\hat{\bm \beta}_{ij} \! +\! \check{\bm \beta}_{ij})\! \Big\}\\
%&s.t.~0 \leq \hat{\bm \beta}_{ij}, \check{\bm \beta}_{ij} \leq C, ~\sum_{i,j=1}^{m} (\hat{\bm \beta}_{ij} - \check{\bm \beta}_{ij}) = 0\,,
% \end{split}
%\end{equation}
% \vspace*{-20pt}
%\end{figure*}
 \begin{equation*}\label{hrkkssvrdual}
 \begin{split}
 &\mathop{\mathrm{max}}\limits_{\hat{\bm \beta}, \check{\bm \beta} } -\frac{1}{2} \sum_{i,j,r,s=1}^{m} \!\!\!\! (\hat{\beta}_{ij} \!- \! \check{ \beta}_{ij}) (\hat{\beta}_{rs} \!-\! \check{\beta}_{rs}) \underline{k}\big( (\bm{x}_i, \bm{x}_j), (\bm{x}_r, \bm{x}_s) \big) +\!\!\! \sum_{i,j=1}^{m}\! \!\Big\{ \! Y_{ij}(\hat{ \beta}_{ij} \!-\! \check{ \beta}_{ij})\! -\! \varepsilon(\hat{\beta}_{ij} \! +\! \check{ \beta}_{ij})\! \Big\}\\
&s.t.~0 \leq \hat{ \beta}_{ij}, \check{\beta}_{ij} \leq C, ~\sum_{i,j=1}^{m} (\hat{\beta}_{ij} - \check{ \beta}_{ij}) = 0\,,
 \end{split}
\end{equation*}
with the expansion coefficient $\beta_{ij}=\hat{\beta}_{ij} - \check{ \beta}_{ij}$.
We can see that the expansion coefficients $\beta_{ij} \in [-C,C]$ may be negative, which has the capability of resulting in an indefinite kernel $k$ even if we choose a positive definite hyper-kernel $\underline{k}$.
Further, the above equation can be rewritten in a compact form
\begin{equation}\label{ualmat}
  \begin{split}
&\max \limits_{\underline{\hat{\bm \beta}}, \underline{\check{\bm \beta}}} ~-\!\frac{1}{2}(\underline{\hat{\bm \beta}} \!-\! \underline{\check{\bm \beta}})^{\!\top}\! \bm{\underline{K} }(\underline{\hat{\bm \beta}} \!-\! \underline{\check{\bm \beta}}) \! +\! (\underline{\hat{\bm \beta}} \!-\! \underline{\check{\bm \beta}})\mbox{vec}(\bm Y) \!\!-\! \varepsilon (\underline{\hat{\bm \beta}} \!+\! \underline{\check{\bm \beta}})^{\!\top}\!\bm{1}\\
&s.t.~0 \leq \underline{\hat{\bm \beta}}, \underline{\check{\bm \beta}} \leq C, ~(\underline{\hat{\bm \beta}} - \underline{\check{\bm \beta}})^{\top}{\bm{1}} = 0\,,
\end{split}
\end{equation}
where $\underline{\hat{\bm \beta}} = \mathop{\mathrm{vec}}(\hat{\bm \beta}) \in \mathbb{R}^{m^2}$, $\underline{\check{\bm \beta}} = \mathop{\mathrm{vec}}(\check{\bm \beta}) \in \mathbb{R}^{m^2}$, and $\bm 1$ is an all-one vector.
One can see that the derived SVR model in hyper-RKHS shares the similar formulation with that in RKHS, and can be also solved by the SMO algorithm \citep{Platt1999Fast}.
%In algorithmic aspect, we accordingly propose two regression models, \emph{i.e.}, KRR and SVR in hyper-RKHS to learn the underlying kernel.
%Such two algorithms are able to yield a general kernel including PD and non-PD one.

\subsection{Kernel Approximation in Large Scale Situations}
\label{sec:large}
Regarding to optimization algorithms for problems~\eqref{krreq} and \eqref{ualmat}, our regression models can be solved by standard optimization algorithms, i.e., the matrix inversion operator for KRR in Eq.~\eqref{krrsolu} and the SMO algorithm \citep{Platt1999Fast} for SVR.
While solving these algorithms are time-consuming due to the $m^2$ variables.
Precisely, KRR in hyper-RKHS takes $\mathcal{O}(m^6)$ time complexity and requires $\mathcal{O}(m^4)$ space to store the hyper-kernel matrix.
Thankfully, we do not need to simultaneously consider all pairs, though the hyper-kernel matrix is an $m^2 \times m^2$ matrix.
Here we develop a divide-and-conquer approach with Nystr\"{o}m approximation \citep{williams2001using,hsieh2014divide,yin2020divide,lin2020optimal} to speed up our method and reduce the required storage in large scale situations.
%In this section, we apply the proposed two regression models in hyper-RKHS to deal with the out-of-sample extensions issue, and then develop a decomposition-based scalable approach for our regression model with Nystr\"{o}m approximation in large scale situations.

We take KRR in hyper-RKHS as an example to illustrate our two kernel approximation schemes, i.e., dividing the training data into several partitions and conducting Nystr\"{o}m approximation on each subset. Such approximation strategy for SVR in hyper-RKHS works in the similar fashion with KRR, and each sub-problem can be efficiently solved by liblinear \citep{ho2012large}. To detail our scalable scheme, we begin with KRR in hyper-RKHS with Nystr\"{o}m approximation, and then present the divide-and-conquer strategy.
%In hyper-RKHS, kernel methods consider $\underline{\mathcal{H}}$ of functions in Eq.~\eqref{rkrkks}.
To scale KRR in hyper-RKHS to large sample situations, the Nystr\"{o}m scheme randomly selects a subset of $M$ (often $M \ll m$) training data $\{ \widetilde{\bm x}_1, \widetilde{\bm x}_2, \cdots, \widetilde{\bm x}_M \} \subset \{ \bm x_1, \bm x_2, \cdots, \bm x_m \}$, termed as landmarks or centers, to approximate the original hyper-kernel matrix. Here the used sampling strategy can be uniform or advanced ones, e.g., leverage scores based sampling \citep{alaoui2015fast}. The solution of KRR-Nystr\"{o}m in hyper-RKHS via the used pairs $\{ (\widetilde{\bm x}_i, \widetilde{\bm x}_j) \}_{i,j=1}^M$ is given by
\begin{equation*}
\begin{split}
&	\widetilde{k}_{M, \lambda} (\bm x, \bm x') = \sum_{i,j=1}^M \! \widetilde{\beta}_{ij} \underline{k} \big((\bm x, \bm x'), (\widetilde{\bm x}_i, \widetilde{\bm x}_j) \big) \\
& ~\mbox{with}~\mathop{\mathrm{vec}} (\widetilde{\bm \beta}) \!=\! \!\Big(\underline{\bm K}_{mM}^{\!\top}\underline{\bm K}_{mM} + \lambda m^2 \underline{\bm K}_{MM} \Big)^{-1} \underline{\bm K}_{mM}^{\!\top} \mathop{\mathrm{vec}}(\bm Y) \,, 
\end{split}
\end{equation*}
where $\underline{\bm K}_{mM}  \in \mathbb{R}^{m^2 \times M^2}$ is obtained from the whole hyper-kernel matrix $\underline{\bm K}$ across samples $\{ (\bm x_i, \bm x_j) \}_{i,j=1}^m$ and $\{ (\widetilde{\bm x}_i, \widetilde{\bm x}_j) \}_{i,j=1}^M$, and $\underline{\bm K}_{MM}  \in \mathbb{R}^{M^2 \times M^2}$ is constructed by $\underline{k}\big( (\widetilde{\bm x}_r, \widetilde{\bm x}_s), (\widetilde{\bm x}_i, \widetilde{\bm x}_j) \big)$ with $i,j,r,s \in \{ 1,2,\cdots, M \}$.
Accordingly, the original hyper-kernel matrix $\underline{\bm K}$ can be approximated by Nystr\"{o}m approximation
\begin{equation*}
	\underline{\bm K} \approx \underline{\bm K}_{mM} \underline{\bm K}_{MM}^{\dagger} \underline{\bm K}_{mM}^{\!\top} \,,
\end{equation*}
where $(\cdot)^{\dagger}$ denotes the pseudo-inverse.
By virtue of Nystr\"{o}m approximation, the time complexity is reduced from $\mathcal{O}(m^6)$ to $\mathcal{O}(m^2 M^4)$, and the space complexity is from $\mathcal{O}(m^4)$ to $\mathcal{O}(m^2M^2)$.

Further, the computational efficiency can be improved if we incorporate the divide-and-conquer scheme into our Nystr\"{o}m approximation framework.
We split the training data $\{ \bm x_i \}_{i=1}^m$ into $v$ disjoint subsets $\{ \mathcal{V}_1, \mathcal{V}_2,\dots, \mathcal{V}_v \}$, and assume that the sample size of each partition is the same for simplicity, i.e., $|\mathcal{V}_1| = |\mathcal{V}_2| = \cdots = |\mathcal{V}_v| = n$ such that $m = nv$.
Then the used divide-and-conquer framework generates the global solution as the average of local estimators
\begin{equation*}
\bar{k}_{M,\lambda}(\bm x, \bm x') = \frac{1}{v}\sum_{c=1}^v \widetilde{k}_{M, \mathcal{V}_c, \lambda}(\bm x, \bm x')\,,
\end{equation*}
where $\widetilde{k}_{M, \mathcal{V}_c, \lambda}(\bm x, \bm x')$ is the Nystr\"{o}m estimator  on $\mathcal{V}_c$ ($c=1,2,\dots,v$) satisfying
\begin{equation}\label{nysdckrr}
\begin{split}
& \widetilde{k}_{M, \mathcal{V}_c, \lambda}(\bm x, \bm x') = \sum_{i,j=1}^M \! \bar{\beta}_{ij} \underline{k} \big((\bm x, \bm x'), (\bm x_i, \bm x_j) \big)\\
& \mbox{with}~\mathop{\mathrm{vec}} (\bar{\bm \beta}) = \Big(\underline{\bm K}_{nM}^{\!\top}\underline{\bm K}_{nM} + \lambda n^2 \underline{\bm K}_{MM} \Big)^{-1} \underline{\bm K}_{nM}^{\!\top} \mathop{\mathrm{vec}}(\bm Y_{nn}) \,, 
\end{split}
\end{equation}
where the Nystr\"{o}m landmarks $\{ \widetilde{\bm x}_1, \widetilde{\bm x}_2, \cdots, \widetilde{\bm x}_M \}$ are from $\mathcal{V}_c$ satisfying $M \leq |\mathcal{V}_c| = n$.
The matrix $\underline{\bm K}_{nM}  \in \mathbb{R}^{n^2 \times M^2}$ is obtained from the sub-hyper-kernel matrix $\underline{\bm K}^{(c,c)} \in \mathbb{R}^{n^2 \times n^2}$ corresponding to the $c$-th partition $\mathcal{V}_c$. 
The matrix $\underline{\bm K}_{MM} \in \mathbb{R}^{M^2 \times M^2}$ corresponds to the subsampling data $\underline{\bm K}^{(c,c)}$ across  $\{ (\widetilde{\bm x}_i, \widetilde{\bm x}_j) \}_{i,j=1}^M$ from $\mathcal{V}_c$.
The matrix $\bm Y_{nn}$ derives from the response matrix $\bm Y$ on $\mathcal{V}_c$ with $n$ training data.
Under this setting, the time and space complexity are further reduced to $\mathcal{O}(m^2 M^4 /v)$ and $\mathcal{O}(m^2 M^2 /v)$, respectively. The detailed process of the approximation algorithm for KRR in hyper-RKHS is
summarized in Algorithm~\ref{agolargescale}.

\begin{algorithm}[t]
	\caption{Divide-and-conquer with Nystr\"{o}m approximation for KRR in hyper-RKHS}
	\label{agolargescale}
	\KwIn{Data points $\{\bm x_i \}_{i=1}^m$, the response matrix $\bm{Y}$, the (Gaussian/Wishart) hyper-kernel matrix $\underline{\bm{K}}$, and regularized paramter $\lambda$, the number of partitions $v$, and the number of Nystr\"{o}m centers $M \leq m/v$}
	\KwOut{the final estimator $\bar{k}_{M,\lambda}(\bm x, \bm x')$}
	randomly partition the data points into $v$ disjoint subsets $\{ \mathcal{V}_t \}_{t=1}^v$.\\
	//{\tt in parallel: handle $\mathcal{V}_t$ with a local processor}\\
	randomly select $M$ data points from $\mathcal{V}_t$ as Nystr\"{o}m landmarks $\{ \widetilde{\bm x}_1, \widetilde{\bm x}_2, \cdots, \widetilde{\bm x}_M \}$. \\
	Obtain the KRR-Nystr\"{o}m estimator $\widetilde{k}_{M, \mathcal{V}_c, \lambda}$ on the subset $\mathcal{V}_c$ by Eq.~\eqref{nysdckrr}. \\
	//{\tt end parallelism}\\
	computer the final estimator by averaging: $\bar{k}_{M,\lambda}(\bm x, \bm x') = \frac{1}{v}\sum_{c=1}^v \widetilde{k}_{M, \mathcal{V}_c, \lambda}(\bm x, \bm x')$.\\
\end{algorithm}

\subsection{The Used Hyper-kernels}
The remaining question with respect to our regression models is how to choose the hyper-kernel $\underline{k}$.
It can be noticed that numerous kernels, either PD or indefinite, can be flexibly learned in hyper-RKHS associated with a given hyper-kernel.
That means the learned kernel can be data-specific rather than manually designed. Specifically, the learning behavior is independent of the choices of hyper-kernel and the associated kernel parameters, but approximation performance on
specific data indeed relies on them.
Following \citep{kondor2007gaussian}, we adopt two hyper-kernels including the Gaussian hyper-kernel and Wishart hyper-kernel in this paper.
The Gaussian hyper-kernel is defined as
\begin{equation*}
\begin{split}
& \underline{k} \Big( (\bm x_1, \bm x_1'),  (\bm x_2, \bm x_2') \Big) \!=\! \langle \bm x_1, \bm x_1' \rangle_{{\sigma}^2} \langle \bm x_2, \bm x_2' \rangle_{{\sigma}^2}
\times \Big \langle \frac{\bm x_1 + \bm x_1'}{2}, \frac{\bm x_2 + \bm x_2'}{2} \Big \rangle_{{\sigma}^2+{\sigma}_h^2},~~ \forall \bm x_1, \bm x_1', \bm x_2, \bm x_2' \in X \,,
\end{split}
\end{equation*}
with the notation $\langle \bm x, \bm x' \rangle_{{\sigma}^2}=\frac{1}{(2\pi {\sigma}^2)^{d/2}}e^{-\| \bm x - \bm x'\|/(2{\sigma}^2)}$ as the Gaussian kernel, where $d$ is the feature dimension and ${\sigma}_h$ controls the relevance between the pairs.
We can see that this hyper-kernel not only considers the similarity between two points but also takes the similarity computed by the mean of two pairs into consideration, which is useful to enhance the representation ability of the learned kernels.
If we take the limit $\sigma_h \rightarrow \infty$, the Gaussian hyper-kernel decouples into the product of two Gaussian kernels.

Different from the Gaussian hyper-kernel that has a locally isotropic character, the Wishart hyper-kernel \citep{kondor2007gaussian} is an anisotropic one to hold for rescaling data structure, defined as 
	\begin{equation*}
	\begin{split}
	& \underline{k} \Big( (\bm x_1, \bm x_1'),  (\bm x_2, \bm x_2') \Big) \!=\! \int_{\bm \Sigma \succeq 0} \int_{X} \langle \bm x_1, \bm x \rangle_{{\bm \Sigma}}~ \langle \bm x, \bm x_1' \rangle_{{\bm \Sigma}}~ \langle \bm x_2, \bm x \rangle_{\bm \Sigma}~ \langle \bm x, \bm x_2' \rangle_{{\bm \Sigma}}~ \mathcal{IW}(\bm \Sigma;\bm D,b) \mbox{d} \bm x \mbox{d} \bm \Sigma \,,
	\end{split}
	\end{equation*}
	where
	\begin{equation*}
	\langle \bm x_1, \bm x \rangle_{{\bm \Sigma}} = \frac{1}{(2\pi)^{m/2} | \bm \Sigma |^{1/2} } \exp \Big( -\frac{1}{2}(\bm x - \bm x')^{\!\top} \bm \Sigma^{-1} (\bm x - \bm x') \Big) \,,
	\end{equation*}
	and $\mathcal{IW}(\bm \Sigma;\bm D,b)$ is the inverse Wishart distribution with the parameter matrix $\bm D \in \mathbb{R}^{m \times m}$ and an integer parameter $b$.
	The notion $\bm \Sigma \succeq 0$ means PSD matrices.
	The Wishart hyper-kernel can be regarded as the anisotropic version of the Gaussian hyper-kernel by taking $\sigma_h \rightarrow \infty$, and its second argument can be also linked to Bhattacharyya kernel \citep{kondor2003kernel} for any fixed pair.

Based on above descriptions, we formulate two regression algorithms in hyper-RKHS to output PD or indefinite kernels, which is demonstrated by stage 1 in Figure~\ref{framework}.
Then the following classification task to learn the hypothesis $f$ in stage 2 can be achieved by kernel machines, e.g., SVM used in this paper.
Note that, if the learned kernel is indefinite, the SVM solver is still valid, but outputs a stationary point instead of the optimal minimum.
In fact, we can also choose some advanced algorithms in RKKS, e.g., \citep{Ga2016Learning,oglic18a}, as alternative ways for learning in RKKS.

\section{Generalization Properties of Learning in Hyper-RKHS}
\label{sec:learningrate}

In this section, we study the convergence analysis of learning problems in hyper-RKHS with squared loss and $\varepsilon$-insensitive loss.
Although approximation analysis of classical regression algorithms including least-squares regularized regression \citep{Wu2006Learning,caponnetto2007optimal,dieuleveut2017harder}, support vector regression \citep{Xiang2012Approximation}, quantile regression \citep{Shi2014Quantile} in RKHS are provided, the generalization properties (in an approximation theory view) of regression problems in hyper-RKHS have not yet been fully investigated.

\subsection{Problem Settings and Notations}
%Recall some notations and definitions presented above.
In the context of statistical learning theory, to investigate a general regularized regression problem in hyper-RKHS, the learned regression function, also the kernel function $k$ is defined on a compact metric space $ {X} \times {X} $ denoted by ${\underline{X}}$. The hyper-kernel $\underline{k}: \underline{{X}} \times \underline{{X}} \rightarrow \mathbb{R}$ is a continuous, symmetric, positive definite function. Then
the associated hyper-RKHS $\underline{\mathcal{H}}$ in Definition~\ref{definitehyperrkhs} is the completion of the linear span of the set of function $\{ \underline{k}(\underline{\bm x},\cdot) : \underline{\bm x} \in \underline{X} \}$ with the inner product  $\langle \cdot, \cdot \rangle_\mathcal{\underline{H}}$.

Let $\rho$ be a non-degenerate Borel probability measure on $Z = {\underline{X}} \times Y = {X} \times {X} \times Y$ which can be factorized as
\begin{equation*}
  \rho(\underline{{\bm x}},y) = \rho(\bm x, \bm x', y) = \rho_{{X}}(\bm x) \rho_{{X}}(\bm x') \rho(y|(\bm x, \bm x'))\,,
\end{equation*}
where $\rho_{{X}}$ is a probability measure on ${X}$ and $\rho(y|(\bm x, \bm x'))$ is the conditional distribution on $Y$ given $(\bm x, \bm x') \in \underline{{X}}$.
The \emph{target function} (also a kernel function) of $\rho$ is defined by
\begin{equation*}
 k_{\rho}(\bm x, \bm x') = \int_Y y \mbox{d} \rho\big(y|(\bm x, \bm x')\big)\,, \quad \bm x, \bm x' \in X\,.
\end{equation*}
The target of regression problem in hyper-RKHS is to find a good approximation of $k_{\rho}$ from the pairwise sample set $Z=\{\bm z_{ij}\}_{i,j=1}^{m} = \{ (\bm x_i, \bm x_j, y_{ij}) \}_{i,j=1}^{m}$, where $\{ \bm x_i\}_{i=1}^m$ are sampled independently according to $\rho_{{X}}$ and $y_{ij}$ is drawn from the conditional distribution $\rho(y|(\bm x, \bm x'))$. Note that these $m^2$ pairwise samples $\{ (\bm x_i, \bm x_j, y_{ij}) \}_{i,j=1}^{m}$ are not mutual pairwise independent \citep{Luby2005Pairwise}.
Actually, for $i \neq j$, $\bm z_{ij}$ is drawn according to $\rho$, while $\bm z_{ii}$ is distributed according to $\rho'({{\bm x}},y) = \rho_{{X}}(\bm x) \rho(y|(\bm x, \bm x))$.

The target function $k_{\rho}$ is estimated by minimizing the expected risk
\begin{equation*}
\begin{split}
 &  \mathcal{E}(k) := \mathcal{E}_{\rho}(k) = \int_{\underline{{X}} \times Y} \mathcal{T}\big(y, k(\underline{\bm x})\big) \mbox{d} \rho
 =\!\int_{{X}} \! \int_{{X}} \! \int_{Y}\! \mathcal{T}\big(k(\bm x, \bm x'), y\big) \mbox{d} \rho(y|(\bm x, \bm x'))  \mbox{d} \rho_{{X}}(\bm x)  \mbox{d} \rho_{{X}}(\bm x')\,.
  \end{split}
\end{equation*}
We additionally suppose that there exits a constant $M^* \geq 1$, such that
\begin{equation*}\label{assumption1}
|k_{\rho}(\bm x, \bm x')|\leq M^* \mbox{ for all $\bm x, \bm x' \in{X}$}\,.
\end{equation*}

For the squared loss, we have $\mathcal{T}(y, k({\bm x},\bm x')) = \big( y - k({\bm x},\bm x') \big)^2 $, and thus the empirical risk functional is defined as
\begin{equation*}
  \mathcal{E}_{\bm z}(k) = \frac{1}{m^2} \sum_{i,j=1}^{m} \mathcal{T}(y_{ij}, k({\bm x_i},\bm x_j))  = \frac{1}{m^2} \sum_{i,j=1}^{m} \Big(k(\bm x_i, \bm x_j) - y_{ij} \Big)^2\,.
\end{equation*}
Hence, given the sample set $Z$, KRR in hyper-RKHS aims at finding a kernel function $k$ : ${X} \times {X} \rightarrow \mathbb{R}$ such that $k_{\bm{z}}({\bm x}, \bm x')$ is a good estimate of $y$ for a new pair input $({\bm x}, \bm x')$.
To be specific, the learning algorithm generated by regularized least squares in hyper-RKHS takes the form
\begin{equation}\label{fzrsls}
  k_{\bm{z},\lambda} := \argmin_{k \in \underline{\mathcal{H}}} \bigg\{ \frac{1}{m^2} \sum_{i,j=1}^{m} \!\! \Big(k(\bm x_i, \bm x_j) - y_{ij} \Big)^2 + \lambda\langle k,k\rangle_{\mathcal{\underline{H}}} \bigg\}.
\end{equation}

For SVR in hyper-RKHS, it is a little sophisticated due to the insensitive parameter $\varepsilon$.
Here we consider SVR with $\mathcal{T}(y, k({\bm x},\bm x')) = | y - k({\bm x},\bm x')| $, and then introduce $\varepsilon$-insensitive loss in SVR.
For any $\bm x, \bm x' \in X$, the target kernel function $k_{\rho}$ is defined by its value $k_{\rho}({\bm x}, \bm x')$ to be a median function of $\rho(\cdot|(\bm x, \bm x'))$, that is
\begin{equation*}
\left\{
\begin{array}{rcl}
\begin{split}
  &\rho \Big(\big\{ y \in Y: y \leq k_{\rho}({\bm x},\bm x') \big\} \big| (\bm x, \bm x') \Big) \geq \frac{1}{2}\,,\\
    & \rho \Big(\big\{ y \in Y: y \geq k_{\rho}({\bm x},\bm x') \big\} \big| (\bm x, \bm x') \Big) \geq \frac{1}{2}\,.
\end{split}
\end{array} \right.
\end{equation*}

In order to obtain a sparse solution, we introduce the $\varepsilon$-insensitive loss function in SVR
\begin{equation*}
\mathcal{T}^{\varepsilon}\big(y, k(\bm x, \bm x')\big) \!=\! | y - k(\bm x, \bm x')|_{\varepsilon} \!=\! \left\{
\begin{array}{rcl}
\begin{split}
& 0,  ~\text{if}~ | y - k(\bm x, \bm x')| < \varepsilon; \\
& | y - k(\bm x, \bm x')| - \varepsilon, ~\text{otherwise}\,, \\
\end{split}
\end{array} \right.
\end{equation*}
where the insensitivity parameter $\varepsilon$ aims at balancing the approximation and sparsity
of the algorithm and thus should change with the sample size $m$ satisfying $ \lim_{m \rightarrow \infty} \varepsilon(m) = 0$.
Given the sample set $Z$, SVR in hyper-RKHS with the $\varepsilon$-insensitive loss takes the form
\begin{equation}\label{fzrs}
  k^{(\varepsilon)}_{\bm{z},\lambda}\! :=\! \argmin_{k \in \underline{\mathcal{H}}} \bigg\{ \! \frac{1}{m^2} \! \sum_{i,j=1}^{m} \!\! \mathcal{T}^{\varepsilon}\big(y_{ij}, k(\bm x_i,\bm x_j)\big) \!+\! \lambda\langle k,k\rangle_{\mathcal{\underline{H}}} \bigg\}.
\end{equation}

%In large scale situation, we combine divide-and-conquer scheme (only once communication in distributed learning) with Nystr\"{o}m approximation. Here the global solution is the average of local estimators.
%In the original KRR in hyper-RKHS, $k_{\bm{z},\lambda}$ in Eq.~\eqref{fzrsls} admits the following formulation by the representer theorem

\subsection{Definitions and Assumptions}
To illustrate the convergence analysis, we need the following definitions and assumptions.
Note that all of the presented assumptions in hyper-RKHS here are defined on pairs but can be analogous to that in RKHS, and are hence standard and fair in approximation analysis.

We first state the definition of \emph{projection operator} introduced in \citep{Chen2004Support}.
\begin{definition}\label{proj}
(projection operator)
For $B > 0$, the projection operator $\pi = \pi_{B}$ is defined on the space of measurable functions $k: X \times X \rightarrow \mathbb{R}$ as
\begin{equation*}\label{BBPdef}
\pi_B(k)(\bm x, \bm x')= \left\{
\begin{array}{rcl}
\begin{split}
& B,  ~~\text{if}~~ k(\bm x, \bm x') > B ; \\
&-B, ~~\text{if}~~k(\bm x, \bm x') < -B ; \\
& k(\bm x, \bm x'), ~~\text{if}~~-B \leq k(\bm x, \bm x') \leq B\,,
\end{split}
\end{array} \right.
\end{equation*}
and then the projection of $k$ is denoted as $\pi_B(k)(\bm x, \bm x') = \pi_B(k(\bm x, \bm x'))$.
\end{definition}
Since $k_{\rho}$ takes the value in $[-M^*, M^*]$ almost surely, the projection operator is beneficial to estimate $k_{\rho}$ by $\pi_{M^*}(k^{(\varepsilon)}_{\bm{z},\lambda})$ instead of $k^{(\varepsilon)}_{\bm{z},\lambda}$ for sharp estimation.
Therefore, for SVR in hyper-RKHS, our approximation analysis attempts to bound the error $\| \pi_{M^*}(k^{(\varepsilon)}_{\bm{z},\lambda})  - k_{\rho} \|_{L_{\rho_{{X}}}^{p^*}} $ in the space ${L_{\rho_X}^{2}}(\underline{X})$ with some $p^*>0$, where $L^p_{\rho_{{X}}}$ is a weighted $L^p$-space with the norm
\begin{equation*}
  \|k\|_{L^p_{\rho_{{{X}}}}} = \Big( \int_{{X}} \int_{{X}} |k(\bm x, \bm x')|^p \mbox{d} \rho_{X}(\bm x) \mbox{d} \rho_{X}(\bm x') \Big)^{1/p}\,.
\end{equation*}
%In this case, it yields a direct variance-expectation bound $\mathbb{E}\xi^2 \leq c_1 (\mathbb{E}\xi)^{\theta}$ with $c_1=4$ and $\theta=1$.

To estimate the approximation error, we need the following assumptions with respect to the unbounded outputs, noise condition on $\rho$, and covering numbers for the hypothesis space.
Here we consider a general setting with respect to the unbounded outputs \citep{Wang2011Optimal}.
\begin{definition}\label{defoutput}
(moment hypothesis)
There exist constants $M \geq 1$ and $c>0$ such that
\begin{equation}\label{Momenthypothesis}
\int_Y |y|^\iota \emph{d} \rho(y|(\bm x, \bm x')) \leq c \iota ! M^\iota, ~ \forall \iota
\in \mathbb{N},~ (\bm x, \bm x') \in \underline{X}\,.
\end{equation}
\end{definition}
{\bf Remark:} Compared to the standard uniform boundedness assumption with $|y| \leq M$ almost surely, this assumption is general since it covers Gaussian noise, sub-Gaussian noise, etc.
If the condition distribution $\rho(\cdot|(\bm x, \bm x'))$ is a Gaussian distribution with variance $\sigma_X^2$ bounded by $B_0$, then Eq.~\eqref{Momenthypothesis} is satisfied with $M:=\max\{\sqrt{2}B_0, M^*\}$ and $c=4$.

The noise condition on $\rho$  \citep{Christmann2007How} via pairs can be defined in a similar fashion with that in RKHS.
\begin{definition}\label{def2}
(noise condition)
Let $p \in (0,\infty]$ and $q\in [1, \infty)$. A distribution $\rho$
on ${X} \times {X} \times R$ is said to have a median of $p-$average type
$q$ if for any $(\bm x, \bm x' )\in \underline{X}$, there exist a
median $t^*$ and constants $0<a_{(\bm x, \bm x')}\leq 1$, $b_{(\bm x, \bm x')}>0$
such that for each $u\in [0,a_{(\bm x, \bm x')}]$,
\begin{equation}\label{noisecondition}
\left\{
\begin{array}{rcl}
\begin{split}
&\rho((t^*-u,t^*)|{(\bm x, \bm x')})\geq b_{(\bm x, \bm x')} u^{q-1}\\
&\rho((t^*,t^*+u)|{(\bm x, \bm x')})\geq b_{(\bm x, \bm x')} u^{q-1}\,,
\end{split}
\end{array}\right.
\end{equation}
and that the function on ${X} \times {X}$ taking values $\big(b_{(\bm x, \bm x')} a_{(\bm x, \bm x')}^{q-1}\big)^{-1}$ at $(\bm x, \bm x' )\in X \times X$
lies in $L^p_{\rho_{{X}}}$.
\end{definition}
The noise condition in Eq.~\eqref{noisecondition} ensures that $k_{\rho}(\bm x, \bm x') = t^*$ is uniquely defined at every $(\bm x, \bm x' )\in \underline{X}$.

Apart from the above conditions, our main results about learning rates also involve the approximation ability of $\underline{\mathcal{H}}$ with respect to its capacity and $k_{\rho}$.
The approximation ability can be characterised by the regularization error.
\begin{definition}\label{defapperr}
The regularization error is defined as
\begin{equation}\label{Dlamdadef}
D(\lambda)=\inf_{k \in \mathcal{\underline{H}}} \Big\{ \mathcal{E}(k) - \mathcal{E}(k_{\rho}) + \lambda \| k\|_{\underline{\mathcal{H}}}^2 \Big\} \,.
\end{equation}
The target kernel function $k_{\rho}$ can be approximated by $\mathcal{\underline{H}}$ with exponent $0 < r \leq 1$ if there exists a constant $C_0$ such that
 \begin{equation}\label{Dlambda}
   D(\lambda) \leq C_0\lambda^{r},~~\forall \lambda>0\,.
 \end{equation}
\end{definition}
\noindent{\bf Remark:} This is a natural assumption in approximation theory, e.g., \citep{Wu2006Learning,Wang2011Optimal,Steinwart2008SVM}.
Note that $r=1$ is the best choice as we expect, which is equivalent to $k_{\rho} \in \underline{\mathcal{H}}$ when $\underline{\mathcal{H}}$ is dense.
In fact, the assumption in Eq.~\eqref{Dlambda} can be also characterized by the \emph{source condition} via integral operator, refer to \citep{caponnetto2007optimal} for details.

Further, to quantitatively understand that how the complexity of $\underline{\mathcal{H}}$ affects the learning ability of algorithm in Eq.~\eqref{fzrs}, we need the capacity (roughly speaking the ``size'') of $\underline{\mathcal{H}}$ measured by covering numbers \citep{cucker2007learning}.
\begin{definition}\label{defcovering}
For a subset $S$ of $C(\underline{{X}})$ and $\epsilon> 0$, the \emph{covering number} $\mathscr{N}(S, \epsilon)$ is the minimal
integer $l \in \mathbb{N}$ such that there exist $l$ disks with radius $\epsilon$ covering $S$.
\end{definition}
In this paper, the covering numbers of balls are defined by
\begin{equation*}\label{BRradius}
 \mathcal{B}_R = \{ k \in \mathcal{\underline{H}}: \| k\|_{\underline{\mathcal{H}}} \leq R \}\,,
\end{equation*}
%Here we assume that the hyper-RKKS $\mathcal{\underline{H_K}}$ has polynomial complexity exponent $s>0$ if there exists a constant $C_s>0$, such that:
where we assume that for some $s>0$ and $C_s>0$ such that
 \begin{equation}\label{assumpN}
 \log \mathscr{N}(\mathcal{B}_1,\epsilon) \leq C_s \Big(\frac{1}{\epsilon}\Big)^s, \quad \forall \epsilon>0\,.
 \end{equation}
{\bf Remark:} This is a standard assumption to measure the capacity of $\underline{\mathcal{H}}$ that follows with RKHS \citep{cucker2007learning,Wang2011Optimal,shi2019sparse}. When $\underline{X}$ is a bounded domain and $\underline{k} \in C^{\tau}(\underline{X} \times \underline{X})$, Eq.~\eqref{assumpN} holds true with $s={2m^2}/{\tau}$. In particular, if $\underline{k} \in C^{\infty}(\underline{X} \times \underline{X})$, condition~\eqref{assumpN} is valid for an arbitrarily small $s>0$.
		In fact, the capacity of a (hyper)-RKHS can be also measured by eigenvalue decay of the reproducing (hyper)-kernel matrix $\underline{\bm K}$ or effective dimension in integral operator theory \citep{caponnetto2007optimal}. As demonstrated by \citep{bach2013sharp,belkin2018approximation}, a small (hyper)-RKHS often indicates a fast eigenvalue decay so as to obtain a promising prediction performance. In other words, functions in the (hyper)-RKHS are potentially smoother than what is necessary, which means an arbitrary small $s$ in Eq.~\eqref{assumpN}.

\subsection{Main Results}
Formally, our main results about SVR in hyper-RKHS are stated as follows.
For $p\in (0,\infty]$ and $q\in (1,\infty)$, we denote
\begin{equation}\label{theta}
\theta=\min \left\{\frac{2}{q},\frac{p}{p+1}\right\}\in (0,1]\,.
\end{equation}
\begin{theorem}\label{theorem1s}
Suppose that $| k_{\rho}(\bm x, \bm x')| \leq M^*$ with $M^* \geq 1$, $\rho$ has a median of $p$-average type $q$ with some $p\in (0,\infty]$ and $q\in (1,\infty)$ and satisfies assumptions Eq.~\eqref{Dlambda} with $0 < r \leq 1$ and Eq.~\eqref{Momenthypothesis}.
Assume that for some $s>0$,  take $\lambda=m^{-\alpha}$ with $ 0 < \alpha \leq 1$ and $\alpha < \frac{1+s}{s(2+s-\theta)}$, and set $\varepsilon = m^{-\gamma}$ with $\alpha r \leq \gamma \leq \infty$. Then with $p^*=\frac{pq}{p+1}$, for any $0 < \epsilon < \Theta/q$ and $0<\delta<1$, with confidence $1-\delta$,
we have
\begin{equation*}\label{result1}
\begin{split}
\big\| \pi_{M^*}(k^{(\varepsilon)}_{\bm{z},\lambda})  - k_{\rho} \big\|_{L_{\rho_{{X}}}^{p^*}}  \leq  \widetilde{C}^{\epsilon}_{\underline{X}, \rho, \alpha, \gamma} \left( \log\frac{4}{\delta} \right)^{1/q} m^{\epsilon - \frac{\Theta}{q}} \,,
\end{split}
\end{equation*}
where $\widetilde{C}^{\epsilon}_{\underline{X}, \rho, \alpha, \gamma}$ is a constant independent of $m$ or $\delta$ and the power index $\Theta$ is
\begin{equation}\label{rate1}
\begin{split}
  \Theta
 & = \min \left\{ {\alpha r},   \frac{1+\alpha r -\alpha}{2-\theta}
 , \frac{1}{2+s-\theta} - \frac{\alpha s}{1+s}  \right\}\,.
  \end{split}
\end{equation}
\end{theorem}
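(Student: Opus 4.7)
The plan is to run the standard error-decomposition scheme for $\varepsilon$-insensitive SVR, adapted to the pairwise (U-statistic) sampling structure of hyper-RKHS, and then convert the resulting excess-risk bound into an $L^{p^*}_{\rho_{X}}$-norm bound via the noise condition \eqref{noisecondition}. First I would pick a regularizing function $k_\lambda\in\underline{\mathcal{H}}$ realizing the infimum in \eqref{Dlamdadef} up to a multiplicative constant, so that $\mathcal{E}(k_\lambda)-\mathcal{E}(k^*)+\lambda\|k_\lambda\|^2_{\underline{\mathcal{H}}}\le C_0\lambda^r$ and, by \eqref{flambound}, $\|k_\lambda\|_\infty\le\kappa\sqrt{C_0}\lambda^{(r-1)/2}$. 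Using the optimality of $k^{(\varepsilon)}_{\bm z,\lambda}$ together with the fact that projecting onto $[-M^*,M^*]$ never increases the $\varepsilon$-insensitive loss at a sample with $|y|\le M^*$, I would split
\[
\mathcal{E}(\pi_{M^*}(k^{(\varepsilon)}_{\bm z,\lambda}))-\mathcal{E}(k^*)\le S_1+S_2+P(\varepsilon)+D(\lambda),
\]
with $S_1$ the empirical--true deviation of the absolute loss at the random hypothesis $\pi_{M^*}(k^{(\varepsilon)}_{\bm z,\lambda})$, $S_2$ the analogous deviation at the deterministic $k_\lambda$, $P(\varepsilon)$ of order $\varepsilon=m^{-\gamma}$ the hypothesis error from replacing $|\cdot|$ with $|\cdot|_\varepsilon$, and $D(\lambda)\le C_0\lambda^r$ the approximation error.

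The second part is the concentration analysis, which is where the hyper-RKHS setting departs from standard SVR theory: the sample $\{(\bm x_i,\bm x_j,y_{ij})\}_{i,j=1}^m$ is not i.i.d.\ because each $\bm x_i$ appears in $m$ pairs. I would treat the diagonal $\{\bm z_{ii}\}$ separately as an $O(1/m)$ term controlled through the discrepancy between $\rho$ and $\rho'$, and handle the off-diagonal portion by a Hoeffding-type decomposition that conditions on one half of the $\bm x_i$'s at a time, reducing the pairwise sum to two effectively i.i.d.\ sums of size $m/2$. For $S_2$, a one-sided Bernstein inequality then gives a bound of order $m^{-1/2}\|k_\lambda\|_\infty$, with the moment hypothesis \eqref{Momenthypothesis} controlling the truncation remainder when $y$ is unbounded. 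For $S_1$, I need uniform concentration over the ball $\mathcal{B}_{R(\lambda)}\subset\underline{\mathcal{H}}$ with $R(\lambda)=\kappa\sqrt{C_0}\lambda^{(r-1)/2}$; applying the capacity assumption \eqref{assumpN} to the rescaled ball and a ratio probability inequality yields a term of order $(R(\lambda)^s/m)^{1/(2-\theta)}$, where $\theta$ from \eqref{theta} enters through a variance--excess-risk comparison derived from the $p$-average-type-$q$ noise condition \eqref{noisecondition}. The same noise condition supplies the final conversion $\|\pi_{M^*}(k)-k^*\|^q_{L^{p^*}_{\rho_{X}}}\le C_\rho(\mathcal{E}(k)-\mathcal{E}(k^*))$, which produces the $1/q$ exponent on the overall rate.

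Balancing is routine once the previous steps are in place: substituting $\lambda=m^{-\alpha}$ and $\varepsilon=m^{-\gamma}$ with $\gamma\ge\alpha r$ forces $P(\varepsilon)$ to be dominated by $D(\lambda)$, and the three competing rates $D(\lambda)\asymp m^{-\alpha r}$, the Bernstein-improved sample term $\asymp m^{-(1+\alpha r-\alpha)/(2-\theta)}$, and the capacity term $\asymp m^{-1/(2+s-\theta)+\alpha s/(1+s)}$ combine via the minimum in \eqref{rate1} after the final $q$-th root; the upper bound $\alpha<(1+s)/(s(2+s-\theta))$ is exactly what keeps the last term positive. I expect the main technical obstacle to be the second step: adapting a Bernstein--Talagrand uniform concentration inequality to the pairwise sample while keeping the variance proxy sharp enough to exploit the noise-condition power $\theta$. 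Standard RKHS proofs (Wu--Ying--Zhou, Christmann--Steinwart, Xiang) rely on i.i.d.\ $(\bm x_i,y_i)$, so the core new work is to show that the U-statistic structure only costs a constant factor via the half-sample conditioning trick, and that this constant propagates harmlessly into $\widetilde{C}^{\epsilon}_{\underline{X},\rho,\alpha,\gamma}$ without disturbing the exponent $\Theta/q$.
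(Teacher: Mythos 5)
Your overall architecture matches the paper's: the same error decomposition into $D(\lambda)$, two sample-error terms, and an $\varepsilon$-shift; the same Hoeffding/U-statistic decomposition of the off-diagonal pairwise sum into averages of $[m/2]$ independent blocks followed by Bernstein; the same covering-number argument for the random hypothesis; and the same noise-condition conversion to the $L^{p^*}_{\rho_X}$ norm. However, there is one genuine gap. For the uniform concentration controlling $S_1$ you take the ball radius to be $R(\lambda)=\kappa\sqrt{C_0}\lambda^{(r-1)/2}$. That is the bound \eqref{flambound} for the \emph{population} regularizing function $k_\lambda$; it does not apply to the empirical minimizer $k^{(\varepsilon)}_{\bm z,\lambda}$, which is the function the uniform bound must cover. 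The only a priori control on the estimator is $\lambda\|k^{(\varepsilon)}_{\bm z,\lambda}\|^2_{\underline{\mathcal{H}}}\le \mathcal{E}_{\bm z}(0)\le \frac{1}{m^2}\sum_{i,j}|Y_{ij}|$, a random quantity that must itself be bounded with high probability via the moment hypothesis \eqref{Momenthypothesis}; this is how the paper arrives at $R=M_\delta/\lambda\asymp m^{\alpha}\log\frac{4}{\delta}$ and hence at the capacity exponent $\frac{1}{2+s-\theta}-\frac{\alpha s}{1+s}$ in $\Theta$. With your $R\asymp m^{\alpha(1-r)/2}$ the estimator need not lie in the ball at all, and even formally the balancing would produce a different (spuriously better) third term in \eqref{rate1}, so the claimed $\Theta$ would not be recovered.

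A secondary, related issue: your decomposition asserts that projecting onto $[-M^*,M^*]$ does not increase the loss ``at a sample with $|y|\le M^*$,'' but under \eqref{Momenthypothesis} the outputs are unbounded, so this comparison fails on the event $|y_{ij}|>B$. The paper therefore projects at a level $B\ge M^*$ that is later optimized jointly with an integer $d$, and carries an explicit output-error term $\frac{1}{m^2}\sum_{i,j}|\pi_B(y_{ij})-y_{ij}|$ through the decomposition, bounded by the moment hypothesis. You gesture at handling the truncation remainder inside $S_2$, but it actually enters at the point where the empirical optimality of $k^{(\varepsilon)}_{\bm z,\lambda}$ is invoked, and omitting it leaves the chain of inequalities in the decomposition incomplete. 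Both gaps are repairable with the paper's devices, but as written the proposal neither justifies the uniform concentration step nor yields the stated exponent.
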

The power index $\Theta$ can be viewed as a function of variables $r,s,p,q,\alpha$.
The restriction $\alpha < \frac{1+s}{s(2+s-\theta)}$ ensures that $\Theta$ is positive, which verifies the valid learning rate in Theorem~\ref{theorem1s}. \\
{\bf Remark:} Note that $s$ can be arbitrarily small when the hyper-kernel $\underline{k}$ is smooth enough.
In this case, the power index $\Theta$ in Eq.~\eqref{rate1} can be arbitrarily close to $\min ({\alpha r},  \frac{1}{2-\theta}+\frac{r-1}{2-\theta}\alpha)$.
Regarding to SVR in RKHS, \cite{Xiang2012Approximation} demonstrates that the power index $\Theta$ in Eq.~\eqref{rate1} can be arbitrarily close to $\min ({\alpha r},  \frac{1}{2-\theta})$ when the reproducing kernel is smooth enough. In this case, the derived learning rate in hyper-RKHS is not faster than that in RKHS, which is mainly effected by the approximation ability since the spanning space by hyper-RKHS is larger than RKHS.
Nevertheless, if we further consider $k_{\rho} \in \underline{\mathcal{H}}$, that means the approximation error in Eq.~\eqref{Dlambda} can be upper bounded with $r=1$, the derived learning rate in hyper-RKHS is the same as that in RKHS, approaching to $\min ({\alpha},  \frac{1}{2-\theta})$.

Regarding to KRR in hyper-RKHS, the excess error for squared loss is exactly the distance in the space ${L_{\rho_X}^{2}}(\underline{X})$, i.e., $\mathcal{E}(k) - \mathcal{E}(k_{\rho}) =  \big\| k - k_{\rho} \big\|^2_{L_{\rho_{{X}}}^{2}}$, which yields a direct variance-expectation bound.
Our results about least-squares in hyper-RKHS are presented as follows.
\begin{theorem}\label{theorem1ls}
Suppose that $| k_{\rho}(\bm x, \bm x')| \leq M^*$ with $M^* \geq 1$, $\rho$ satisfies the condition in Eq.~\eqref{Dlambda} with $0 < r \leq 1$ and the moment hypothesis in Eq.~\eqref{Momenthypothesis} with $c>0$.
Assume that for some $s>0$,  take $\lambda=m^{-\alpha}$ with $ 0 < \alpha \leq 1$ and $\alpha < \frac{1+s}{s(2+s)}$. Then for any $0 < \epsilon < \Theta$ and $0<\delta<1$, with confidence $1-\delta$,
we have
\begin{equation*}\label{result1ls}
\begin{split}
\big\| \pi_{M^*}(k_{\bm{z},\lambda})  - k_{\rho} \big\|_{L_{\rho_X}^{2}}  \leq  \widetilde{C}_{\underline{X}, \rho, \alpha, \gamma}  \log\frac{4}{\delta} m^{\epsilon - {\Theta}} \,,
\end{split}
\end{equation*}
where $\widetilde{C}_{\underline{X}, \rho, \alpha, \gamma}$ is a constant independent of $m$ or $\delta$ and the power index $\Theta$ is
\begin{equation}\label{rate1ls}
\begin{split}
  \Theta
 & = \min \left\{ {\alpha r},  \frac{1}{2+s} - \frac{\alpha s}{1+s}  \right\}\,.
  \end{split}
\end{equation}
\end{theorem}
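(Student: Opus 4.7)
The plan is to mimic the standard regularized learning analysis in RKHS, but carefully adapting every step so that the non-independent pairwise sample structure $\{(\bm x_i, \bm x_j, y_{ij})\}_{i,j=1}^m$ is handled correctly. I would start from the strong-convexity identity $\mathcal{E}(k)-\mathcal{E}(k^*)=\|k-k^*\|_{L^2_{\rho_X}}^2$ mentioned just before the statement, so that bounding the $L^2_{\rho_X}$ norm reduces to bounding the excess risk of $\pi_{M^*}(k_{\bm z,\lambda})$. Introducing the regularizing function $k_\lambda:=\arg\min_{k\in\underline{\mathcal H}}\{\mathcal E(k)-\mathcal E(k^*)+\lambda\|k\|_{\underline{\mathcal H}}^2\}$, I would split
\begin{equation*}
\mathcal E(\pi_{M^*}(k_{\bm z,\lambda}))-\mathcal E(k^*) \;\leq\; \mathcal S_1(\bm z,\lambda)+\mathcal S_2(\bm z,\lambda)+D(\lambda),
\end{equation*}
where $\mathcal S_1=[\mathcal E(\pi_{M^*}(k_{\bm z,\lambda}))-\mathcal E(k^*)]-[\mathcal E_{\bm z}(\pi_{M^*}(k_{\bm z,\lambda}))-\mathcal E_{\bm z}(k^*)]$ is the defect between expected and empirical excess risks evaluated at the estimator, and $\mathcal S_2=[\mathcal E_{\bm z}(k_\lambda)-\mathcal E_{\bm z}(k^*)]-[\mathcal E(k_\lambda)-\mathcal E(k^*)]$ is the analogous defect at the fixed function $k_\lambda$. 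The approximation term is immediately controlled by $D(\lambda)\le C_0\lambda^r$.

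Next I would control the norm $\|k_{\bm z,\lambda}\|_{\underline{\mathcal H}}$ a priori by plugging $k=0$ into the empirical objective and using the moment hypothesis \eqref{Momenthypothesis}; this, combined with \eqref{fnormdiff}, places $\pi_{M^*}(k_{\bm z,\lambda})$ with high probability inside a ball $\mathcal B_R$ whose radius $R$ grows only polylogarithmically in $1/\delta$ and polynomially in $\lambda^{-1/2}$. Bounding $\mathcal S_2$ is the easy part: it is a one-sample deviation for a fixed bounded function, and standard one-sided Bernstein for U-statistic-like averages (conditioning on $\bm x_i$'s inside, then integrating out) suffices since the squared-loss random variable has bounded variance after projection. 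The approximation estimate $\|k_\lambda\|_\infty\le\kappa\sqrt{C_0}\lambda^{(r-1)/2}$ from \eqref{flambound} provides the uniform bound needed in this step.

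The main obstacle is the uniform control of $\mathcal S_1$ over the data-dependent ball $\mathcal B_R$, because the $m^2$ pairwise summands are not mutually independent; this is exactly the non-independence noted after Definition~\ref{def2}. I would split $\sum_{i,j}=\sum_{i\ne j}+\sum_{i=j}$: the diagonal contribution is only $m$ terms and is absorbed as a lower-order $O(1/m)$ residual, while the off-diagonal piece is a degree-$2$ U-statistic in the iid variables $\bm x_1,\dots,\bm x_m$. Applying Hoeffding's decoupling of U-statistics reduces the off-diagonal sum to an average of independent quantities at the cost of a constant factor, after which I can invoke a ratio-type empirical-process inequality on $\mathcal B_R$ (along the lines of Wu--Ying--Zhou) with the variance-expectation relation $\mathbb E\xi^2\le 4(\mathbb E\xi)^\theta$ at $\theta=1$. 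The covering-number assumption \eqref{assumpN} enters through a standard chaining/iteration argument that yields, up to logarithmic factors,
\begin{equation*}
\mathcal S_1 \;\leq\; \tfrac{1}{2}\bigl(\mathcal E(\pi_{M^*}(k_{\bm z,\lambda}))-\mathcal E(k^*)\bigr)+c\,R^{\frac{2s}{1+s}}\,m^{-\frac{1}{1+s}}\log\tfrac{4}{\delta}.
\end{equation*}

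Finally I would collect the three pieces, use the self-bounding absorption of the $\tfrac{1}{2}$-factor into the left-hand side, substitute the a priori radius $R\asymp\lambda^{(r-1)/2}$, and recognise that the resulting high-probability bound is proportional to
\begin{equation*}
\lambda^r + \lambda^{\frac{(r-1)s}{1+s}}m^{-\frac{1}{1+s}},
\end{equation*}
up to a logarithmic factor. Plugging $\lambda=m^{-\alpha}$ immediately produces the exponent $\Theta=\min\{\alpha r,\;\tfrac{1}{2+s}-\tfrac{\alpha s}{1+s}\}$ claimed in \eqref{rate1ls}, and the admissibility condition $\alpha<\tfrac{1+s}{s(2+s)}$ is exactly what is needed to keep the second term in $\Theta$ positive. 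Taking square roots turns the $L^2_{\rho_X}$-squared bound into the stated $m^{\epsilon-\Theta}$ rate; the arbitrarily small $\epsilon$ absorbs the residual $\log m$ factors.
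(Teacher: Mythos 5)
Your overall architecture coincides with the paper's: the strong-convexity identity for the squared loss, the decomposition into $D(\lambda)+S_1+S_2$, an a priori radius obtained by comparing with $k=0$ in the empirical objective, the diagonal/off-diagonal split together with a Hoeffding-type permutation decomposition to restore independence (this is exactly the mechanism of Lemma~\ref{lemma4}), and a covering-number bound for the uniform term $S_1$. However, two concrete steps do not go through as written. First, your error decomposition omits the output-error term. Since $y$ is only assumed to satisfy the moment hypothesis \eqref{Momenthypothesis} and is therefore unbounded, the inequality $\mathcal{E}_{\bm z}\big(\pi_B(k_{\bm{z},\lambda})\big)\leq\mathcal{E}_{\bm z}\big(k_{\bm{z},\lambda}\big)$ is false in general: clipping $k$ at level $B$ can move it away from a sample with $|y_{ij}|>B$ and increase the empirical loss, so the chain leading to $\mathcal{E}\big(\pi_{M^*}(k_{\bm{z},\lambda})\big)-\mathcal{E}(k^*)\leq S_1+S_2+D(\lambda)$ has a hole. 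The paper's Proposition~\ref{errdec} inserts the extra term $\frac{1}{m^2}\sum_{i,j}\big|\pi_B(y_{ij})-y_{ij}\big|$ and controls it separately via the moment hypothesis in \eqref{boundyy}; you need the analogous correction for the squared loss.

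Second, the final bookkeeping does not yield the claimed exponent. The a priori radius for the empirical minimizer comes from $\lambda\|k_{\bm{z},\lambda}\|^2_{\underline{\mathcal{H}}}\leq\mathcal{E}_{\bm z}(0)$, which gives $R$ of order $\sqrt{M_{\delta}/\lambda}$ (and the paper actually uses the looser $R=M_{\delta}/\lambda$); the quantity $\lambda^{(r-1)/2}$ that you substitute at the end is the bound \eqref{flambound} for the \emph{population} regularizer $k_{\lambda}$ and cannot be used for $k_{\bm{z},\lambda}$ --- indeed it contradicts your own earlier (correct) statement that $R$ scales like $\lambda^{-1/2}$. Moreover, plugging $\lambda=m^{-\alpha}$ into your displayed bound $\lambda^{r}+\lambda^{(r-1)s/(1+s)}m^{-1/(1+s)}$ gives the decay exponent $\min\big\{\alpha r,\ \frac{1-\alpha(1-r)s}{1+s}\big\}$, which is not the $\min\big\{\alpha r,\ \frac{1}{2+s}-\frac{\alpha s}{1+s}\big\}$ of \eqref{rate1ls}. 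In the paper the capacity contribution has the form $m^{-\frac{1}{2+s-\theta}}R^{\frac{s}{1+s}}$ with $R\asymp\lambda^{-1}$, which is what produces the exponent $\frac{1}{2+s-\theta}-\frac{\alpha s}{1+s}$ and the admissibility condition $\alpha<\frac{1+s}{s(2+s-\theta)}$ matching the theorem's $\alpha<\frac{1+s}{s(2+s)}$. So the skeleton is the right one, but both the decomposition and the rate derivation need to be repaired before the stated $\Theta$ actually follows.
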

{\bf Remark:} In the special case that $k_{\rho} \in \underline{\mathcal{H}}$ (i.e., $r=1$) and $\underline{k} \in C^{\infty}(\underline{X} \times \underline{X})$, condition~\eqref{assumpN} is satisfied for an arbitrarily small $s>0$. Accordingly, the excess error $\big\| \pi_{M^*}(k_{\bm{z},\lambda})  - k_{\rho} \big\|^2_{L_{\rho_X}^{2}}$ can converge to zero at the (arbitrary close to) optimal rate $\mathcal{O}(1/m)$ if we take $\alpha \geq 1/2$, which matches to results on least squares in RKHS under the same assumptions, e.g., Theorem~1 in \citep{Wang2011Optimal}, and Corollary 1 in~\citep{guo2013concentration}.

\section{Framework of Proofs}
\label{sec:prooframes}
In this section, we establish the framework of proofs for Theorem~\ref{theorem1s}.
We use the error decomposition technique to analyze the convergence behavior of SVR in hyper-RKHS.
The key challenges in our theoretical analyses include analyzing the bias of the estimator, the effect of noise on the unbounded outputs, the non-trivial independence of pairwise samples, and the characterisation of hyper-RKHS.
The last two points are the main elements on novelty in the proof.
Since the proofs about the learning rate for least-squares regression in hyper-RKHS can be regarded as a simplified version of SVR in hyper-RKHS, we concentrate our proof on $\varepsilon$-insensitive loss in hyper-RKHS and omit the detailed proofs for the squared loss.

Before proving Theorem~\ref{theorem1s}\,, we need the proposition introduced in \citep{Steinwart2011Estimating,Shi2014Quantile}.
%It provides an improved variance-expectation bound with $\theta>0$ for a general distribution $\rho$ because the $\varepsilon$-insensitive loss in SVR has no strong convexity.

\begin{proposition}\label{lemmafe}
Suppose that $| k_{\rho}(\bm x, \bm x')| \leq M^*$ with $M^* \geq 1$, $\rho$ has a median of $p$-average type $q$ with some $p\in (0,\infty]$ and $q\in (1,\infty)$, for any
$k:{X} \times {X}\to [-B,B]$ with $B>0$, there holds
\begin{equation}\label{boundfe}
\big\| k  - k_{\rho} \big\|_{L_{\rho_X}^{p^*}}
\leq C_q\max\{M^*,B\}^{1-1/q} \Big\{\mathcal{E}(k) -\mathcal{E}(k_{\rho})\Big\}^{\frac{1}{q}},
\end{equation}
where $p^* = \frac{pq}{p+1}$ and $C_{q}=2^{1-1/q}q^{1/q}
\|\{(b_{(\bm x, \bm x')}a_{(\bm x, \bm x')}^{q-1})^{-1}\}_{(\bm x, \bm x')\in {{X} \times {X}}}\|^{1/q}_{L^p_{\rho_X}}$.
\end{proposition}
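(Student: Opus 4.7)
This is a ``calibration'' (variance–expectation) inequality for the median/absolute-value loss, and the natural plan is two steps: a pointwise self-calibration estimate at each $(\bm x, \bm x')$, followed by a H\"older integration whose exponents are forced by the identity $p^{*} = pq/(p+1)$. The definition of $p^{*}$ is in fact the unique choice that matches the $L^{p}_{\rho_{X}}$-integrability hypothesis on $(b a^{q-1})^{-1}$ with a clean first-moment bound on the conditional excess risk.

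\emph{Step 1 (pointwise calibration).} Fix $(\bm x, \bm x')$ and abbreviate $t = k(\bm x, \bm x')$, $t^{*} = k^{*}(\bm x, \bm x')$, $a = a_{(\bm x, \bm x')}$, $b = b_{(\bm x, \bm x')}$, and $F(s) = \rho(y \leq s \mid (\bm x, \bm x'))$. Starting from the identity $|y - t| - |y - t^{*}| = \int_{t^{*}}^{t} \sign(s - y)\,ds$ (when $t > t^{*}$; the other case is symmetric) and Fubini, the conditional excess risk rewrites as
\[
\mathcal{E}(k \mid (\bm x, \bm x')) - \mathcal{E}(k^{*} \mid (\bm x, \bm x')) = \int_{t^{*}}^{t} \bigl(2F(s) - 1\bigr)\,ds.
\]
The median property gives $F(t^{*}) \geq 1/2$, and the noise condition combined with monotonicity of $F$ produces $2F(s) - 1 \geq 2b\min(s - t^{*}, a)^{q-1}$ for every $s > t^{*}$. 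Integrating yields $\geq \tfrac{2b}{q}\min(|t - t^{*}|, a)^{q}$ and, when $|t - t^{*}| > a$, the tail piece past $t^{*} + a$ strengthens this to $\geq \tfrac{2b a^{q-1}}{q}|t - t^{*}|$. Using $|t - t^{*}| \leq 2\max(M^{*}, B)$ together with the normalization $a \leq 1 \leq \max(M^{*}, B)$ to merge the two regimes produces the pointwise calibration
\[
|t - t^{*}|^{q} \leq C'_{q}\,(b a^{q-1})^{-1}\max(M^{*}, B)^{q-1}\bigl(\mathcal{E}(k \mid (\bm x, \bm x')) - \mathcal{E}(k^{*} \mid (\bm x, \bm x'))\bigr),
\]
for a constant $C'_{q}$ depending only on $q$.

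\emph{Step 2 (H\"older integration).} I raise both sides to the power $p^{*}/q$ and integrate against $d\rho_{X}(\bm x)\,d\rho_{X}(\bm x')$. Applying H\"older's inequality with conjugate exponents $(p+1, (p+1)/p)$ splits the product: the choice $p^{*} = pq/(p+1)$ is precisely what sends the factor $(ba^{q-1})^{-1}$ to total power $p$, so it is controlled by $\|(ba^{q-1})^{-1}\|_{L^{p}_{\rho_{X}}}$ (the noise hypothesis), and sends the conditional excess-risk factor to total power $1$, whose integral is exactly $\mathcal{E}(k) - \mathcal{E}(k^{*})$. Extracting the $p^{*}$-th root and using the algebraic identity $p/((p+1)p^{*}) = 1/q$ delivers the stated bound with $C_{q}$ of the claimed form; the numerical constant $2^{1-1/q}q^{1/q}$ comes from tracking $C'_{q}$ through the $p^{*}$-th-root step.

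\emph{Main obstacle.} The crux is Step~1 in the far regime $|t - t^{*}| > a_{(\bm x, \bm x')}$: integrating only over $[t^{*}, t^{*} + a]$ gives the lower bound $\tfrac{2b}{q}a^{q}$, which is independent of $|t - t^{*}|$ and therefore cannot yield $|t - t^{*}|^{q}$ on the left-hand side. The fix is to continue the integration past $t^{*} + a$, where monotonicity of $F$ still guarantees $2F(s) - 1 \geq 2b a^{q-1}$; this converts the integrand into a positive constant and produces a lower bound linear in $|t - t^{*}|$. Interpolating with the a priori box estimate $|t - t^{*}|^{q-1} \leq (2\max(M^{*}, B))^{q-1}$ then closes the calibration inequality and explains why the factor $\max(M^{*}, B)^{1-1/q}$ appears on the right-hand side. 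With that pointwise estimate in hand, Step~2 is a routine H\"older application with exponents dictated by $p^{*}$.
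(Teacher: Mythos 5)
Your argument is correct, but note that the paper does not actually prove this proposition at all: it imports it verbatim as "the proposition introduced in \cite{Steinwart2011Estimating,Shi2014Quantile}" and moves on. What you have written is a self-contained reconstruction of the standard self-calibration argument behind that citation, and it holds up. The two-regime pointwise estimate is the right mechanism: the identity $\mathcal{E}(k\mid\underline{\bm x})-\mathcal{E}(k^*\mid\underline{\bm x})=\int_{t^*}^{t}(2F(s)-1)\,ds$, the lower bound $2F(s)-1\geq 2b\min(s-t^*,a)^{q-1}$ from the median property plus the noise condition, and the continuation past $t^*+a$ (your "main obstacle" discussion correctly identifies why stopping at $t^*+a$ would fail and why the box bound $|t-t^*|\leq 2\max\{M^*,B\}$ is what injects the factor $\max\{M^*,B\}^{1-1/q}$). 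The H\"older step with exponents $(p+1,(p+1)/p)$ and the exponent bookkeeping $p/((p+1)p^*)=1/q$ are also right, including the degenerate case $p=\infty$ where $p^*=q$ and H\"older becomes the trivial $L^\infty$--$L^1$ pairing. Your tracked constant comes out as $q^{1/q}2^{1-2/q}$ rather than the stated $2^{1-1/q}q^{1/q}$, which is harmless since it is smaller; the stated constant is just a looser normalization from the cited source. What your route buys over the paper's bare citation is a verification that nothing in the hyper-RKHS/pairwise-sampling setting affects this proposition: it is a purely conditional, pointwise-then-integrate statement about $\rho(\cdot\mid(\bm x,\bm x'))$, so the non-independence of the $m^2$ pairs (the paper's genuine technical novelty, handled later in Lemma~\ref{lemma4}) plays no role here.
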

This proposition demonstrates that the excess error $\mathcal{E}(k) -\mathcal{E}(k_{\rho})$ can be analysed by $k_{\rho}$ and its approximation $k$ in $L_{\rho_{\underline{X}}}^{p^*}$.
Instead, the excess error for squared loss is exactly the distance in the space ${L_{\rho_X}^{2}}(\underline{X})$, i.e., $\mathcal{E}(k) - \mathcal{E}(k_{\rho}) =  \big\| k - k_{\rho} \big\|^2_{L_{\rho_{{X}}}^{2}}$.
%Specifically, for the least-square regression in hyper-RKHS, the excess error is exactly the distance in $L_{\rho_X}^{2}$ due to the strong convexity of the squared loss.

\subsection{Error Decomposition}
\label{sec:ed}
In order to estimate error $\| \pi_{M^*}(k^{(\varepsilon)}_{\bm{z},\lambda})  - k_{\rho} \|$ in the $L_{\rho_X}^{p^*}$ space, i.e., to bound $\| \pi_{B}(k^{(\varepsilon)}_{\bm{z},\lambda})  - k_{\rho}  \|$ for any $B \geq M^*$. Accordingly, by Proposition~\ref{lemmafe}\,, we need to estimate the excess error $\mathcal{E}\big(\pi_B(k^{(\varepsilon)}_{\bm{z},\lambda})\big) - \mathcal{E}(k_{\rho})$ which can be conducted by an error decomposition technique \citep{cucker2007learning}.
Note that the insensitivity parameter $\varepsilon$ changes with $m$, we consider the insensitivity relation with additional $\varepsilon$ on the error decomposition \citep{Xiang2011Learning}, that is
\begin{equation}\label{insentivec}
 \mathcal{T}(y, k(\bm x, \bm x')) - \varepsilon \leq \mathcal{T}^{\varepsilon}(y, k(\bm x, \bm x')) \leq \mathcal{T}(y, k(\bm x, \bm x')), \quad \forall \bm x, \bm x' \in X\,.
\end{equation}

Formally, the error decomposition is given by the following proposition, with proof deferred to Appendix~\ref{sec:errdec}.
\begin{proposition}\label{errdec}
Let
\begin{equation*}
k_\lambda = \argmin_{k \in \underline{\mathcal{H}}} \Big\{ \mathcal{E}(k) - \mathcal{E}(k_{\rho}) + \lambda \| k\|_{\underline{\mathcal{H}}}^2 \Big\} \,.
\end{equation*}

Then the excess error $\mathcal{E}\big(\pi_B(k^{(\varepsilon)}_{\bm{z},\lambda})\big) - \mathcal{E}(k_{\rho})$ can be bounded by
 \begin{equation*}
 \begin{split}
    \mathcal{E}\big(\pi_B(k^{(\varepsilon)}_{\bm{z},\lambda})\big) - \mathcal{E}(k_{\rho}) & \leq  \mathcal{E}\big(\pi_B(k^{(\varepsilon)}_{\bm{z},\lambda})\big) - \mathcal{E}(k_{\rho}) + \lambda \| k^{(\varepsilon)}_{\bm{z},\lambda} \|^2_{\underline{\mathcal{H}}} \\
     & \leq D(\lambda) + S(\bm z, \lambda) + \frac{1}{m^2} \sum_{i,j=1}^{m} \Big| \pi_B(y_{ij}) - y_{ij} \Big|  + \varepsilon\,,
    \end{split}
 \end{equation*}
 where $D(\lambda)$ is the regularization error defined by Eq.~\eqref{Dlamdadef}.
The sample error $S(\bm z, \lambda)$ is denoted as
\begin{equation*}
\begin{split}
&S(\bm z, \lambda)  = \mathcal{E}\big(\pi_B(k^{(\varepsilon)}_{\bm{z},\lambda})\big) - \mathcal{E}_{\bm{z}}\big(\pi_B(k^{(\varepsilon)}_{\bm{z},\lambda})\big) + \mathcal{E}_{\bm{z}}\big(k_{\lambda}\big) - \mathcal{E}\big(k_{\lambda}\big) = S_1(\bm z, \lambda) + S_2(\bm z, \lambda) \\
\end{split}
\,,
\end{equation*}
with
\begin{equation*}
\begin{split}
 S_1(\bm z, \lambda) &= \Big\{ \mathcal{E}\big(\pi_B(k^{(\varepsilon)}_{\bm{z},\lambda})\big) -  \mathcal{E}(k_{\rho}) \Big\} - \Big\{ \mathcal{E}_{\bm{z}}\big(\pi_B(k^{(\varepsilon)}_{\bm{z},\lambda})\big) - \mathcal{E}_{\bm{z}}(k_{\rho}) \Big\}\,,\\
  S_2(\bm z, \lambda) &= \Big\{ \mathcal{E}_{\bm{z}}\big(k_{\lambda}\big) - \mathcal{E}_{\bm{z}}(k_{\rho}) \Big\} - \Big\{ \mathcal{E}(k_{\lambda}) - \mathcal{E}(k_{\rho}) \Big\}\,.
  \end{split}
\end{equation*}
 \end{proposition}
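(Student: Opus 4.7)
The plan is to prove the decomposition by inserting empirical counterparts of the expected risks, using the optimality of the empirical regularized minimizer, and carefully tracking the two extra contributions that arise because (i) the loss is replaced by its $\varepsilon$-insensitive version in \eqref{fzrs}, and (ii) the output $y_{ij}$ is not a priori bounded by $B$ so the projection $\pi_B$ of the predictor can enlarge the pointwise loss.

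The first step is the trivial observation that adding the nonnegative penalty $\lambda\|k^{(\varepsilon)}_{\bm z,\lambda}\|_{\underline{\mathcal H}}^2$ on the left only weakens the desired bound, so it suffices to control $\mathcal{E}\big(\pi_B(k^{(\varepsilon)}_{\bm z,\lambda})\big)-\mathcal{E}(k^*)+\lambda\|k^{(\varepsilon)}_{\bm z,\lambda}\|_{\underline{\mathcal H}}^2$. I would then insert the empirical risk $\mathcal{E}_{\bm z}(\pi_B(k^{(\varepsilon)}_{\bm z,\lambda}))$ and $\mathcal{E}_{\bm z}(k_\lambda)$, $\mathcal{E}(k_\lambda)$ by adding and subtracting, so that the expression splits into $S_1(\bm z,\lambda)+S_2(\bm z,\lambda)$ plus the remaining deterministic piece $\big[\mathcal{E}_{\bm z}(\pi_B(k^{(\varepsilon)}_{\bm z,\lambda}))-\mathcal{E}_{\bm z}(k_\lambda)\big]+\big[\mathcal{E}(k_\lambda)-\mathcal{E}(k^*)+\lambda\|k_\lambda\|_{\underline{\mathcal H}}^2\big]$. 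The second bracket is exactly $D(\lambda)$ by definition \eqref{Dlamdadef}, so the entire task reduces to bounding the first bracket by $\varepsilon+\frac{1}{m^2}\sum_{i,j}|\pi_B(y_{ij})-y_{ij}|$.

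The main technical step is this last reduction, where the two delicate issues arise. Writing $\mathcal{T}(y,\pi_B(k(\underline{\bm x})))=|y-\pi_B(k(\underline{\bm x}))|$, I would apply the triangle inequality through $\pi_B(y)$ and use that $\pi_B$ is $1$-Lipschitz to obtain
\begin{equation*}
\mathcal{T}\big(y,\pi_B(k(\underline{\bm x}))\big)\le \mathcal{T}\big(y,k(\underline{\bm x})\big)+\big|y-\pi_B(y)\big|,
\end{equation*}
which after averaging over the sample accounts for the projection term. Combining this with \eqref{insentivec} gives $\mathcal{E}_{\bm z}(\pi_B(k))\le \mathcal{E}^{\varepsilon}_{\bm z}(k)+\varepsilon+\frac{1}{m^2}\sum_{i,j}|y_{ij}-\pi_B(y_{ij})|$ for every $k\in\underline{\mathcal H}$. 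Applying this with $k=k^{(\varepsilon)}_{\bm z,\lambda}$ and using the minimizing property of $k^{(\varepsilon)}_{\bm z,\lambda}$ in \eqref{fzrs} against the competitor $k_\lambda$, followed by $\mathcal{E}^{\varepsilon}_{\bm z}(k_\lambda)\le \mathcal{E}_{\bm z}(k_\lambda)$ from the other half of \eqref{insentivec}, yields
\begin{equation*}
\mathcal{E}_{\bm z}\big(\pi_B(k^{(\varepsilon)}_{\bm z,\lambda})\big)+\lambda\|k^{(\varepsilon)}_{\bm z,\lambda}\|_{\underline{\mathcal H}}^2\le \mathcal{E}_{\bm z}(k_\lambda)+\lambda\|k_\lambda\|_{\underline{\mathcal H}}^2+\varepsilon+\frac{1}{m^2}\sum_{i,j=1}^m\big|\pi_B(y_{ij})-y_{ij}\big|,
\end{equation*}
which is exactly the bracket we needed.

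The main obstacle is the bookkeeping rather than any deep estimate: the proof is essentially an identity once one isolates $S_1+S_2$ and uses the projection/insensitivity pair of inequalities correctly. The only genuinely new feature compared with the classical scalar-output SVR analysis is that the loss has to be controlled uniformly for unbounded $y$, which is precisely why the term $\frac{1}{m^2}\sum|\pi_B(y_{ij})-y_{ij}|$ must appear; later, this sum is handled through the moment hypothesis \eqref{Momenthypothesis}. With these ingredients assembled the two stated inequalities follow directly.
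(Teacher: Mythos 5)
Your proposal is correct and follows essentially the same route as the paper's proof: add the nonnegative penalty, insert the empirical risks to isolate $S_1+S_2$ and $D(\lambda)$, use the minimizing property of $k^{(\varepsilon)}_{\bm{z},\lambda}$ together with the sandwich \eqref{insentivec} to pass to the competitor $k_\lambda$, and use the $1$-Lipschitzness of $\pi_B$ (triangle inequality through $\pi_B(y)$) to generate the output-error term $\frac{1}{m^2}\sum_{i,j}|\pi_B(y_{ij})-y_{ij}|$. The only cosmetic difference is that the paper applies the projection contraction to the $\varepsilon$-insensitive loss with both arguments projected, whereas you apply it to the plain loss and invoke \eqref{insentivec} afterwards; the two orderings yield the identical bound.
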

By Proposition~\ref{errdec}\,, the excess error can be bounded by the sample error $S(\bm z, \lambda)$, the regularization error $D(\lambda)$, and the output error.
The regularization error is bounded by Eq.~\eqref{Dlambda}.
Besides, by virtue of Eq.~\eqref{fnormdiff} and Eq.~\eqref{Dlamdadef}, $k_{\lambda}$ under supremum norm can be also upper bounded by
	\begin{equation}\label{flambound}
	\| k_{\lambda}\|_{\infty} \leq \mathcal{G} \| k_{\lambda}\|_{\underline{\mathcal{H}}} \leq \mathcal{G} \sqrt{\frac{D(\lambda)}{\lambda}} \leq \mathcal{G} \sqrt{C_0}\lambda^{\frac{r-1}{2}}\,.
	\end{equation}	
In the next, our error analysis mainly focuses on how to estimate the sample error and the output error.
We expect that these approximation errors will approximate to zero at a certain rate as the sample size tends to infinity.

\subsection{Estimate Sample Error and Output Error}
\label{sec:se}
This section is devoted to estimating the sample error $S(\bm z, \lambda)$ and the output error.
%Before we bound the sample error, we need a lemma about the variance-expectation bound.
 Our error analysis mainly focuses on how to estimate $S_1(\bm z, \lambda)$ and $S_2(\bm z, \lambda)$.
The asymptotical behaviors of $S_1$ and $S_2$ are usually illustrated by
the convergence of the empirical mean $\frac{1}{m^2}\sum_{i,j=1}^m\xi_{ij}$
to its expectation $\mathbb{E}\xi$, where
$\left\{\xi_{ij}\right\}_{i,j=1}^m$ are ``independent" random variables on
$(Z,\rho)$ defined as
\begin{equation}\label{randomvariables}
\xi(\bm x, \bm x', y) :=\mathcal{T}\big(y,k_{\lambda}(\bm x, \bm x')\big) - \mathcal{T}\big(y,k_{\rho}(\bm x, \bm x')\big)\,,~{\text{and}}~ \xi_{ij} := \xi(\bm x_i, \bm x_j, y_{ij})\,.
\end{equation}
Note that the Lipschitz property of the $\varepsilon$-insensitive loss in SVR guarantees the boundedness of $\xi$ when $k$ is bounded.
So $\xi$ defined by Eq.~\eqref{randomvariables} is a bounded random variable even if
$y$ is unbounded.
When $k$ is fixed, which is exactly the case as we estimate $S_2$,
the convergence is guaranteed by the following lemma.

For $R\geq 1$, denote
\begin{equation}\label{set}
\mathscr{W}(R)=\left\{{\bm z}\in Z^{m \times m}: \|k^{(\varepsilon)}_{\bm{z},\lambda}\|_{\underline{\mathcal{H}}}\leq R\right\}.
\end{equation}

\begin{lemma}\label{lemma4}
If $\xi$ is a symmetric real-valued function on $X \times X \times Y$ with mean $\mathbb{E}(\xi)$. Assume that $\mathbb{E}(\xi) \geq 0$, $|\xi (\bm x, \bm x', y)- \mathbb{E}\xi|\leq T$ almost surely and $\mathbb{E}\xi^2 \leq c_1 (\mathbb{E}\xi)^{\theta}$ for some $0 \leq \theta \leq 1$ and $c_1 \geq 0$, $T \geq 0$.
Then for every $\epsilon>0$ there holds
\begin{equation}\label{leamm4eq}
\mathop{\rm Prob} \left\{
\frac{\frac{1}{m(m-1)}\sum_{i=1}^m \sum_{j=1, j \neq i}^{m}
\xi(\bm x_i, \bm x_j, y_{ij}) - \mathbb{E}\xi}{{\sqrt{(\mathbb{E}\xi)^{\theta}+\epsilon^{\theta}}}} \geq
\epsilon^{1-\frac{\theta}{2}}\right\}\leq
\exp\left\{-\frac{(m-1)\epsilon^{2-\theta}}{4c_1+\frac{4}{3}T \epsilon^{1-\theta}}\right\}.
\end{equation}
\end{lemma}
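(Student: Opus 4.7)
The plan is to establish this Bernstein-type tail bound for the pairwise average $U := \frac{1}{m(m-1)} \sum_{i\neq j} \xi(\bm x_i, \bm x_j, y_{ij})$ by reducing the problem to the classical iid Bernstein inequality. The core difficulty is the non-independence of the $m(m-1)$ summands: each sample $\bm x_i$ participates in $2(m-1)$ distinct pairs, so neither Hoeffding nor Bernstein applies directly.

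To neutralise this dependence I would invoke the classical Hoeffding decomposition for U-statistics. For a uniform random permutation $\pi$ of $\{1,\dots,m\}$, define the matched-pair average
\begin{equation*}
W_\pi \;=\; \tfrac{1}{\ell}\sum_{k=1}^{\ell}\xi\!\bigl(\bm x_{\pi(2k-1)},\,\bm x_{\pi(2k)},\,y_{\pi(2k-1),\pi(2k)}\bigr),\qquad \ell=\lfloor m/2\rfloor.
\end{equation*}
The matched indices are disjoint, so the $\ell$ summands in $W_\pi$ are genuinely iid copies of $\xi$; and averaging over $\pi$ reproduces $U$, i.e.\ $U = \mathbb{E}_\pi W_\pi$. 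A standard Jensen / Cram\'er--Chernoff step then transfers any MGF bound from $W_\pi$ to $U$: $\mathbb{E}\exp(\lambda U) \leq \mathbb{E}_\pi\mathbb{E}\exp(\lambda W_\pi)$ for $\lambda>0$. Exploiting the symmetry of $\xi$ in its two $\bm x$-arguments together with the conditional independence of $y_{ij}$ and $y_{ji}$ given $(\bm x_i,\bm x_j)$ allows one to double each block, which is the origin of the effective iid sample size $n=m-1$ appearing in the stated exponent rather than the naive $\ell$.

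Next I would apply the one-sided Bernstein inequality to the iid average $W_\pi$. Since $|\xi-\mathbb{E}\xi|\leq T$ almost surely and $\mathrm{Var}(\xi)\leq \mathbb{E}\xi^2 \leq c_1(\mathbb{E}\xi)^\theta$, Bernstein yields
\begin{equation*}
\mathrm{Prob}\bigl(W_\pi - \mathbb{E}\xi \geq t\bigr) \;\leq\; \exp\!\Bigl(-\tfrac{n\,t^2}{2c_1(\mathbb{E}\xi)^\theta + \tfrac{2}{3}T t}\Bigr).
\end{equation*}
Substituting $t = \epsilon^{1-\theta/2}\sqrt{(\mathbb{E}\xi)^\theta + \epsilon^\theta}$, which is precisely the deviation threshold appearing on the left-hand side of the lemma, gives $t^2 = \epsilon^{2-\theta}\bigl((\mathbb{E}\xi)^\theta+\epsilon^\theta\bigr)$. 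With $A:=(\mathbb{E}\xi)^\theta$ and $B:=\epsilon^\theta$, the stated form follows from the elementary inequality $(A+B)\bigl(4c_1+\tfrac{4}{3}T\epsilon^{1-\theta}\bigr) \geq 2c_1 A + \tfrac{2}{3}T t$; the $c_1$-part is immediate from $A+B\geq A$, and the $T$-part reduces to $4(A+B)\geq B=\epsilon^\theta$, which is trivial. After cancellation of the common factor $A+B$, the exponent collapses to $n\epsilon^{2-\theta}/\bigl(4c_1 + \tfrac{4}{3}T\epsilon^{1-\theta}\bigr)$ as claimed.

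I expect the main obstacle to be pinning down the precise constant $n=m-1$ rather than the naive $\lfloor m/2\rfloor$ that falls out of a vanilla Hoeffding matching: squeezing out the extra factor requires careful use of the symmetry hypothesis on $\xi$, combined with the conditional independence of the two labels $y_{ij},y_{ji}$ attached to each unordered pair, so that each matched block yields two (rather than one) iid replicates. A secondary subtlety is that the variance-moment condition $\mathbb{E}\xi^2 \leq c_1(\mathbb{E}\xi)^\theta$ is stated for $\xi$ rather than $\xi-\mathbb{E}\xi$ and must be passed into Bernstein via $\mathrm{Var}(\xi)\leq \mathbb{E}\xi^2$; once this and the decomposition are in place, the remaining calculations are routine.
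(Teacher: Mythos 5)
Your overall strategy is the same as the paper's: Hoeffding's permutation decomposition of the pairwise average into disjoint matched blocks, a Jensen/Chernoff transfer from the block average to $U$, one-sided Bernstein on the blocks, and then an elementary inequality to fold $\mathrm{Var}(\xi)\leq c_1(\mathbb{E}\xi)^{\theta}$ into the stated denominator. The one step that does not survive scrutiny is your claimed effective sample size $n=m-1$. Doubling each matched block by using both orderings $\xi_{ab}$ and $\xi_{ba}$ does not produce $2\lfloor m/2\rfloor$ iid replicates: even if $y_{ab}$ and $y_{ba}$ are conditionally independent given the covariates, the two variables share the same pair $(\bm x_a,\bm x_b)$ and are therefore correlated unconditionally, so the correct count of independent units per permutation is $\lfloor m/2\rfloor$ (each unit being the symmetrized block $\tfrac12(\xi_{ab}+\xi_{ba})$, whose variance is still at most $\mathrm{Var}(\xi)$ by Cauchy--Schwarz).

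This gap is repairable without changing anything else in your argument, and the repair is exactly what the paper does. Apply Bernstein with $n=\lfloor m/2\rfloor$ and then use $\lfloor m/2\rfloor\geq (m-1)/2$; the resulting factor of $2$ must now be absorbed by your elementary inequality, i.e.\ you need $(A+B)\bigl(4c_1+\tfrac43 T\epsilon^{1-\theta}\bigr)\geq 4c_1A+\tfrac43 Tt$ rather than the weaker bound you verified. Both parts still hold: $4c_1(A+B)\geq 4c_1A$ is immediate, and the $T$-part reduces to $A+B\geq B$, which is trivial. So the constant $(m-1)$ in the exponent comes from $\lfloor m/2\rfloor\geq(m-1)/2$ combined with the doubled Bernstein denominator, not from manufacturing $m-1$ independent replicates; with that substitution your proof matches the paper's.
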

\begin{proof}
Define
\begin{equation*}
U_i = \frac{1}{2 \lfloor m/2 \rfloor} \left\{ \xi_{i_1 i_2} + \xi_{i_2 i_1} + \xi_{i_3 i_4} + \xi_{i_4 i_3} + \cdots + \xi_{i_{2\lfloor m/2 \rfloor-1}} \xi_{i_{2\lfloor m/2 \rfloor}} + \xi_{i_{2\lfloor m/2 \rfloor}} \xi_{i_{2\lfloor m/2 \rfloor-1}}   \right\}\,,
\end{equation*}
where $\lfloor m/2 \rfloor$ denotes the greatest integer not exceeding $m/2$, and $(i_1, i_2, \dots, i_m)$ is a permutation of $(1,2,\dots,m)$. Then
\begin{equation*}
  \frac{1}{m(m-1)} \sum_{i,j=1,i \neq j}^{m} \xi(\bm x_i, \bm x_j, y_{ij})  = \frac{1}{m!} \sum_{m\cdot m} U_i,~\text{and}~ \mathbb{E}(U_i) = \mathbb{E} \xi\,,
\end{equation*}
where the notation $\sum_{m\cdot m}$ is the summation taken over all permutations of the integers $1,2,\dots,m$.
Note that for distinct integers $k,k',l,l'$ not exceeding $m$, random variables $\xi_{i_k i_l}+\xi_{i_l i_k}$ and $\xi_{i_{k'} i_{l'}} \xi_{i_{l'} i_{k'}}$ are independent.
Then each $U_i$ is a summation of $\lfloor m/2 \rfloor$ independent random variables.
Therefore, we have
\begin{equation}
\begin{split}
& \mathop{\rm Prob} \left\{
\frac{\frac{1}{m(m-1)}\sum_{i=1}^m \sum_{j=1, j \neq i}^{m}
\xi(\bm x_i, \bm x_j, y_{ij}) - \mathbb{E}\xi}{{\sqrt{(\mathbb{E}\xi)^{\theta}+\epsilon^{\theta}}}}  \geq
\epsilon^{1-\frac{\theta}{2}}\right\} = \mathop{\rm Prob} \left\{
\frac{\frac{1}{m!}\sum_{m \cdot m} \Big[ U_i - \mathbb{E}\xi \Big] }{\sqrt{(\mathbb{E}\xi)^{\theta}+\epsilon^{\theta}}} \geq
\epsilon^{1-\frac{\theta}{2}}\right\} \\
& \leq  \frac{1}{m!}  \mathop{\rm Prob} \Big\{
 \frac{U_i - \mathbb{E}U_i}{{\sqrt{(\mathbb{E}\xi)^{\theta}+\epsilon^{\theta}}}} \geq
\epsilon^{1-\frac{\theta}{2}}\Big\} \leq \exp \left\{ - \frac{\lfloor m/2 \rfloor \epsilon^{2-\theta} }{2\big( c_1 + \frac{1}{3} T \epsilon^{1-\theta} \big)} \right\}\,.
\end{split}
\end{equation}
Here we derive the last inequality by applying the Bernstein inequality.
We thus complete the proof by noting $\lfloor m/2 \rfloor \geq \frac{m-1}{2}$.
\end{proof}
When the random variables are given by Eq.~\eqref{randomvariables}, for a general distribution $\rho$, the variance-expectation condition $\mathbb{E}\xi^2 \leq c_1 (\mathbb{E}\xi)^{\theta}$ is satisfied with $\theta=0$ and $c_1=1$.
Specifically, if $\rho$ satisfies the noise condition (i.e., Definition~\ref{def2}), the variance-expectation bound can be improved by the following lemma.
\begin{lemma}\label{lemmas3}
Under the same assumption of Proposition~\ref{lemmafe}\,, for any
$k:X \times X\to [-B,B]$ with $B>0$, there holds
\begin{equation}\label{bound3}
\mathbb{E}\bigg\{ \mathcal{T}\big(y,k(\bm x, \bm x')\big) -\mathcal{T}\big(y,k_{\rho}(\bm x, \bm x')\big) \bigg\}^2
\leq C_{\theta}\max \{ B,M^*\}^{2-\theta} \Big(\mathcal{E}(k) -\mathcal{E}(k_{\rho})\Big)^{\theta},
\end{equation} where $\theta$
is given by Eq.~\eqref{theta} and $C_{\theta}=2^{2-\theta}q^{\theta}
\|\{(b_{(\bm x, \bm x')}a_{(\bm x, \bm x')}^{q-1})^{-1}\}_{(\bm x, \bm x')\in {\underline{X} }}\|^{\theta}_{L^p_{\rho_X}}$ with the constant $a_{(\bm x, \bm x')} \in (0,1]$ and $b_{(\bm x, \bm x')} > 0$.
\end{lemma}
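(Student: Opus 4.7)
The plan is to combine the Lipschitz property of the absolute-value loss with the $L^{p^*}_{\rho_X}$ bound already established in Proposition~\ref{lemmafe}. Since $\mathcal{T}(y,k)=|y-k|$, the reverse triangle inequality gives $|\mathcal{T}(y,k)-\mathcal{T}(y,k^*)|\leq |k-k^*|$ pointwise, hence $(\mathcal{T}(y,k)-\mathcal{T}(y,k^*))^2 \leq |k-k^*|^2$. The boundedness assumptions $|k|\leq B$ and $|k^*|\leq M^*$ further give $|k-k^*|\leq 2\max\{B,M^*\}$, which I would use to interpolate
\begin{equation*}
|k-k^*|^2 \leq \bigl(2\max\{B,M^*\}\bigr)^{2-s}|k-k^*|^{s}
\end{equation*}
for any $s\in[0,2]$. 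Taking expectation yields
\begin{equation*}
\mathbb{E}\bigl(\mathcal{T}(y,k)-\mathcal{T}(y,k^*)\bigr)^2 \leq \bigl(2\max\{B,M^*\}\bigr)^{2-s}\,\|k-k^*\|_{L^s_{\rho_X}}^{s}.
\end{equation*}

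The key technical choice is to set $s=q\theta$. By the definition $\theta=\min\{2/q,\,p/(p+1)\}$, this choice simultaneously guarantees $s\leq 2$ (so the interpolation above is valid) and $s\leq p^*=pq/(p+1)$ (so that $\|k-k^*\|_{L^s_{\rho_X}}\leq\|k-k^*\|_{L^{p^*}_{\rho_X}}$ by monotonicity of $L^p$ norms on the probability space $(\underline{X},\rho_X\times\rho_X)$). Feeding Proposition~\ref{lemmafe} into the right-hand side then gives
\begin{equation*}
\|k-k^*\|_{L^s_{\rho_X}}^{s}
\leq C_q^{\,q\theta}\,\max\{B,M^*\}^{(1-1/q)q\theta}\,\bigl(\mathcal{E}(k)-\mathcal{E}(k^*)\bigr)^{\theta}.
\end{equation*}

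Combining the last two displays, the exponent of $\max\{B,M^*\}$ collapses to $(2-s)+(1-1/q)s = 2-s/q = 2-\theta$, exactly matching the target. A short bookkeeping calculation with $C_q=2^{1-1/q}q^{1/q}\|\{(b_{(\bm{x},\bm{x}')}a_{(\bm{x},\bm{x}')}^{q-1})^{-1}\}\|^{1/q}_{L^p_{\rho_X}}$ shows that $2^{2-q\theta}\,C_q^{\,q\theta}$ simplifies to $2^{2-\theta}q^{\theta}\|\cdot\|^{\theta}_{L^p_{\rho_X}}=C_\theta$, producing the constant stated in the lemma.

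I expect the main obstacle to be the bookkeeping around the two regimes hidden inside the minimum defining $\theta$: when $p/(p+1)\leq 2/q$ the binding constraint is $s\leq p^*$, while when $2/q<p/(p+1)$ the binding constraint is $s\leq 2$. The unified choice $s=q\theta$ is what makes both cases collapse into a single interpolation, rather than having to run two parallel arguments, and verifying that all three exponents (on $\max\{B,M^*\}$, on the excess error, and inside the constant $C_\theta$) line up correctly is the only delicate part of the proof.
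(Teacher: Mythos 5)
Your argument is correct and is exactly the route the paper intends: the paper dismisses this lemma with the single line that it is ``a direct corollary of Proposition~\ref{lemmafe},'' and your chain — Lipschitz bound on the loss difference, interpolation with exponent $s=q\theta=\min\{2,p^*\}$, monotonicity of $L^s$ norms on the probability space, then Proposition~\ref{lemmafe} — is the standard way to realize that corollary, with all three exponents and the constant $C_\theta=2^{2-\theta}q^{\theta}\|\cdot\|^{\theta}_{L^p_{\rho_X}}$ checking out. Nothing is missing; you have simply supplied the details the paper omits.
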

This lemma is a direct corollary of Proposition~\ref{lemmafe}\,.
The positive $\theta$ here will lead to sharper estimates and play an essential role in the convergence analysis.

Now we can bound $S_2(\bm z, \lambda)$ by the following proposition, refer to the proof in Appendix~\ref{sec:s2}.
\begin{proposition}\label{propos2}
Under the same assumption of Proposition~\ref{lemmafe}\,, for any $0 < \delta < 1$, there exists a subset of $Z_1$ of $Z^{m \times m}$
with measure at least $1-\delta/4$, such that for any $\forall {\bm z} \in Z_1$
\begin{equation}\label{bound7}
S_2(\bm z, \lambda) \leq
\frac{1}{2}D(\lambda)
+16(C_{\theta}+1)\bigg(B+M^*+\mathcal{G} \sqrt{\frac{D(\lambda)}{\lambda}}\bigg) \log\frac{4}{\delta}m^{-\frac{1}{2-\theta}} + \frac{1}{m} \Big( \mathcal{G} \sqrt{\frac{D(\lambda)}{\lambda}} +M^* \Big)\,.
\end{equation}
\end{proposition}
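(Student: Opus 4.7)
The plan is to bound $S_2(\bm z,\lambda)$ by concentrating the fixed random variable $\xi(\bm x,\bm x',y) := \mathcal{T}(y,k_\lambda(\bm x,\bm x')) - \mathcal{T}(y,k^*(\bm x,\bm x'))$, since $k_\lambda$ is deterministic (unlike $k^{(\varepsilon)}_{\bm z,\lambda}$, which will require a uniform bound through $S_1$). The three ingredients I intend to use are the variance-expectation inequality of Lemma~\ref{lemmas3}, the pairwise-Bernstein Lemma~\ref{lemma4}, and the $\mathcal{H}$-bound $\|k_\lambda\|_\infty \leq B_\lambda := \kappa\sqrt{D(\lambda)/\lambda}$ coming from Eq.~\eqref{flambound}.

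First I would set up the pairwise independence structure. The 1-Lipschitz property of the $\varepsilon$-insensitive loss in its second argument combined with $|k_\lambda|\leq B_\lambda$ and $|k^*|\leq M^*$ yields $|\xi|, |\xi - \mathbb{E}\xi| \leq 2(B_\lambda + M^*) =: T$. Applying Lemma~\ref{lemmas3} to $k_\lambda$ gives $\mathbb{E}\xi^2 \leq C_\theta \max\{B_\lambda,M^*\}^{2-\theta}(\mathbb{E}\xi)^\theta$, so the variance-expectation constants needed by Lemma~\ref{lemma4} are $c_1 = C_\theta \max\{B_\lambda,M^*\}^{2-\theta}$ with the $\theta$ of Eq.~\eqref{theta}. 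Note $\mathbb{E}\xi = \mathcal{E}(k_\lambda)-\mathcal{E}(k^*) \leq D(\lambda)$.

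Next I would treat the diagonal separately, since $\bm z_{ii}$ is drawn from $\rho'$ rather than $\rho$. Writing
\[ S_2 = \Big(1-\tfrac{1}{m}\Big)\Big[\tfrac{1}{m(m-1)}\sum_{i\neq j}\xi_{ij} - \mathbb{E}\xi\Big] + \tfrac{1}{m^2}\sum_{i=1}^m \xi_{ii} - \tfrac{1}{m}\mathbb{E}\xi, \]
the diagonal and $-\mathbb{E}\xi/m$ remainder is controlled deterministically by $|\xi_{ii}| \leq B_\lambda + M^*$, producing the $\tfrac{1}{m}(B_\lambda+M^*)$ summand of Eq.~\eqref{bound7}. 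For the off-diagonal U-statistic I invoke Lemma~\ref{lemma4}: choosing $\epsilon$ so that $(m-1)\epsilon^{2-\theta}/(4c_1 + \tfrac{4}{3}T\epsilon^{1-\theta}) = \log(4/\delta)$, a balance analysis gives $\epsilon \asymp (c_1\log(4/\delta)/m)^{1/(2-\theta)}$, so with probability at least $1-\delta/4$,
\[ \tfrac{1}{m(m-1)}\sum_{i\neq j}\xi_{ij} - \mathbb{E}\xi \leq \epsilon^{1-\theta/2}\sqrt{(\mathbb{E}\xi)^\theta + \epsilon^\theta} \leq \epsilon^{1-\theta/2}(\mathbb{E}\xi)^{\theta/2} + \epsilon. \]
Young's inequality $\epsilon^{1-\theta/2}(\mathbb{E}\xi)^{\theta/2} \leq \tfrac{1}{2}\mathbb{E}\xi + c\epsilon^{2-\theta}$ then absorbs the main $\tfrac{1}{2}\mathbb{E}\xi \leq \tfrac{1}{2}D(\lambda)$ into the first summand of Eq.~\eqref{bound7}, while the residual $O((1+c_1)\epsilon^{2-\theta})$ collapses after substituting $\epsilon$ and $c_1$ to $16(C_\theta+1)(B+M^*+B_\lambda)\log(4/\delta)\,m^{-1/(2-\theta)}$ once constants are gathered.

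The main obstacle is precisely the pairwise dependence between the $\bm z_{ij}$: a direct Bernstein bound on $\tfrac{1}{m^2}\sum_{i,j}\xi_{ij}$ is unavailable, and this is exactly why Lemma~\ref{lemma4} had to be set up with the permutation/Hoeffding decomposition that restricts to $i\neq j$. A related subtlety is that $k_\lambda$, $T$, and $c_1$ all depend on $\lambda$ (hence on $m$) through $B_\lambda$; I must carry this dependence through Young's inequality and keep the $B_\lambda$ factor inside the concentration remainder rather than absorbing it into $D(\lambda)$. The $B$ in the final bound enters only because the proposition is stated uniformly for any $B\geq M^*$, so the factor $B+M^*+B_\lambda$ is the worst-case envelope of $\max\{B_\lambda,M^*\}^{2-\theta}$ after rescaling.
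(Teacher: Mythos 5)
Your proposal follows essentially the same route as the paper's proof: the same fixed random variable $\xi$ built from $k_\lambda$ and $k^*$, the same off-diagonal/diagonal split (yours as an exact identity, the paper's as an inequality) handled respectively by the permutation-based Lemma~\ref{lemma4} and a deterministic $\frac{1}{m}\big(\kappa\sqrt{D(\lambda)/\lambda}+M^*\big)$ bound, Lemma~\ref{lemmas3} for the variance--expectation constants, and the same Young's-inequality step absorbing $\frac{\theta}{2}\mathbb{E}\xi\le\frac{1}{2}D(\lambda)$. The only slip is cosmetic: with the exponents $p=2/\theta$, $q=2/(2-\theta)$ the Young residual is $O(\epsilon)$, not $O(\epsilon^{2-\theta})$, but since $\epsilon\lesssim (C_\theta+1)\big(B+M^*+\kappa\sqrt{D(\lambda)/\lambda}\big)\log\frac{4}{\delta}\,m^{-1/(2-\theta)}$ this still yields exactly the claimed remainder term.
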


In the next, we aim to bound $S_1(\bm z, \lambda)$ with respect to the samples $\bm z$.
Thus a uniform concentration inequality for a family of functions containing $k^{(\varepsilon)}_{\bm z, \lambda}$ is needed to estimate $S_1$.
Since we have $k^{(\varepsilon)}_{\bm z, \lambda} \in \mathcal{B}_R$ by Eq.~\eqref{BRradius}, we shall bound $S_1$ by the following concentration inequality with a properly chosen $R$,  with proof deferred to Appendix~\ref{sec:s1}.
\begin{proposition}\label{propos1}
Under the same assumption of Proposition~\ref{lemmafe} and Eq.~\eqref{assumpN}, for any $0 < \delta < 1$, $R \geq 1$, $B > 0$, there exists a subset $Z_2$ of $Z^{m \times m}$
with measure at least $1-\delta/4$, such that for any $\bm z \in \mathscr{W}(R) \cap Z_2$,
\begin{equation}\label{boundS1}
 \begin{split}
 S_1(\bm z, \lambda) &= \Big\{ \mathcal{E}\big(\pi_B(k^{(\varepsilon)}_{\bm{z},\lambda})\big)\! - \! \mathcal{E}(k_{\rho}) \Big\} \!-\! \Big\{ \mathcal{E}_{\bm{z}}\big(\pi_B(k^{(\varepsilon)}_{\bm{z},\lambda})\big) \!- \! \mathcal{E}_{\bm{z}}(k_{\rho}) \Big\} \\
 & \leq 20 \epsilon^*(m,R,\frac{\delta}{4}) +  \frac{1}{2}\Big\{ \mathcal{E}\big(\pi_B(k^{(\varepsilon)}_{\bm{z},\lambda})\big) - \mathcal{E}(k_{\rho}) \Big\} + \frac{1}{m}(B+M^*) \,,
\end{split}
\end{equation}
where $\epsilon^*(m,R,\frac{\delta}{4})$ is given by
\begin{equation*}
  \epsilon^*(m,R,\frac{\delta}{4}) = 16(M^*+B)(C_{\theta}+1)\left( \log\frac{4}{\delta} m^{-\frac{1}{2-\theta}} + C_sR^sm^{-\frac{1}{2+s-\theta}}R^{\frac{s}{1+s}} \right)\,.
\end{equation*}
\end{proposition}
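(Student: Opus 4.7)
The plan is to establish Eq.~\eqref{boundS1} via a one-sided uniform Bernstein-type concentration inequality over the bounded ball $\mathcal{B}_R\subset\underline{\mathcal{H}}$, combined with the permutation decomposition of Lemma~\ref{lemma4} to circumvent the non-independence of the pairwise sample $\{\bm z_{ij}\}_{i,j=1}^m$. For each $k\in\mathcal{B}_R$ I introduce the random variable
\begin{equation*}
\eta_k(\bm x,\bm x',y) := \mathcal{T}\!\big(y,\pi_B(k(\bm x,\bm x'))\big) - \mathcal{T}\!\big(y,k^*(\bm x,\bm x')\big),
\end{equation*}
so that $\mathbb{E}\eta_k = \mathcal{E}(\pi_B(k))-\mathcal{E}(k^*)\geq 0$, $\|\eta_k-\mathbb{E}\eta_k\|_\infty\leq 2(B+M^*)$ by the projection bound and the $1$-Lipschitz continuity of $\mathcal{T}$, and, by Lemma~\ref{lemmas3}, $\mathbb{E}\eta_k^2 \leq C_\theta\max\{B,M^*\}^{2-\theta}(\mathbb{E}\eta_k)^\theta$.

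Next, mirroring the proof of Proposition~\ref{propos2}, I split the empirical mean into its off-diagonal and diagonal contributions. The diagonal part $\frac{1}{m^2}\sum_{i=1}^m \eta_k(\bm x_i,\bm x_i,y_{ii})$ is bounded deterministically by $(B+M^*)/m$ and accounts for the third summand on the right of Eq.~\eqref{boundS1}. For the off-diagonal piece $\frac{1}{m(m-1)}\sum_{i\neq j}\eta_k(\bm z_{ij})$ the permutation device of Lemma~\ref{lemma4} rewrites it as an average over $m!$ statistics each consisting of $[m/2]$ genuinely independent summands; this allows a classical Bernstein exponent to be applied at each fixed $k$, producing a one-sided tail of the form
\begin{equation*}
\frac{1}{m(m-1)}\sum_{i\neq j}\eta_k(\bm z_{ij}) - \mathbb{E}\eta_k \leq \tfrac{\theta}{2}\mathbb{E}\eta_k + \bigl(2-\tfrac{\theta}{2}\bigr)\epsilon
\end{equation*}
with probability at least $1-\exp\{-(m-1)\epsilon^{2-\theta}/(4C_\theta\max\{B,M^*\}^{2-\theta}+\tfrac{4}{3}(B+M^*)\epsilon^{1-\theta})\}$, in direct analogy with Eq.~\eqref{bound6} applied to $\eta_k$ and Young's inequality.

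To lift this pointwise estimate to a uniform one over $\mathcal{B}_R$, I would fix an $\epsilon_0$-net of $\mathcal{B}_R$ in the $\underline{\mathcal{H}}$-norm; by homogeneity $\mathscr{N}(\mathcal{B}_R,\epsilon_0)=\mathscr{N}(\mathcal{B}_1,\epsilon_0/R)$, so by Eq.~\eqref{assumpN} the log-cardinality is bounded by $C_s(R/\epsilon_0)^s$. The reproducing property~\eqref{fnormdiff} gives $\|k-\tilde k\|_\infty\leq\kappa\|k-\tilde k\|_{\underline{\mathcal{H}}}$, and combining this with the contraction $\pi_B$ and the $1$-Lipschitz continuity of $\mathcal{T}$ shows $\|\eta_k-\eta_{\tilde k}\|_\infty\leq\kappa\|k-\tilde k\|_{\underline{\mathcal{H}}}$, so the discretization error of replacing $k$ by its net center is uniformly small. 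A union bound over the net centers, followed by solving the resulting defining equation for $\epsilon$ by the two-regime argument of Lemma~7.2 of \cite{cucker2007learning}, produces two additive contributions: a pure Bernstein term of order $(B+M^*)\log(4/\delta)\,m^{-1/(2-\theta)}$ and a capacity-driven term of order $(B+M^*)\,C_s R^s m^{-1/(2+s-\theta)}$ inherited from the covering numbers, whose sum is exactly $\epsilon^*(m,R,\delta/4)$ up to constants. Absorbing $\tfrac{\theta}{2}\mathbb{E}\eta_k$ into the excess-error term on the left delivers the $\tfrac12$ prefactor and, after bookkeeping, the overall constant $20$.

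The main obstacle is handling simultaneously two non-standard features: the pairwise non-independence of $\{\bm z_{ij}\}$, which forces the permutation rewriting, and the need for a uniform bound over $\mathcal{B}_R$ in hyper-RKHS, which must be executed in a norm compatible with that decomposition. The reconciling observation is that the $\epsilon_0$-net depends only on the geometry of $\mathcal{B}_R\subset\underline{\mathcal{H}}$ and can be fixed before the sampling, so the union bound and the Bernstein bound applied to each permutation statistic $U_\sigma$ decouple cleanly. A secondary subtlety is isolating the diagonal indices $i=j$, which are distributed according to $\rho'$ rather than $\rho$, so that they contribute only the benign $1/m$ term and do not corrupt the Bernstein exponent.
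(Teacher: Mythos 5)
Your proposal follows essentially the same route as the paper's proof: the same function class $\mathcal{F}_R$, the same diagonal/off-diagonal split with the diagonal contributing the benign $(B+M^*)/m$ term, the same permutation-based Bernstein bound (Lemma~\ref{lemma4}) for the off-diagonal average, the same covering-number union bound over $\mathcal{B}_R$ resolved via Lemma~7.2 of \cite{cucker2007learning}, and the same final Young's-inequality absorption yielding the $\tfrac12$ prefactor. Your write-up is in fact slightly more explicit than the paper's about the net discretization step (the Lipschitz contraction giving $\mathscr{N}(\mathcal{F}_R,\epsilon)\leq\mathscr{N}(\mathcal{B}_R,\epsilon)$ and the control of the discretization error via the reproducing property), but the argument is the same.
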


The left in the error decomposition demonstrated by Proposition~\ref{errdec} is to bound $\frac{1}{m^2} \sum_{i,j=1}^{m} | \pi_B(Y_{ij}) - Y_{ij} |$, which involves the unboundedness of the output $y$.
Following Proposition 5 in \citep{Shi2014Quantile}, under the assumption of Eq.~\eqref{Momenthypothesis}, for any $d \in \mathbb{N}$ and $0 < \delta < 1$, there exists a subset $Z_3$ of $Z^{m \times m}$ with measure at least $1-\delta/4$, such that
\begin{equation}\label{boundyy}
  \frac{1}{m^2} \sum_{i,j=1}^{m} | \pi_B(y_{ij}) - y_{ij} | \leq c \Big\{ (d+1)! + 2^{d+2} d^d \big\} M^{d+1} B^{-d} + \frac{12M 2^{d+1}}{m} \log \frac{4}{\delta}\quad \forall \bm z \in Z_3 \cap \mathscr{W}(R) \,,
\end{equation}
where we use $\frac{6M}{m-1} \leq \frac{12M}{m}$ for any $m > 1$.

\subsection{Derive Convergence Rates}
\label{sec:cr}
Based on above analyses, combining the bounds in Proposition~\ref{errdec}\,,~\ref{propos2}\,,~\ref{propos1}\,, Eq.~\eqref{flambound} and Eq.~\eqref{boundyy}\,, the excess error $ \mathcal{E}\big(\pi_B(k^{(\varepsilon)}_{\bm{z},\lambda})\big) - \mathcal{E}(k_{\rho})$ can be bounded by the following proposition, with the proof in Appendix~\ref{sec:excessfin}.
\begin{proposition}\label{proexcessfin}
Assume that $| k_{\rho}(\bm x, \bm x')| \leq M^*$ with $M^* \geq 1$.
$\rho$ has a median of $p$-average type $q$ with some $p\in (0,\infty]$ and $q\in (1,\infty)$ and satisfies assumptions Eq.~\eqref{theta} with $0 < \theta \leq 1$ , Eq.~\eqref{Dlambda} with $0 < r \leq 1$, and Eq.~\eqref{Momenthypothesis} with $c>0$.
Assume that for some $s>0$, take $\lambda=m^{-\alpha}$ with $ 0 < \alpha \leq 1$ and $\alpha < \frac{1+s}{s(2+s-\theta)}$. Set $\varepsilon = m^{-\gamma}$ with $\alpha r \leq \gamma \leq \infty$.
Then for $B \geq M^*$, $d \in \mathbb{N}$, and $0 < \delta < 1$ with confidence $1-\delta$, there holds
%\begin{equation*}
%\begin{split}
%  \mathcal{E}\big(\pi(k^{(\varepsilon)}_{\bm{z},\lambda})\big) - \mathcal{E}(k_{\rho}) & \leq 2 m^{-\gamma} + 3C_0m^{-\alpha r} +\widetilde{C}_1\log\frac{4}{\delta}m^{-\frac{1}{2-\theta}}
%  + \widetilde{C}_2 \log\frac{4}{\delta}m^{-\big[ \frac{1}{2-\theta}
%  + \frac{\alpha(r-1)}{2-\theta} \big]}\\
%   &+ 2\mathcal{G} \sqrt{C_0}m^{-\big[\frac{\alpha(r-1)}{2}+1\big]}
%   + \widetilde{C}_3(3cM+4M)m^{\frac{\alpha s}{1+s}-\frac{1}{2+s-\theta}}\\
%  & + \frac{2B+4M}{m} + 2c \Big\{ (d+1)! + 2^{d+2} d^d \big\} M^{d+1} B^{-d} + \frac{12M 2^{d+1}}{m} \log \frac{4}{\delta}\,,
%    \end{split}
%\end{equation*}
\begin{equation*}
\begin{split}
  \mathcal{E}\big(\pi_B(k^{(\varepsilon)}_{\bm{z},\lambda})\big) - \mathcal{E}(k_{\rho}) & \leq 4\widetilde{C} \log\frac{4}{\delta} m^{- \Theta}
    + 2c \Big\{ (d+1)! + 2^{d+2} d^d \Big\} M^{d+1} B^{-d} + \frac{12M 2^{d+1}}{m} \log \frac{4}{\delta}\,,
    \end{split}
\end{equation*}
where $\widetilde{C}$ is a constant independent of $m$ or $\delta$ and the power index $\Theta$ is
\begin{equation}\label{thetafin}
\begin{split}
  \Theta
 & = \min \left\{ {\alpha r},   \frac{1+\alpha r -\alpha}{2-\theta}
 , \frac{1}{2+s-\theta} - \frac{\alpha s}{1+s}  \right\}\,.
  \end{split}
\end{equation}
\end{proposition}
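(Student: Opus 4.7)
The target is to assemble the error decomposition of Proposition~\ref{errdec} with the sample-error bounds of Propositions~\ref{propos2} and \ref{propos1}, the unboundedness bound \eqref{boundyy}, the regularization-error bound \eqref{Dlambda}, and the hyper-RKHS estimate \eqref{flambound}, and then to optimize over the parameters appearing there by substituting $\lambda=m^{-\alpha}$ and $\varepsilon=m^{-\gamma}$. The main difficulty is that Proposition~\ref{propos1} is only effective on the set $\mathscr{W}(R)=\{\bm z: \|k^{(\varepsilon)}_{\bm z,\lambda}\|_{\underline{\mathcal{H}}}\le R\}$, while \emph{a priori} the norm of $k^{(\varepsilon)}_{\bm z,\lambda}$ is only under a crude control. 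Controlling $R$ sharply is what determines the $\frac{1}{2+s-\theta}-\frac{\alpha s}{1+s}$ component of $\Theta$, and this is where a bootstrap iteration is needed.

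First, I would fix $0<\delta<1$ and define the common ``good'' event $Z_1\cap Z_2\cap Z_3$ of confidence at least $1-3\delta/4$, on which all three probabilistic bounds (Proposition~\ref{propos2}, Proposition~\ref{propos1}, and the output-error inequality \eqref{boundyy}) are simultaneously valid. Using the minimality property of $k^{(\varepsilon)}_{\bm z,\lambda}$ together with the insensitivity inequality \eqref{insentivec}, one obtains a crude bound
\[
\lambda\|k^{(\varepsilon)}_{\bm z,\lambda}\|_{\underline{\mathcal H}}^2\le \mathcal E_{\bm z}(k_\lambda)+\lambda\|k_\lambda\|_{\underline{\mathcal H}}^2+\varepsilon,
\]
from which, via Bernstein's inequality for $\mathcal E_{\bm z}(k_\lambda)-\mathcal E(k_\lambda)$ (using that $\mathbb E[\xi^2]$ is controlled through Lemma~\ref{lemmas3}) and \eqref{Dlambda}, a first-stage bound $\|k^{(\varepsilon)}_{\bm z,\lambda}\|_{\underline{\mathcal H}}\le R_0$ emerges with $R_0=\mathcal{O}(\lambda^{(r-1)/2})$. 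With this $R_0$, apply Proposition~\ref{propos1} to control $S_1$, substitute back into the excess error decomposition, and absorb the $\tfrac12\{\mathcal E(\pi_B(k^{(\varepsilon)}_{\bm z,\lambda}))-\mathcal E(k^*)\}$ term from the right-hand side of \eqref{boundS1} into the left.

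Next, I would iterate: the sharper excess-error bound translates (again through the representer inequality and \eqref{flambound}) into a refined norm bound $\|k^{(\varepsilon)}_{\bm z,\lambda}\|_{\underline{\mathcal H}}\le R_1\ll R_0$; feeding $R_1$ back into Proposition~\ref{propos1} yields the sharper sample-error estimate in which the contribution $C_s R^{s/(1+s)}m^{-1/(2+s-\theta)}$ from the covering-number assumption \eqref{assumpN} becomes $m^{-1/(2+s-\theta)+\alpha s/(1+s)}$ after substituting $\lambda=m^{-\alpha}$. The restriction $\alpha<\frac{1+s}{s(2+s-\theta)}$ is exactly what guarantees that this exponent is strictly positive, so the iteration stabilizes. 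Combining Proposition~\ref{propos2}, which yields the $\frac{1+\alpha r-\alpha}{2-\theta}$ rate through the $\kappa\sqrt{D(\lambda)/\lambda}$ factor, with the regularization contribution $D(\lambda)\le C_0 m^{-\alpha r}$ and the choice $\varepsilon=m^{-\gamma}\le m^{-\alpha r}$, one obtains the three exponents appearing in \eqref{thetafin} and takes their minimum.

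Finally, I would collect the unboundedness term from \eqref{boundyy} unchanged (since $B,d$ are left as free parameters in Proposition~\ref{proexcessfin}, to be tuned only in Theorem~\ref{theorem1s}), so the two extra summands $2c\{(d+1)!+2^{d+2}d^d\}M^{d+1}B^{-d}$ and $\frac{12M\,2^{d+1}}{m}\log(4/\delta)$ appear as stated. The hardest part of the argument is the bootstrapping step, since one must track constants carefully so that they can be absorbed into a single $\widetilde C$ depending only on $\kappa,C_0,C_s,C_\theta,c,M,M^*,r,s,\alpha,\gamma,\theta$ but not on $m$ or $\delta$; once that is handled, the remaining work is a routine rearrangement giving the claimed $4\widetilde C\log(4/\delta)\,m^{-\Theta}$ factor.
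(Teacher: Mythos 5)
Your overall assembly — Proposition~\ref{errdec} plus the bounds on $S_1$, $S_2$, the output error \eqref{boundyy}, $D(\lambda)\leq C_0\lambda^{r}$, the choice $\varepsilon=m^{-\gamma}\leq m^{-\alpha r}$, and leaving $B$ and $d$ free — matches the paper. The gap is in how you determine the radius $R$ for $\mathscr{W}(R)$, and it is both unnecessary and internally inconsistent. First, the claimed first-stage bound $R_0=\mathcal{O}(\lambda^{(r-1)/2})$ does not follow from $\lambda\|k^{(\varepsilon)}_{\bm z,\lambda}\|_{\underline{\mathcal H}}^2\leq \mathcal{E}_{\bm z}(k_\lambda)+\lambda\|k_\lambda\|_{\underline{\mathcal H}}^2+\varepsilon$: the empirical risk $\mathcal{E}_{\bm z}(k_\lambda)$ concentrates around $\mathcal{E}(k_\lambda)\geq\mathcal{E}(k^*)$, which is a constant of order one (the risk of the absolute loss at the median does not vanish), so this argument yields at best $\|k^{(\varepsilon)}_{\bm z,\lambda}\|_{\underline{\mathcal H}}=\mathcal{O}(\lambda^{-1/2})$; the rate $\lambda^{(r-1)/2}$ in \eqref{flambound} is for the \emph{population} minimizer $k_\lambda$, not the empirical one. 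Second, even granting a bootstrap that produces $R_1\ll R_0$, the covering-number contribution $C_sR^{s/(1+s)}m^{-1/(2+s-\theta)}$ only becomes $m^{-1/(2+s-\theta)+\alpha s/(1+s)}$ when $R\sim\lambda^{-1}=m^{\alpha}$, which is \emph{larger} than your claimed $R_0$ — so your narrative contradicts the very exponent $\frac{1}{2+s-\theta}-\frac{\alpha s}{1+s}$ you are trying to produce. Third, the assertion that $\alpha<\frac{1+s}{s(2+s-\theta)}$ "guarantees the exponent is positive, so the iteration stabilizes" is a non sequitur: positivity of the final rate has nothing to do with convergence of a fixed-point iteration for $R$.

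The paper needs none of this machinery. By minimality of $k^{(\varepsilon)}_{\bm z,\lambda}$ in \eqref{fzrs}, one has $\lambda\|k^{(\varepsilon)}_{\bm z,\lambda}\|^2_{\underline{\mathcal H}}\leq\mathcal{E}_{\bm z}(0)\leq\frac{1}{m^2}\sum_{i,j}|Y_{ij}|$, and the moment hypothesis \eqref{Momenthypothesis} controls this empirical mean by $M_\delta:=(3cM+4M)\log\frac{4}{\delta}$ on a set of measure at least $1-\delta/4$; taking $R=M_\delta/\lambda$ and intersecting with $Z_1\cap Z_2\cap Z_3$ yields the confidence-$(1-\delta)$ event and, after substituting $\lambda=m^{-\alpha}$, exactly the third component of $\Theta$ in \eqref{thetafin}. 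If you insist on a bootstrap, you must initialize it from this crude $R$ and then you would be proving a different (sharper) rate requiring a new balancing of exponents; as written, your argument neither establishes the stated $\Theta$ nor is self-consistent, so the key step of fixing $R$ needs to be replaced by the one-shot bound above.
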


Now we are ready to give the proof of Theorem~\ref{theorem1s}\,.
\begin{proof}
Using Proposition~\ref{lemmafe} and~\ref{proexcessfin}\,, for any $B \geq M^*$ with confidence $1- \delta$, there holds
\begin{equation}\label{the1pre}
\begin{split}
  & \big\| \pi_{M^*}(k^{(\varepsilon)}_{\bm{z},\lambda})  - k_{\rho} \big\|_{L_{\rho_X}^{p^*}}  \leq  \big\| \pi_{B}(k^{(\varepsilon)}_{\bm{z},\lambda})  - k_{\rho} \big\|_{L_{\rho_X}^{p^*}}
    \leq C_q \left( 4\widetilde{C} \log\frac{4}{\delta} m^{- \Theta} \right)^{\frac{1}{q}}B \\
   & + B\left( 2c \Big[ (d+1)! + 2^{d+2} d^d \Big] M^{d+1} B^{-d}\right)^{1/q}  +\left(12M 2^{d+1} \log \frac{4}{\delta}\right)^{1/q} B^{1-1/q} m^{-1/q}\,,
  \end{split}
\end{equation}
where $\Theta$ is given by Eq.~\eqref{thetafin}, and the first inequality admits by $|k_{\rho}| \leq M^*$ almost surely.
Following \citep{Shi2014Quantile}, we have
\begin{equation}\label{bousde}
\left( 2c \Big[ (d+1)! + 2^{d+2} d^d \Big] M^{d+1} B^{-d}\right)^{1/q} \leq 2^{\frac{\Theta+2\epsilon}{q\epsilon}}(20cM)m^{-\Theta/q}\,,
\end{equation}
and
\begin{equation}\label{bousthr}
\left(12M 2^{d+1} \log \frac{4}{\delta}\right)^{1/q} B^{1-1/q} m^{-1/q} \leq 2^{\frac{d+1}{q}} \left(12M \log \frac{4}{\delta}\right)^{1/q} B m^{-1/q}\,.
\end{equation}
Finally, we complete the proof by combining Eqs.~\eqref{bousde} and \eqref{bousthr} into Eq.~\eqref{the1pre}
\begin{equation*}
\begin{split}
\big\| \pi_{M^*}(k^{(\varepsilon)}_{\bm{z},\lambda})  - k_{\rho} \big\|_{L_{\rho_X}^{p^*}}  \leq  \widetilde{C}^{\epsilon}_{\underline{X}, \rho, \alpha, \gamma} \left( \log\frac{4}{\delta} \right)^{1/q} m^{\epsilon - \frac{\Theta}{q}} \,,
\end{split}
\end{equation*}
with
\begin{equation*}
  \widetilde{C}^{\epsilon}_{\underline{X}, \alpha} = 3(M + M^*) \max \Big\{ C_q ( 4\widetilde{C})^{\frac{1}{q}}, (20cM)^{1/q}, (12M)^{1/q} \Big\} 2^{\frac{\Theta+2 \epsilon}{q\epsilon}} \Theta \epsilon^{-1}\,,
\end{equation*}
which concludes the proof.
\end{proof}

\subsection{Theoretical Results on Kernel Approximation}

%If one would like to extend our result on the full problem to the approximation version with divide-and-conquer and Nystr\"{o}m approximation, following \citep{yin2020divide,rudi2017falkon,lin2020optimal}, the excess error $\mathbb{E} \mathcal{E}(\bar{k}_{M,\lambda}) - \mathcal{E}(k_{\rho})$ for KRR in hyper-RKHS under such two approximation strategies can be upper bounded by the sample error, the Nystr\"{o}m error, and the approximation error.
%Here we sketch the key idea as below.

A series of kernel approximation schemes, e.g., divide-and-conquer \citep{zhang2013divide}, distributed learning \citep{lin2017distributed}, Nystr\"{o}m approximation \citep{rudi2015less}, random features \citep{Rudi2017Generalization}, have been extensively studied in learning theory, mainly on kernel ridge regression in RKHS. Recently, much efforts focus on the combination of several strategies, e.g., divide-and-conquer with Nystr\"{o}m approximation \citep{yin2020divide}, distributed learning with stochastic gradient descent (SGD) \citep{lin2020optimal}, random features with SGD \citep{carratino2018learning}. 
Accordingly, following \citep{yin2020divide,rudi2017falkon}, our derived theoretical result on the full problem can be extended to the approximation version with divide-and-conquer and Nystr\"{o}m approximation. Here we briefly present the error decomposition result of KRR in hyper-RKHS under such two kernel approximation settings and sketch our key ideas. 

Define the noise-free version of $\widetilde{k}_{M, \mathcal{V}_c, \lambda}$ by Nystr\"{o}m approximation on the subset $\mathcal{V}_c$ as
\begin{equation*}
	\begin{split}
		& \widetilde{k}_{M, \mathcal{V}_c, \rho, \lambda} (\bm x, \bm x')  = \sum_{i,j=1}^M \! \widetilde{\beta}_{ij} \underline{k} \big((\bm x, \bm x'), (\widetilde{\bm x}_i, \widetilde{\bm x}_j) \big) \,, \\
		& ~~\mbox{with}~~ \mathop{\mathrm{vec}} (\widetilde{\bm \beta}) \!=\! \!\Big(\underline{\bm K}_{nM}^{\!\top}\underline{\bm K}_{nM} + \lambda n^2 \underline{\bm K}_{MM} \Big)^{-1} \underline{\bm K}_{nM}^{\!\top} \mathop{\mathrm{vec}}(k_{\rho}(\bm x_r, \bm x_s))~~ r,s \in \{ 1,2, \cdots, n \}\,,
	\end{split} 
\end{equation*}
and further $\bar{k}_{M, \mathcal{V}, \rho, \lambda} = 1/v \sum_{c=1}^v \tilde{k}_{M, \mathcal{V}_c, \rho, \lambda}$ is the average of all the $v$ partitions.
Define the noise version of the local estimator (without Nystr\"{o}m approximation) on the subset $\mathcal{V}_c$ as
\begin{equation*}
	\begin{split}
		& \widetilde{k}_{\mathcal{V}_c, \rho, \lambda} (\bm x, \bm x')  = \sum_{i,j=1}^n \! \widetilde{\beta}_{ij} \underline{k} \big((\bm x, \bm x'), ({\bm x}_i, {\bm x}_j) \big) \,, \\
		& ~~\mbox{with}~~ \mathop{\mathrm{vec}} (\widetilde{\bm \beta}) \!=\! \!\Big( \underline{\bm K}_{nn} + \lambda n^2 \bm I \Big)^{-1} \mathop{\mathrm{vec}}(k_{\rho}(\bm x_r, \bm x_s))~~ r,s \in \{ 1,2, \cdots, n \}\,,
	\end{split} 
\end{equation*}
and further $\bar{k}_{\mathcal{V}, \rho, \lambda} = 1/v \sum_{c=1}^v \tilde{k}_{\mathcal{V}_c, \rho, \lambda}$ is the average of all the $v$ partitions.
Then the error decomposition for KRR in hyper-RKHS under such two approximation strategies can be similarly obtained by\footnote{The notation $a(v,M,m) \lesssim b(v,M,m)$ means that $a(v,M,m) \leq C b(v,M,m)$ where $C$ is some absolute constant independent of $v,M,m$.}
\begin{equation*}
	\begin{split}
		\mathbb{E} \mathcal{E}(\bar{k}_{M,\lambda}) - \mathcal{E}(k_{\rho})  \lesssim &~  \frac{1}{v} \mathbb{E} \| \widetilde{k}_{M, \mathcal{V}_c, \lambda} - \widetilde{k}_{M, \mathcal{V}_c, \rho, \lambda} \|^2_{L^2_{\rho_{X}}} +  \underbrace{ \mathbb{E}\| \bar{k}_{M, \mathcal{V}, \rho, \lambda} - \bar{k}_{ \mathcal{V}, \rho, \lambda} \|^2_{L^2_{\rho_{X}}} }_{\mbox{Nystr\"{o}m error}} \\ & + \frac{1}{v} \mathbb{E} \| \bar{k}_{ \mathcal{V}, \rho, \lambda} - k_{\lambda} \|^2_{L^2_{\rho_{X}}}   + \underbrace{ \| k_{\lambda} - k_{\rho} \|^2_{L^2_{\rho_{X}}} }_{\mbox{approximation error}},~~~ c = 1,2, \dots, v\,,
	\end{split}
\end{equation*} 
where the first term and the third term are sample error which controls the variance of the outputs $y$ and sample variance, the second term involves with Nytr\"{o}m approximation and the last term is the bias, i.e., the approximation error. 

In particular, the approximation error can be directly upper bounded by Eq.~\eqref{Dlambda}. The key part in the analysis is to use the Bernstein's inequality to study the relationship between the empirical pair sample and its expectation, which has been established in Lemma~\ref{lemma4}. Accordingly, proofs for sample error and Nytr\"{o}m error can be exactly obtained by combining Lemma~\ref{lemma4} with previous results \citep{yin2020divide,rudi2017falkon}. We therefore omit the proof in this paper.

\section{Experiments}
\label{experiment}

We evaluate the proposed two regression models with squared loss and the $\varepsilon$-insensitive loss in hyper-RKHS, termed as ``hyper-KRR" and ``hyper-SVR" for learning kernels, and then apply them to classification tasks.
First, we experimentally investigate the approximation performance of our methods for the known kernels on the UCI repository\footnote{\scriptsize{\url{https://archive.ics.uci.edu/ml/datasets.html}}}.
Second, we conduct experiments to learn an underlying kernel from the ``ideal'' kernel on a wide range of classification problems on the UCI classification datasets.
%First, we conduct experiments to learn an underlying kernel from the ``ideal'' kernel on a wide range of classification problems from the UCI repository~\footnote{\scriptsize{\url{https://archive.ics.uci.edu/ml/datasets.html}}}.
%Second, for out-of-sample extensions, our methods are evaluated on the UCI classification datasets and the LFW face recognition dataset~ \footnote{\scriptsize{\url{http://vis-www.cs.umass.edu/lfw/}}}.
Third, for scalability, we test our methods on two large datasets including \emph{ijcnn1} and \emph{covtype}\footnote{\scriptsize{Both data sets are available at
\url{https://www.csie.ntu.edu.tw/~cjlin/libsvmtools/datasets/}}
}.
Last, for out-of-sample extensions, we apply our method to non-parametric kernel learning on the \emph{MNIST} handwritten digits dataset \citep{L1998Gradient}.
%The current state-of-the-art method \cite{Pan2016Out} for out-of-sample extensions is served as a baseline.
%The experiments implemented in MATLAB are conducted on a PC with Intel i5-6500 CPU (3.20 GHz) and 16 GB memory.
The experiments implemented in MATLAB are conducted on a PC with Intel$^\circledR$ i7-8700K CPU (3.70 GHz) and 64 GB RAM.
The source code of our implementation can be found in \url{http://www.lfhsgre.org}.

During training, ${\sigma}^2$ in the Gaussian hyper-kernel is set to the variance of data, and $\sigma_h^2$ is tuned via
5-fold cross validation over the values $\{ 0.25 {\sigma}^2, 0.5 {\sigma}^2, {\sigma}^2, 2 {\sigma}^2, 4 {\sigma}^2\}$.
%The parameter $\underline{\rho}$ in TL1 hyper-kernel is set to $0.7d$, and $\underline{\rho}_h$ is tuned via cross validation over the values $\{ 0.1, 0.5, 0.5, 0.7, 1\}d$.
The regularization parameters $\lambda$ in KRR and $C$ in SVR are searched on grids of $\log_{10}$ scale in the range of $[10^{-5},10^5]$.
The two slack variables $\hat{\xi}_{ij}, \check{\xi}_{ij}$ in SVR are set to $0.1$ and $0.01$, respectively.
%The experiments are repeated 10 trials on these nine datasets.
%Empirical results demonstrate that the proposed methods are able to learn a kernel function form an arbitrary kernel matrix, and thus achieves a satisfactory performance on classification tasks when compared to other out-of-sample based algorithms.
%\vspace{-0.2cm}

\subsection{Approximating known PD/non-PD kernels}
\label{sec:apppdnon}
Here we carry out experiments to investigate the approximation performance for known kernels.
The out-of-sample extension based algorithm \citep{Pan2016Out} is taken into comparisons.
This method solves a nonnegative least squares in hyper-RKHS, which can be regarded a special case of hyper-KRR.
Nevertheless, we do not want to claim that the learned (indefinite) kernel in our framework is better than the PD one from \citep{Pan2016Out}.
Instead, our target is to show the utility or flexibility of our framework.
For fair comparison, these three algorithms in hyper-RKHS are associated with the same hyper-kernel, i.e., the hyper-Gaussian kernel used in this subsection.

For the experiments on UCI data sets, the data points are partitioned into 40\% labeled data, 40\% unlabeled data, and 20\% test data.
The labeled and unlabeled data points form the training dataset.
Such setting follows with \citep{Pan2016Out}, which simultaneously considers tranductive learning and inductive learning.
Here the pre-given kernel matrix is generated by a known kernel including a positive definite one and an indefinite one.
Learning on known kernels focuses on the approximation performance of the compared algorithms on these kernels.
The used evaluation metric here is relative mean square error (RMSE) between the learned regression function $k^*(\bm x, \bm x')$ and the pre-given kernel matrix $\bm K$ over $m^2$ pairwise data points.
Besides, we also evaluate our kernel learning methods incorporated into SVM for classification.
As a consequence, such experimental setting on known kernels help us to comprehensively investigate the approximation ability of the compared algorithms on PD or non-PD kernels.

{\bf Results on the known Gaussian kernel:} Here the pre-given kernel matrix is generated by a known Gaussian kernel.
Table~\ref{Tabinde} reports the experimental results in terms of classification accuracy and test root mean squared error (RMSE) for out-of-sample extensions on the known Gaussian kernel.
%Learning on known kernels focuses on the approximation performance of the compared algorithms on these kernels.
%Although we do not need to learn these known kernels, we aim to investigate the approximation ability of the compared algorithms on these kernels.
%The approximation performance can be quantitatively evaluated by RMSE between the learned kernel and the given kernel matrix.
From the results, we can see that  the proposed hyper-SVR and hyper-KRR have the capability of approximating the Gaussian kernel.
Further, in terms of the test accuracy, hyper-SVR performs better than the other two methods on test data.
But, regarding to hyper-KRR and hyper-SVR, in general, we see that classification performance and approximation accuracy are not well correlated.
A better approximation quality cannot guarantee better classification performance.
This is not a unique phenomenon in our algorithm but a common issue in the kernel approximation topic \citep{avron2017random,munkhoeva2018quadrature,liu2020survey}.
Approximation and generalization appears two correlated tasks but experimentally not. How to bridge the gap between good (distinct) approximation and indistinctive generalization performance still remains an open question in theory.

\begin{table*}[t]
  \centering
  %\fontsize{7}{8}\selectfont
  \scriptsize
  \begin{threeparttable}
  \caption{RMSE performance on test data and classification accuracy of (mean$\pm$std. deviation) of each compared algorithm on unlabeled data and test data, in which the given kernel matrix is generated by a Gaussian kernel and a $\log$ kernel, respectively. The best performance is highlighted in \textbf{bold}. The results directly achieved by these three known kernels do not participate in ranking.}
  \label{Tabinde}
    \begin{tabular}{cccccccccccccccccccc}
    \toprule
    &{Dataset}&\multirow{2}{0.35cm}{type} &fertility&australian &wine
    &sonar&heart&guide1-t\cr
    \cmidrule(lr){4-9}
    &(\#data, \#feature) &&(100, 9) &(690, 14) &(178, 13) &(208, 60) &(270, 13) &(4000, 4)\cr
    \midrule
    \multirow{11}{0.2cm}{\rotatebox{90}{Gaussian kernel}} &\multirow{2}{1.8cm}{\centering{the \emph{known} Gaussian kernel}} &unlabel &95.02$\pm$7.11 &83.92$\pm$1.90 &96.68$\pm$1.93 &75.34$\pm$3.07 &78.63$\pm$4.52&95.67$\pm$3.71\\
    &&test  &87.45$\pm$6.77 &82.64$\pm$2.23&98.02$\pm$1.81&72.14$\pm$1.38 &80.91$\pm$4.89&95.11$\pm$3.33\\
    \cmidrule(lr){3-9}
    &\multirow{3}{1.8cm}{\citep{Pan2016Out}} &unlabel & 88.03$\pm$8.11 &81.56$\pm$3.78 &96.89$\pm$1.4 1&73.78$\pm$5.67 &{\bf 81.45}$\pm$3.56&{\bf 95.89}$\pm$2.83\\
    &&test &85.51$\pm$7.72 &82.64$\pm$3.60 &96.33$\pm$2.67 &73.83$\pm$8.02  &79.04$\pm$5.11 &93.42$\pm$2.81\\
    &&RMSE  &0.152 &{\bf 0.088} &0.123 &0.165  &0.128 &{0.288} \cr
    \cmidrule(lr){3-9}
    &\multirow{3}{1.5cm}{hyper-KRR}  &unlabel  &92.53$\pm$7.80 &81.52$\pm$4.01 &{\bf 97.33}$\pm$1.45 &{\bf 77.12}$\pm$4.21 &80.14$\pm$5.02 &95.81$\pm$2.67\\
    &&test &86.01$\pm$9.33 &{\bf 83.63}$\pm$4.20 &96.62$\pm$3.41 &67.82$\pm$8.20 &{\bf 82.73}$\pm$5.22 &95.65$\pm$2.54\\
    & &RMSE   &{\bf 0.081} &0.138 &{\bf 0.062} &{\bf 0.085} &{\bf 0.095}&{\bf 0.104} \cr
    \cmidrule(lr){3-9}
    &\multirow{3}{1.8cm}{hyper-SVR}  &unlabel  &{\bf 95.64}$\pm$8.52 &{\bf 82.24}$\pm$3.70 &97.13$\pm$1.20 &73.24$\pm$6.13 &81.30$\pm$3.62 &95.14$\pm$3.33\\
    &&test &{\bf 88.53}$\pm$7.10 &82.61$\pm$3.93 &{\bf 98.04}$\pm$2.22 &{\bf 75.54}$\pm$6.42 &80.12$\pm$3.20 &{\bf 97.92}$\pm$2.32\\
    & &RMSE  &0.102 &0.108 &0.089 &0.143 &0.120 &0.120 \cr
        \hline
  \multirow{9}{0.2cm}{\rotatebox{90}{log kernel}}&  \multirow{2}{1.2cm}{the \emph{known} log kernel}  &unlabel &95.23$\pm$5.72 &84.04$\pm$1.93 &98.04$\pm$1.32&74.44$\pm$7.32 &80.51$\pm$1.42 &95.24$\pm$4.11\\
  & &test &81.52$\pm$8.81 &83.90$\pm$1.92 &96.12$\pm$2.62 &78.33$\pm$6.82 &81.44$\pm$6.50 &95.02$\pm$3.71 \cr
 \cmidrule(lr){3-9}
  &  \multirow{1}{1.5cm}{\centering{\citep{Pan2016Out}\tnote{1}}}  &- &- &- &- &- &- &-  \cr
 \cmidrule(lr){3-9}
  &  \multirow{3}{1.5cm}{hyper-KRR}   &unlabel  &{\bf 98.01}$\pm$2.62 &73.84$\pm$1.92 &{\bf 97.31}$\pm$1.33 &72.34$\pm$5.32 &81.02$\pm$2.21 &{95.12}$\pm$1.43\\
   & &test &88.02$\pm$4.81 &76.64$\pm$5.14 &60.13$\pm$8.08 &65.84$\pm$9.56 &77.63$\pm$7.02 &91.84$\pm$3.93\\
   &   &RMSE   &0.005 &0.717 &0.748 &0.697 &0.435 &0.827 \cr
\cmidrule(lr){3-9}
   & \multirow{3}{1.5cm}{hyper-SVR}   &unlabel &97.83$\pm$4.22 &{\bf 84.12}$\pm$1.73 &97.02$\pm$1.91 &{\bf 72.72}$\pm$4.67  &{\bf 81.74}$\pm$3.12 &{\bf 96.84}$\pm$1.34\\
  &  &test &{\bf 93.53}$\pm$2.44 &{\bf 83.42}$\pm$3.80 &{\bf 95.53}$\pm$5.12 &{\bf 66.74}$\pm$6.32 &{\bf 80.93}$\pm$4.70 &{\bf 94.22}$\pm$1.63\\
     & &RMSE  &{\bf 0.002} &{\bf 0.474} &{\bf 0.138} &{\bf 0.174} &{\bf 0.196} &{\bf 0.523} \cr
    \bottomrule
    \end{tabular}
    \begin{tablenotes}
        \footnotesize
        \item[1] We omit the results provided by \cite{Pan2016Out} on the $\log$ kernel because this method cannot output a nonnegative coefficient vector for approximation due to the negative values in the $\log$ kernel.
\end{tablenotes}
    \end{threeparttable}
    \vspace{-0cm}
\end{table*}

{\bf Results on the known $\log$ kernel:}
Here we conduct experiments on a known indefinite kernel to generate the pre-given output. 
The used $\log$ kernel \citep{boughorbel2005conditionally} is given by $k(\bm x,\bm x') = - \log(1+\frac{\| \bm x - \bm x'\|}{{\sigma}})$ with the chosen $\sigma=1$.
Table~\ref{Tabinde} reports the classification accuracy and test root mean squared error for out-of-sample extensions across the $\log$ kernel.
In terms of the approximation ability of these compared algorithms, hyper-SVR performs best to approximate these two indefinite kernels.
It can be noticed that, the algorithm in \citep{Pan2016Out} is not able to approximate the $\log$ kernel due to its negative values, and thus is infeasible for such kernel.
In general, we see improvement yielded by our two regression models on the final classification accuracy, which also shows the flexibility of indefinite kernel learning models.

\subsection{Learning by approximating the ``ideal" kernel}
\label{sec:appideal}
As aforementioned, the ``ideal" kernel can be used to guide the kernel learning task.
Here we evaluate our methods with other representative kernel learning based algorithms embedded in SVM for classification.

{\bf Experimental settings:} Table~\ref{Tabideal} lists a brief description of six UCI datasets including the number of data $n$ and the feature dimension $d$.
The data are normalized to $[0,1]^d$ in advance. The compared algorithms include
\begin{itemize}
  \item KTA \citep{Cortes2012Algorithms}: A two-stage kernel learning framework jointly learns the weights of base kernels by maximizing the alignment with the ``ideal" kernel in stage 1, and then is embedded into SVM for classification. 
Here the base kernels are chosen as eleven Gaussian kernels with the kernel width $\sigma \in \{ 2^{-5}, 2^{-4}, \cdots, 2^5 \}$.
  \item BMKL \citep{Gonen2012Bayesian}: A Bayesian multiple kernel learning algorithm ensemble eleven Gaussian kernels with different kernel widths $\sigma \in \{ 2^{-5}, 2^{-4}, \cdots, 2^5 \}$ and three polynomial kernels with degrees $1,2,3$. 
  \item RF \citep{AmanNIPS2016}: A nonparametric kernel learning framework generates a large number of random features (we set to 10,000 in our experiment) by the Gaussian kernel, and then learn their weights based on target alignment.
  \item MIKL \citep{kowalski2009multiple}: A multiple indefinite kernel learning framework ensembles a linear kernel and two Gaussian kernels with $\sigma = 0.1$ and $\sigma = 100$ via a mixed norm regularization scheme. The combination coefficient can be negative, which allows for indefinite kernel learning. In our experiments, we use the $\ell_1$-norm regularization as an example for comparison.
  \item SVM-CV: The SVM classifier with the Gaussian kernel is served as a baseline, where the balance parameter $C$ and the kernel width $\sigma$ are tuned by 5-fold cross validation on a grid of points, i.e., $\sigma=[2^{-5}, 2^{-4},\dots,2^5]$ and $C = [2^{-5}, 2^{-4},\dots,2^5]$.
\end{itemize}
Our methods includes four version determined by the used two regressors: KRR and SVR in hyper-RKHS and the used two hyper-kernels: hyper-Gaussian kernel and hyper-Wishart kernel.
We follow with the setting in Section~\ref{sec:apppdnon}, these kernel learning based algorithms are conducted by randomly picking 40\% of the data for training and the rest for test. The experiments are repeated 10 trials on these six datasets.

\begin{table*}
  \centering
  %\fontsize{7}{8}\selectfont
  \footnotesize
  \begin{threeparttable}
  \caption{Classification accuracy of (mean$\pm$std. deviation) of our algorithms on test data for the ``ideal" kernel versus representative kernel learning based approaches equipped with various base kernels. The best performance is highlighted in \textbf{bold}.}
  \label{Tabideal}
    \begin{tabular}{ccccccccccccccccccc}
    \toprule
     Dataset &fertility&australian &wine &sonar
    &heart&guide1-t\cr
    \cmidrule(lr){2-7}
    (\#data, \#feature) &(100, 9) &(690, 14) &(178, 13)  &(208,60) &(270, 13) &(4000, 4)\cr
    \midrule
    KTA
    &86.67$\pm$2.05 &82.52$\pm$1.55 &96.20$\pm$2.22 &80.41$\pm$2.88 &83.24$\pm$1.62 &88.22$\pm$0.68\cr
    \hline
    BMKL
    &85.50$\pm$3.34 &84.61$\pm$1.56 &95.48$\pm$2.21 &81.68$\pm$2.93 &{\bf 84.46}$\pm$1.88 &96.05$\pm$0.46 \cr
    \hline
    RF
    &81.33$\pm$5.31 &82.17$\pm$2.02 &94.88$\pm$2.94 &80.52$\pm$3.58 &82.46$\pm$2.12 &95.83$\pm$0.48 \cr
    \hline
    MIKL
    &86.83$\pm$1.46 &{\bf 86.45}$\pm$0.98 &94.78$\pm$2.02 &77.72$\pm$5.57 &82.90$\pm$2.21 &89.74$\pm$0.57 \cr
    \hline
    SVM-CV
    &86.50$\pm$2.53 &85.14$\pm$0.84 &95.14$\pm$2.32 &80.40$\pm$4.48 &80.43$\pm$3.35 &96.49$\pm$0.44 \cr
    \hline
    \hline
    hyper-KRR(Gaussian)
     &88.50$\pm$3.37 & 82.82$\pm$4.80 &92.53$\pm$4.83 &77.67$\pm$5.07 &81.11$\pm$3.12 &{93.92}$\pm$3.44\cr
   \hline
   hyper-SVR(Gaussian)
   &89.50$\pm$5.50 &85.68$\pm$3.56 &{\bf 97.32}$\pm$1.82 &{\bf 82.32}$\pm$3.34 &82.65$\pm$2.35 &{93.22$\pm$2.88}\cr
    \hline
    hyper-KRR(Wishart)
   &90.25$\pm$2.34 &81.21$\pm$2.84 &94.83$\pm$2.77 &78.45$\pm$2.24 &80.15$\pm$2.71 &{92.65$\pm$2.58}\cr
   \hline
   hyper-SVR(Wishart)
   &{\bf 90.30}$\pm$2.12 &84.23$\pm$2.27 &96.65$\pm$2.35  &81.13$\pm$2.94 &81.43$\pm$2.52 &{\bf 96.52$\pm$1.48}\cr
    \bottomrule
    \end{tabular}
    \end{threeparttable}
\end{table*}

{\bf Experimental results:}
Table~\ref{Tabideal} reports the test classification accuracy of all compared methods.
Compared with KTA and RF based on kernel target alignment, our methods perform better to learn the underlying kernel from the ``ideal" kernel in hyper-RKHS, and thus achieve promising performance with noticeable margins.
When compared to BMKL and MIKL based on multiple kernel learning, the proposed SVR with Gaussian/Wishart hyper-kernel performs well on several datasets, which verifies the effectiveness of our kernel learning scheme.
It indicates that the learned underlying kernel is flexible beyond a linear combination of several base kernels.
For self comparisons, in terms of the test accuracy, the proposed hyper-SVR with Gaussian/Wishart kernel is superior to the remaining three versions as a whole.
%In terms of the hyper-kernel selection, our regression models in hyper-RKHS embedded with hyper-Gaussian kernels show superiority to those with hyper-Wishart kernels.
Regarding to the loss function, our regression model with the squared loss in hyper-RKHS is often inferior to that with the $\varepsilon$-insensitive loss whatever the hyper-kernel is chosen.

\begin{figure*}[t]
	\centering
	\subfigure{\label{fignum}
		\includegraphics[width=0.315\textwidth]{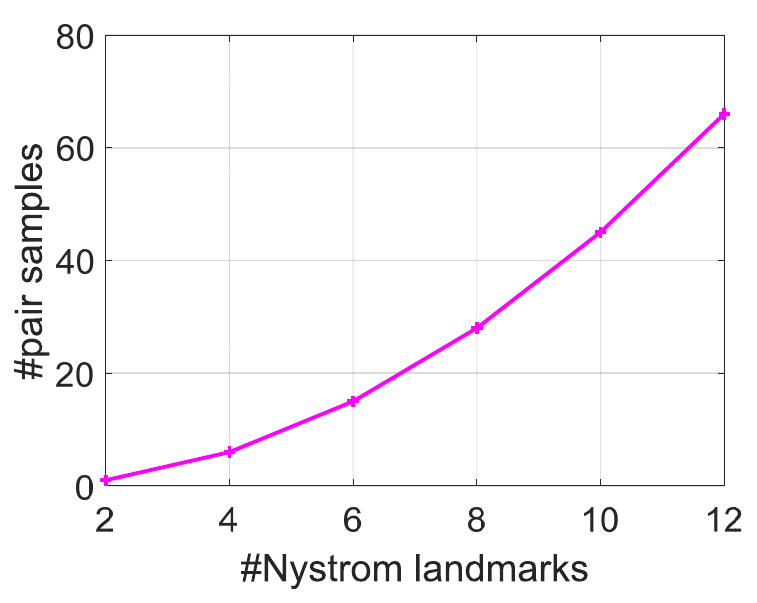}}
	\subfigure{\label{figapp}
		\includegraphics[width=0.317\textwidth]{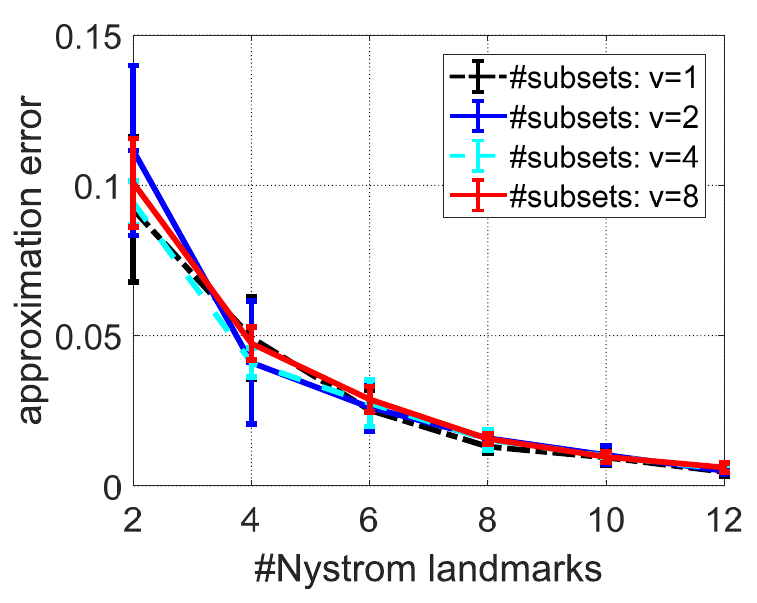}}
	\subfigure{\label{figtime}
		\includegraphics[width=0.315\textwidth]{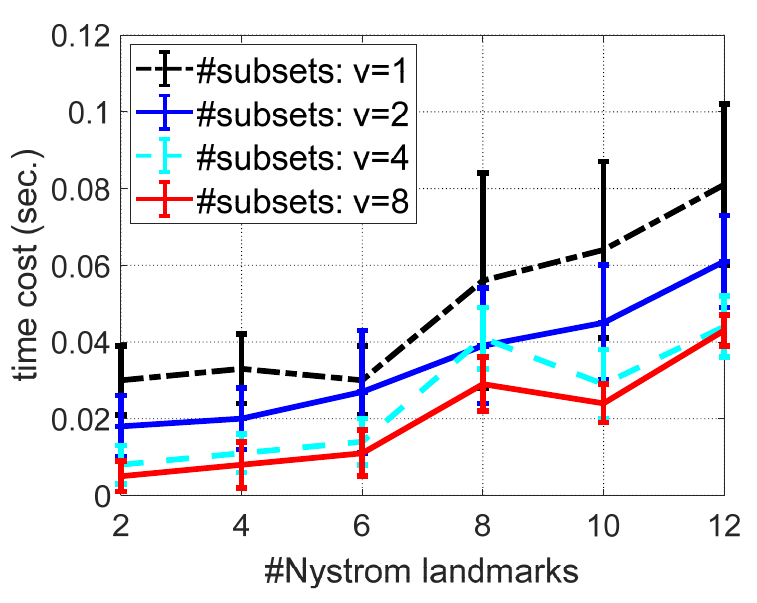}}
	\caption{Approximating the Gaussian hyper-kernel matrix with varying \#Nystr\"{o}m landmarks $M$ and \#subsets $v$ on the \emph{heart} dataset.}	\label{fignys}
	\vspace{-0.05cm}
\end{figure*}

\subsection{Validation of Kernel Approximation and Results on Large Scale Datasets}
\label{sec:largenys}
In this subsection, we first quantitatively evaluate the Gaussian hyper-kernel approximation effect by the number of Nystr\"{o}m landmarks $M$ and subsets $v$, and then apply such two schemes to large scale datasets.

In our experiment, we equally divide the $m$ training data into $v$ partitions $\{ \mathcal{V}_1, \mathcal{V}_2, \cdots, \mathcal{V}_v \}$.
Then on the subset $\mathcal{V}_c$ ($c=1,2,\dots,v$), we use $M$ landmarks for Nystr\"{o}m approximation to obtain an approximated hyper-kernel matrix $\widetilde{\underline{\bm K}}^{(c)} = \underline{\bm K}_{{n}{M}} \underline{\bm K}_{{M}{M}}^{\dagger} \underline{\bm K}_{{n}{M}}^{\!\top}$.
Finally, the approximated hyper-kernel matrix averaged on $v$ subsets is given by $\widetilde{\underline{\bm K}} = 1/v \sum_{c=1}^v \widetilde{\underline{\bm K}}^{(c)}$ such that $\underline{\bm K} \approx \widetilde{\underline{\bm K}}$.
In this case, the approximation error is evaluated by $\| \widetilde{\underline{\bm K}} - \underline{\bm K} \|_{\mathrm{F}}/\| \underline{\bm K} \|_{\mathrm{F}}$ on the $m^2$ pair samples.
Strictly speaking, the number of unduplicated pair samples is $\left(\begin{array}{c}
	2 \\
	m
\end{array}\right)$.

Since the number of pair samples dramatically increases with $m$, we consider a small-scale \emph{heart} dataset with $m=108$ training data for evaluation. 
If we take $M$ Nystr\"{o}m landmarks, the number of unduplicated pair samples can be reduced to $ \widetilde{M} = \left(\begin{array}{c}
	2 \\
M
\end{array}\right)$.
Figure~\ref{fignys} shows the number of unduplicated pair samples by $M$ landmarks, approximation error, and time cost for approximation (mean $\pm$ std. across 10 trials) on the \emph{heart} dataset.
We take $M=2,4,6,8,10,12$ and $v=1,2,4,8$ into comparison.
The approximation error under different number of landmarks $M$ and subsets $v$ is shown in
Figure~\ref{figapp}.
We find that, as the number of landmarks $M$ increases, the approximation error dramatically decreases even if $M$ is much smaller than training data size $m$. 
Nevertheless, the divide-and-conquer scheme, e.g., $v=2,4,8$, does not incur in extra approximation error when compared to the original case with $v=1$, which demonstrates its utility.
Specifically, this scheme is able to decrease time cost for kernel approximation as shown in Figure~\ref{figtime}, which validates its effectiveness in terms of computational efficiency.

After quantitatively evaluating the performance of the developed kernel approximation scheme (divide-and-conquer and Nystr\"{o}m approximation), we incorporate them into the studied model in hyper-RKHS on large scale datasets for prediction.
Here we choose two large scale datasets including \emph{ijcnn1} and \emph{covtype} to test the compared algorithms in hyper-RKHS on the ideal kernel.
Instead, for these three learning algorithms in hpyer-RKHS, we divide the data into $v$ disjoint subsets $\{ \mathcal{V}_1, \mathcal{V}_2, \cdots, \mathcal{V}_v \}$, and then conduct Nystr\"{o}m approximation on each subset.
The number of subsets is set to $v=5,10,20$ on the \emph{ijcnn1} dataset, and $v=50,100,200$ on the \emph{covtype} dataset.
Following \citep{Pan2016Out}, the number of Nystr\"{o}m landmarks is set to $M = 0.05m$.
Besides, we also include BMKL equipped with Gaussian kernels and polynomial kernels for comparison.
Note that, Nystr\"{o}m approximation on BMKL appears non-trivial, and thus we just incorporate BMKL into the divide-and-conquer framework and cooperate without the rankings.

Table~\ref{Tablarge} reports the number of training and test data, the number of subsets, the mean classification accuracy and the total time cost.
Experimental results show that all the three methods in hyper-RKHS can be feasible to large scale case, owing much to the developed kernel approximation scheme.
We find that, these three algorithms achieve similar performance in terms of classification accuracy and time cost.
As the number of subsets increases, these three algorithms achieve slight fluctuation on the test accuracy but significantly improve the computational efficiency.
%In these two datasets, our two methods are superior to the baseline in terms of the test accuracy.
%We find that, 
%The final classification accuracy demonstrates the superiorty of the proposed regression models in hyper-RKHS for out-of-sample extensions.
%Besides, without the nonnegative constraint, our ``hyper-KRR" algorithm is much more efficient than \cite{Pan2016Out} as we expect.
%The proposed ``hyper-SVR" also achieves a comparable computational efficency in terms of training time.
Besides, BMKL achieves the best performance on classification accuracy but takes much more time cost for kernel learning.
Here we just report its results but do not include it for fair comparison as distributed BMKL is just equipped with the divide-and-conquer scheme without Nystr\"{o}m approximation.

%Actually, if Nystr\"{o}m approximation and random features are extended to indefinite kernels in theoretical aspect, they can also be alternative ways in our large scale case.

\begin{table*}[t]
\renewcommand\arraystretch{1}
  \centering
  \small
  %\fontsize{6.5}{8}\selectfont
  \begin{threeparttable}
  \caption{Classification accuracy and total time cost of each compared algorithm on two large scale datasets for the ideal kernel.}
  \label{Tablarge}
    \begin{tabular}{p{2cm}cccc|cccc}
    \toprule
    \centering{Dataset}&\centering{$v$} &\centering{\citep{Pan2016Out}} &\centering{hyper-KRR} &\centering{hyper-SVR} & distributed BMKL \cr
    \midrule
    \multirow{3}{0.8cm}{\centering{\emph{ijcnn1}}\\\#train=49,990\\ \#test=91,701} &\centering{5} &90.49\%(1354.2s)  &90.72\%(1375.2s)  &90.22\%(1322.4s) & 97.35\%(230576s) \\
     &\centering{10}  &89.72\%(835.8s) &89.71\%(846.8s)  &89.37\%(1156.1s) & 97.36\%(12020s) \\
     &\centering{20} & 90.49\%(743.5s) &90.94\%(752.1s)  &90.97\%(1035.9s) & 97.34\%(5844.8s) \\
     \cmidrule(lr){1-6}
    \multirow{3}{0.8cm}{\centering{\emph{covtype}}\\ \#train=232,405\\ \#test=232,405} &\centering{50} &68.32\%(4231.5s)  &70.41\%(4276.3s) &70.64\%(4353.5s) & 77.03\%(185182s)  \\
   &\centering{100} &69.67\%(3213.4s) &69.82\%(3241.5s)  &76.58\%(3352.2s) & 76.30\%(81551s)  \\
    &\centering{200}  &70.61\%(2300.6s) &70.45\%(2305.8s) &70.64\%(2317.5s) & 76.83\%(18001s) \cr
    \bottomrule
    \end{tabular}
    \end{threeparttable}
    \vspace{-0.2cm}
\end{table*}

\begin{figure*}[t]
	\centering
	\subfigure[digits 1 \emph{vs.} 9]{\label{fig1v9}
		\includegraphics[width=0.315\textwidth]{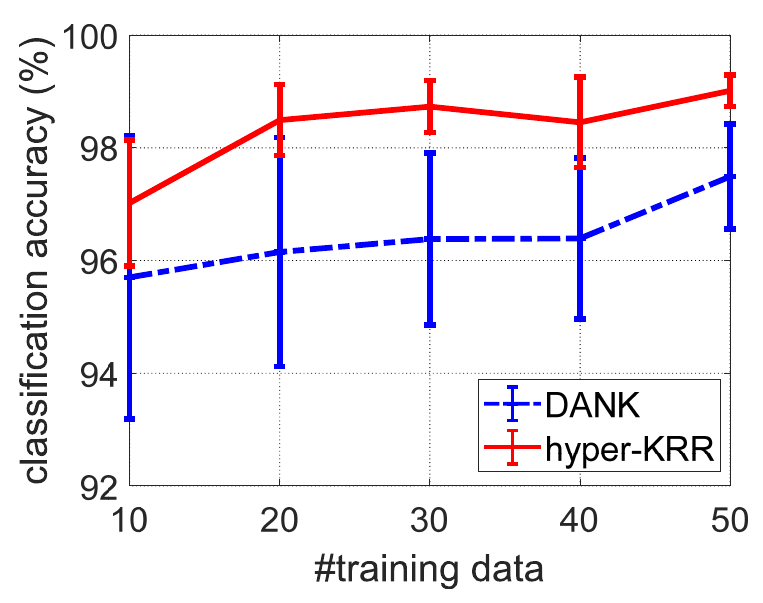}}
	\subfigure[digits 2 \emph{vs.} 7]{\label{fig2v7}
		\includegraphics[width=0.317\textwidth]{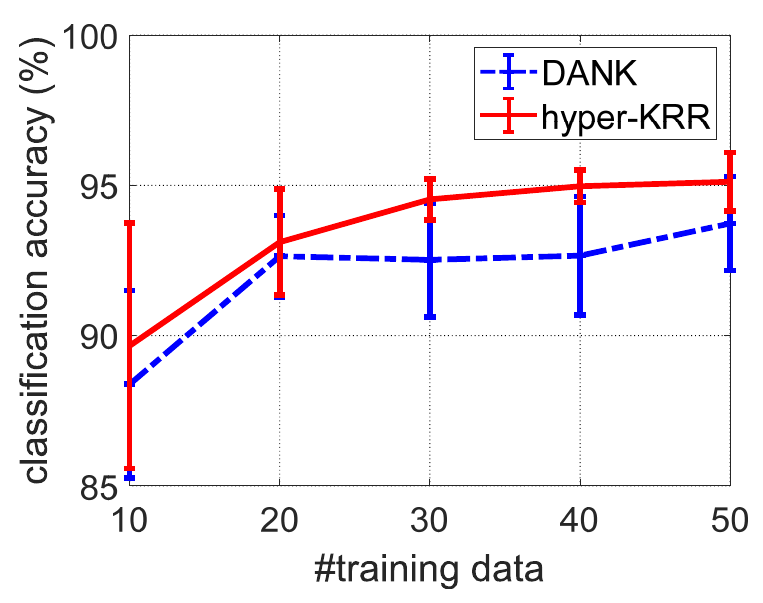}}
	\subfigure[digits 4 \emph{vs.} 6]{\label{fig3v6}
		\includegraphics[width=0.315\textwidth]{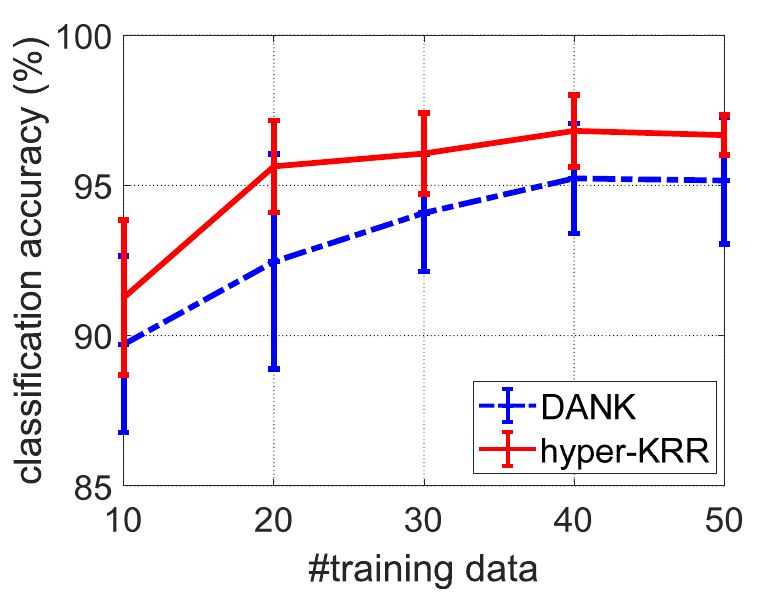}}
	\caption{Classification accuracy of DANK and hyper-KRR with varying number of training data on the \emph{MNIST} dataset.}	\label{figmnist}
	\vspace{-0.05cm}
\end{figure*}

\subsection{Out-of-sample extensions for nonparametric kernel learning}
\label{sec:onek}
As mentioned in the introduction, nonparametric kernel learning in a data-driven manner is often faced with the out-of-sample extensions issue, i.e., a non-parametric kernel/similarity matrix is learned but the underlying kernel function is unknown.
	For example, \cite{liu2020learning} propose a data-adaptive non-parametric kernel (DANK) learning framework to improves the model flexibility. In DANK, a data-adaptive matrix is learned based on the training data but is unknown on test data.
	To address the out-of-sample extension issues, they directly choose a simple reciprocal nearest neighbor scheme that extends the data-adaptive matrix from training to test data.
	We have to be faced with the inconsistency when using such interpolation scheme.
	Since the out-of-sample extension issue can be addressed by learning a kernel function in hyper-RKHS, here we compare hyper-KRR with the simple interpolation strategy on the \emph{MNIST} handwritten digits dataset \citep{L1998Gradient} for evaluation.
	
	In our experimental setting, several (easily confused) digit pairs, including 1 \emph{vs.} 9, 2 \emph{vs.} 7, and 4 \emph{vs.} 6, are taken into comparison.
	Specifically, we choose a few number of training data (i.e., 10, 20, 30, 40, 50) to validate the effectiveness of the employed out-of-sample extension strategy.
	We first use DANK to learn a non-parametric kernel matrix on the training data, then adopt the out-of-sample extension strategies to extend the kernel matrix from training data to test data, including the original reciprocal nearest neighbor scheme and the studied hyper-KRR, and finally incorporate it into SVM for classification on the test data. 
	Figure~\ref{figmnist} shows the test accuracy across 10 trials (mean$\pm$std. deviation) of the compared two out-of-sample extensions schemes.
	Results on classification accuracy indicate that the developed framework in hpyer-RKHS is able to achieve improvement with about 2\% margin on DANK when compared to the original reciprocal nearest neighbor scheme \citep{liu2020learning}.
	Such improvement on these three digit pairs demonstrates the effectiveness of the studied hyper-RKHS based algorithms for out-of-sample extension, especially when the training data size is small or limited.

\section{Discussion}
\label{sec:diss}
Here we briefly discuss the related topics on kernel learning and neural networks close to the studied framework in this paper.

%Generally, most of kernel learning based algorithms are either one-stage or two-stage processes.
%The \emph{one-stage} approach simultaneously learns the kernel and the prediction function by solving a single joint optimization problem.
%It was initially proposed by \cite{Lanckriet2004Learning} and then received significant attentions in multiple kernel learning \citep{G2011Multiple,jawanpuria2015generalized} and non-parametric kernel learning \citep{Jain2012Metric,liu2020learning}.
%Besides, learning (shift-invariant) kernels can be equivalent to learning spectral distribution by random features, e.g., \citep{AmanNIPS2016} in a spectral density sense.
%Further, such one-stage kernel learning framework can be seamlessly incorporated into deep architectures in an end-to-end training strategy for classification/regression, e.g., a two-layer multiple kernel learning framework \citep{Zhuang2011Two}, convolutional kernel networks \citep{mairal2016end}, Gaussian processes with a learned deep kernel given by a deep multilayer neural network \citep{Wilson2016Deep,wilson2016stochastic}.

Our kernel learning framework belongs to a \emph{two-stage} process that first learns a suitable kernel from the training data, and then uses the learned kernel in a conventional kernel machine, such as SVM or SVR for prediction.
One representative approach is developed by \emph{target alignment}  \citep{Cristianini2001On,Cortes2012Algorithms,Kumar2012A}. 
In stage 1, they consider finding a ``good" combination of base kernels using the training data based on \emph{target alignment} \citep{Cortes2012Algorithms,Wang2015An}.
%Our method belongs to the latter, and in the next we detail the typical two-stage kernel learning framework.
Accordingly, the learned weight vector yields the learned kernel for the subsequent prediction process.
Stage 2 is a standard learning problem in RKHS $\mathcal{H}$ associated with the learned kernel.
Our framework for kernel learning is different from them in the hypothesis space.
Classical two-stage kernel learning framework in essence belongs to multiple kernel learning \citep{G2011Multiple} in RKHS due to the pre-given kernels. Nevertheless, our framework in hyper-RKHS does not restrict specific formulation on kernel.
Since a pre-given positive definite kernel can correspond to a fixed combination of pre-given elements in hyper-RKHS, the space spanned by a linear combination of PD kernels is only a small subspace in hyper-RKHS.
Hence, our kernel learning framework can be learned in this space from a broader class, which allows for significant model flexibility.
More importantly, the application of the studied framework is not limited to kernel learning. It can be also applied to out-of-sample extensions in non-parametric kernel learning to learn a underlying kernel/similarity function from a pre-given similarity matrix as demonstrated by Section~\ref{sec:onek}. This is actually beyond the topic of kernel learning, which in turn expands the application of learning in hyper-RKHS.

Actually, several representative approaches are able to achieve the similar effect as the used learning framework in hyper-RKHS for stage 1.
For example, learning by random features \citep{AmanNIPS2016} is able to work in a two-stage setting by first learning the weights of random features based on target alignment, and then obtaining a predictor. Such learning strategy in the spectral density sense is also used in \citep{bullins2017not} and can be further improved by generative models \citep{li2019implicit}. 
Besides, pairwise learning \citep{stock2018comparative,lei2020sharper} is an alternative way to achieve this target by constructing pairwise kernels, which measures the similarity between two pairs $(\bm x_1, \bm x_1')$ and $(\bm x_2, \bm x_2')$.
Such similarity learning on pair samples is also popular in deep metric learning, e.g., contrastive
embedding via Siamese networks \citep{bromley1994signature,guo2017learning}, triplet embedding \citep{salakhutdinov2007learning,hoffer2015deep} that jointly constitutes a positive pair and a negative pair.

It is worth nothing that kernel learning is not conflict with existing works in deep learning.
In fact, the connections between kernel methods and (deep) neural networks in over-parameterized setting have been extensively explored in recent years, e.g., the relations between Gaussian processes and infinitely wide multi-layer networks \citep{lee2017deep}; the equivalence between \emph{weakly/fully-trained} neural networks \citep{chizat2019lazy,ghorbani2019linearized} and kernel regression by random features \citep{rahimi2007random,mei2019generalization} or neural tangent kernel \citep{jacot2018neural} under some proper initialization; the equivalence between training a two-layer neural network via gradient descent and learning a data-adaptive kernel in a dynamic RKHS \citep{dou2020training}.
We remark upfront that connections to kernel methods is not the only way for analyzing (deep) neural networks.
The spanning space of neural networks is also not limited to RKHS.
For example, the ``dot-product attention" in Transformers can be characterized as kernel learning in Banach spaces \citep{wright2021transformers} instead of RKHS, which also leads to an indefinite kernel but not in RKKS. 
The functional space of two layer wide-width neural networks can be induced by the variation norm \citep{bach2017breaking,chizat2020implicit}, which is much larger than that of the RKHS norm for better understanding.
Further, many other approaches, with different points of views, have been proposed for deep learning theory, but they are out of scope of our discussion here.

\section{Conclusion}
\label{sec:con}

In this paper, we have studied the generalization properties of regularized regression models in hyper-RKHS.
The excess error converges at a certain learning rate as the sample size increases.
The derived learning rate provides a justification for us to learn the kernel in hyper-RKHS with theoretical guarantees.
Hence, we characterize a kernel learning framework in this space for kernel learning and out-of-sample extensions.
The studied framework in hyper-RKHS is quite general to cover a series of applications, e.g., kernel/metric learning, out-of-sample extensions.

\acks{
We thank the anonymous reviewers for their constructive and insightful comments.
The research leading to these results has received funding from the European Research Council under the European Union's Horizon 2020 research and innovation program / ERC Advanced Grant E-DUALITY (787960). This paper reflects only the authors' views and the Union is not liable for any use that may be made of the contained information.
This work was supported in part by Research Council KU Leuven: Optimization frameworks for deep kernel machines C14/18/068; Flemish Government: FWO projects: GOA4917N (Deep Restricted Kernel Machines: Methods and Foundations), PhD/Postdoc grant. This research received funding from the Flemish Government (AI Research Program). 
This work was supported in part by Ford KU Leuven Research Alliance Project KUL0076 (Stability analysis and performance improvement of deep reinforcement learning algorithms), EU H2020 ICT-48 Network TAILOR (Foundations of Trustworthy AI - Integrating Reasoning, Learning and Optimization), Leuven.AI Institute; and in part by the National Natural Science Foundation of China (Grants Nos. 61876107, 61977046, U1803261), National Key R\&D Program of China (No. 2019YFB1311503), and NSFC/RGC Joint Research Scheme (Nos. 1201101029 and N\_CityU102/20), in part by Shanghai Science and Technology Research Program (20JC1412700 and 19JC1420101), Shanghai Municipal Science and Technology Major Project (2021SHZDZX0102) and SJTU Global Strategic Partnership Fund (2020 SJTU-CORNELL).}

\appendices

\section{Proofs}
\label{sec:prooframe}
\subsection{Proofs of Proposition~\ref{errdec}}
\label{sec:errdec}
 \begin{proof}
According to the project operator $\pi_B$ in Definition~\ref{proj}, for any given
$a,b\in \mathbb{R}$, if $a\geq b$, we have
\begin{eqnarray*}
\pi_B(a)-\pi_B(b) = \left\{ \begin{array}{ll}
0 & \hbox{ if } a\geq b \geq B \hbox{ or } -B \geq a \geq b\,,\\
\min\{a,B\}+\min\{-b,B\} & \hbox{ otherwise }\,.\\
\end{array} \right.
\end{eqnarray*} Then we have $0 \leq \pi_B(a)-\pi_B(b)\leq a-b$ if
$a\geq b$. Similarly, when $a\leq b$, we have $a-b\leq \pi_B(a)-\pi_B(b)\leq 0$.
Hence for any $(\bm x, \bm x', y)\in Z$ and $k:X \times X \to \mathbb{R}$, there holds
\begin{equation*}
 \mathcal{T}^{\varepsilon}\big(\pi_B(y), \pi_B(k(\bm x, \bm x'))\big) \leq \mathcal{T}^{\varepsilon}\big(y, k(\bm x, \bm x')\big)\,.
\end{equation*}
Recall Eq.~\eqref{fzrs}, $\mathcal{E}_{\bm z}\big(\pi_B(k^{(\varepsilon)}_{\bm{z},\lambda})\big)$ can be bounded by
\begin{equation}\label{boundy}
\begin{split}
  \mathcal{E}_{\bm z}\big(\pi_B(k^{(\varepsilon)}_{\bm{z},\lambda})\big) &= \frac{1}{m^2} \sum_{i,j=1}^{m} \mathcal{T}^{\varepsilon}\big(y_{ij}, \pi_B(k(\bm x_i, \bm x_j))\big) \\
   &\leq  \frac{1}{m^2} \sum_{i,j=1}^{m} \mathcal{T}^{\varepsilon}\big(\pi_B(y_{ij}), \pi_B(k(\bm x_i, \bm x_j))\big) +  \frac{1}{m^2} \sum_{i,j=1}^{m} \Big| \pi_B(y_{ij}) - y_{ij} \Big| \\
   &\leq \mathcal{E}_{\bm z}\big(k^{(\varepsilon)}_{\bm{z},\lambda}\big) + \frac{1}{m^2} \sum_{i,j=1}^{m} \Big| \pi_B(y_{ij}) - y_{ij} \Big| \,,
  \end{split}
\end{equation}
where the second term $ \frac{1}{m^2} \sum_{i,j=1}^{m} | \pi_B(y_{ij}) - y_{ij} |$ is termed as the output error.
Accordingly, we have
\begin{equation*}
\begin{split}
&  \mathcal{E}\big(\pi_B(k^{(\varepsilon)}_{\bm{z},\lambda})\big) - \mathcal{E}(k_{\rho})  +  \lambda \| k^{(\varepsilon)}_{\bm{z},\lambda} \|^2_{\underline{\mathcal{H}}} \\
& = \bigg\{ \mathcal{E}\big(\pi_B(k^{(\varepsilon)}_{\bm{z},\lambda})\big) - \mathcal{E}_{\bm z}\big(\pi_B(k^{(\varepsilon)}_{\bm{z},\lambda})\big) - \mathcal{E}(k_{\rho}) \bigg\} +\mathcal{E}_{\bm z}\big(\pi_B(k^{(\varepsilon)}_{\bm{z},\lambda})\big)+  \lambda \| k^{(\varepsilon)}_{\bm{z},\lambda} \|^2_{\underline{\mathcal{H}}}   \\
& \leq \bigg\{ \mathcal{E}\big(\pi_B(k^{(\varepsilon)}_{\bm{z},\lambda})\big) - \mathcal{E}_{\bm z}\big(\pi_B(k^{(\varepsilon)}_{\bm{z},\lambda})\big) - \mathcal{E}(k_{\rho}) \bigg\} +\mathcal{E}_{\bm z}\big(k^{(\varepsilon)}_{\bm{z},\lambda}\big)+  \lambda \| k^{(\varepsilon)}_{\bm{z},\lambda} \|^2_{\underline{\mathcal{H}}}   + \frac{1}{m^2} \sum_{i,j=1}^{m} \Big| \pi_B(y_{ij}) - y_{ij} \Big| \\
& \leq \bigg\{ \mathcal{E}\big(\pi_B(k^{(\varepsilon)}_{\bm{z},\lambda})\big) - \mathcal{E}_{\bm z}\big(\pi_B(k^{(\varepsilon)}_{\bm{z},\lambda})\big) - \mathcal{E}(k_{\rho}) \bigg\} +\mathcal{E}_{\bm z}\big(k_{\lambda}\big)+ {\lambda \| k_{\lambda}\|^2_{\underline{\mathcal{H}}}}  + \varepsilon +\frac{1}{m^2} \sum_{i,j=1}^{m} \Big| \pi_B(y_{ij}) - y_{ij} \Big|\\
&:= D(\lambda)  + \mathcal{E}\big(\pi_B(k^{(\varepsilon)}_{\bm{z},\lambda})\big) - \mathcal{E}_{\bm{z}}\big(\pi_B(k^{(\varepsilon)}_{\bm{z},\lambda})\big) + \mathcal{E}_{\bm{z}}\big(k_{\lambda}\big) - \mathcal{E}\big(k_{\lambda}\big)+ \varepsilon + \frac{1}{m^2} \sum_{i,j=1}^{m} \Big| \pi_B(y_{ij}) - y_{ij} \Big|\,,
\end{split}
\end{equation*}
where the first inequality holds by Eq.~\eqref{boundy}, the second inequality satisfies because $k^{(\varepsilon)}_{\bm{z},\lambda}$ is the minimizer of Eq.~\eqref{fzrs} and the insensitivity condition in Eq.~\eqref{insentivec}, and the last equality admits by Eq.~\eqref{Dlamdadef}.
Finally, we draw our conclusion.
%$D(\lambda)$ is a regularization error, which is independent of the sample $\bm{z}$. And thus we have $D'(\lambda) \geq \lambda\| f_{\lambda}\|_{\underline{\mathcal{H_{\bar K}}}}^2$, namely $\| f_{\lambda}\|_{\underline{\mathcal{H_{\bar K}}}} \leq \sqrt{\frac{D(\lambda)}{\lambda}}$.
\end{proof}

\subsection{Proofs of Proposition~\ref{propos2}}
\label{sec:s2}
\begin{proof}
%The random variable $\xi$ can be decomposed into $\xi= \acute{\xi} + \grave{\xi}$, where $\acute{\xi} $ and $\grave{\xi}$ are defined as:
%\begin{eqnarray*}
%% \nonumber % Remove numbering (before each equation)
% \acute{\xi}(\bm x, \bm x', y) &:=& Q\big(y,k_{\lambda}(\bm x, \bm x')\big) - Q\big(y,\pi_B(k_{\lambda})(\bm x, \bm x')\big) \,,\\
%\grave{\xi}(\bm x, \bm x', y) &:=& Q\big(y,\pi_B(k_{\lambda})(\bm x, \bm x')\big) - Q\big(y,k_{\rho}(\bm x, \bm x')\big)\,.
%\end{eqnarray*}
%Therefore, we have
%\begin{equation}\label{S2dec}
%\begin{split}
% S_2(\bm z, \lambda) & = \left( \frac{1}{m^2} \sum_{i=1}^{m} \sum_{j = 1}^{m} \acute{\xi}(\bm x_i, \bm x_j, y_{ij}) - \mathbb{E}\acute{\xi} \right) + \left( \frac{1}{m^2} \sum_{i=1}^{m} \sum_{j = 1}^{m} \grave{\xi}(\bm x_i, \bm x_j, y_{ij}) - \mathbb{E}\grave{\xi} \right) \\
% &=\left( \frac{1}{m^2} \sum_{i=1}^{m} \sum_{j \neq i}^{m} \acute{\xi}(\bm x_i, \bm x_j, y_{ij}) - \mathbb{E}\acute{\xi} +  \frac{1}{m^2} \sum_{i=1}^{m}  \acute{\xi}(\bm x_i, \bm x_i, y_{ii})\right) + \left( \frac{1}{m^2} \sum_{i=1}^{m} \sum_{j \neq i}^{m} \grave{\xi}(\bm x_i, \bm x_j, y_{ij}) - \mathbb{E}\grave{\xi} \right) \\
%& \leq \left( \frac{1}{m(m-1)} \sum_{i=1}^{m} \sum_{j \neq i}^{m} \acute{\xi}(\bm x_i, \bm x_j, y_{ij}) - \mathbb{E}\acute{\xi} +  \frac{1}{m^2} \sum_{i=1}^{m}  \acute{\xi}(\bm x_i, \bm x_i, y_{ii}) \right) \\
% & + \left( \frac{1}{m(m-1)} \sum_{i=1}^{m} \sum_{j \neq i}^{m} \grave{\xi}(\bm x_i, \bm x_j, y_{ij}) - \mathbb{E}\grave{\xi} +  \frac{1}{m^2} \sum_{i=1}^{m}  \grave{\xi}(\bm x_i, \bm x_i, y_{ii}) \right)\,.
% \end{split}
%\end{equation}
Considering the random variable $\xi$ in Eq.~\eqref{randomvariables} on $(Z,\rho)$, we have
\begin{equation*}
  S_2(\bm z, \lambda) = \frac{1}{m^2} \sum_{i,j=1}^{m} \xi(\bm z_{ij}) - \mathbb{E}(\xi) \leq \frac{1}{m(m-1)} \sum_{i,j=1}^{m} \sum_{j=1,j\neq i}^{m} \xi(\bm z_{ij}) + \frac{1}{m^2} \sum_{i=1}^{m} \xi(\bm z_{ii}) - \mathbb{E}(\xi)\,.
\end{equation*}
First, we consider the non-diagonal elements $\xi(\bm z_{ij})$ with $i \neq j$.
Since $\|k_{\lambda} \|_{\infty} \leq \mathcal{G} \sqrt{\frac{D(\lambda)}{\lambda}}$ by Eq.~\eqref{flambound} and $k_{\rho}(\bm x, \bm x')$ contained in $[-M^*,M^*]$.
Accordingly, we can get
\begin{equation*}
  \big|\xi - \mathbb{E}(\xi) \big| \leq   \mathcal{G} \sqrt{\frac{D(\lambda)}{\lambda}}+M^*\,.
\end{equation*}
By Lemma~\ref{lemmas3}, the variance-expectation condition of $\xi(\bm x, \bm x', y)$ is satisfied with $\theta$ given by Eq.~\eqref{theta} and
$c_1=C_{\theta}\max \{ B, M^*\}^{2-\theta}$.
Applying Lemma \ref{lemma4}, there exists a subset $Z_1$ of $Z^{m \times m}$ with confidence $1-\delta/4$, we have
\begin{equation}\label{bound6}
\frac{1}{m(m-1)}\sum_{i=1}^m \sum_{j\neq i}^{m}\xi(\bm z_{ij})-\mathbb{E}\xi \leq
\sqrt{(\mathbb{E}\xi)^{\theta}+\epsilon^{\theta}}
\epsilon^{1-\frac{\theta}{2}} \leq
(\mathbb{E}\xi)^{\frac{\theta}{2}}\epsilon^{1-\frac{\theta}{2}}+\epsilon
\leq \frac{\theta}{2}\mathbb{E}\xi +
\left(2-\frac{\theta}{2}\right)\epsilon,
\end{equation}
where the last inequality is from Young's inequality.
Let $\epsilon$ be the solution of the equation
\begin{equation*}
\exp\left\{-\frac{(m-1)\epsilon^{2-\theta}}{4C_{\theta}\max \{ B, M^*\}^{2-\theta}
+\frac{4}{3}(\mathcal{G} \sqrt{\frac{D(\lambda)}{\lambda}} + M^*)\epsilon^{1-\theta}}\right\}=\delta/4\,.
\end{equation*}
%\begin{equation*}
%  \epsilon^{2-\theta}-\frac{4\Big(\mathcal{G} \sqrt{\frac{D(\lambda)}{\lambda}} + 2M\Big)\log\frac{3}{\delta}}{3m}\epsilon^{1-\theta}
%-\frac{4C_{\theta}M^{2-\theta}\log\frac{3}{\delta}}{m}=0
%\end{equation*}
 Using Lemma 7.2 in \cite{cucker2007learning}, we find
\begin{equation*}
\begin{split}
\epsilon &\leq \max\left\{\frac{8\Big(\mathcal{G} \sqrt{\frac{D(\lambda)}{\lambda}} + M^*\Big) \log\frac{4}{\delta}}{3(m-1)},\left(\frac{8C_{\theta}
\max \{ B, M^*\}^{2-\theta}\log\frac{4}{\delta}}{m-1}\right)^{\frac{1}{2-\theta}}\right\} \\
& \leq 8(C_{\theta}+1)\bigg(\mathcal{G} \sqrt{\frac{D(\lambda)}{\lambda}} + M^* + B \bigg) \log\frac{4}{\delta}m^{-\frac{1}{2-\theta}} \,,
\end{split}
\end{equation*}
where we use $\frac{1}{m} \leq m^{-\frac{1}{2-\theta}}$ and $(\frac{1}{m-1})^{-\frac{1}{2-\theta}} \leq m^{-\frac{1}{2-\theta}}$ with $\theta \in (0,1]$ in Eq.~\eqref{theta}.
Substituting the above bound to Eq.~\eqref{bound6}, we obtain
\begin{equation*}
\begin{split}
\frac{1}{m(m-1)}\sum_{i=1}^m \sum_{j\neq i}^{m}\xi(\bm z_{ij})-\mathbb{E}\xi
 &\leq
\frac{\theta}{2}\! \Big\{ \mathcal{E}(k_{\lambda}) - \mathcal{E}(k_{\rho}) \Big\} \!
\!+\! 8\! \left(2-\frac{\theta}{2}\right)\! (C_{\theta}\!+\!1)\bigg(\mathcal{G} \sqrt{\frac{D(\lambda)}{\lambda}} \!+\! M^*\!+\!B\bigg) \log\frac{4}{\delta}m^{-\frac{1}{2-\theta}}\\
&\leq \frac{1}{2}{D}(\lambda)
+16(C_{\theta}+1)\bigg(\mathcal{G} \sqrt{\frac{D(\lambda)}{\lambda}} + M^*+ B\bigg) \log\frac{4}{\delta}m^{-\frac{1}{2-\theta}}\,.
\end{split}
\end{equation*}

Next we consider the diagonal elements $\xi(\bm z_{ij})$ with $i = j$, that is
\begin{equation*}
  \frac{1}{m^2} \sum_{i=1}^{m}  {\xi}(\bm x_i, \bm x_i, y_{ii}) \leq \frac{1}{m} \Big( \mathcal{G} \sqrt{\frac{D(\lambda)}{\lambda}} + M^* \Big)\,.
\end{equation*}
Finally, combining above two equations, we have
\begin{equation*}
  S_2({\bm z, \lambda}) \leq \frac{1}{2}{D}(\lambda)
+16(C_{\theta}+1)\bigg(\mathcal{G} \sqrt{\frac{D(\lambda)}{\lambda}} + M^*+B\bigg) \log\frac{4}{\delta}m^{-\frac{1}{2-\theta}} + \frac{1}{m} \Big( \mathcal{G} \sqrt{\frac{D(\lambda)}{\lambda}} + M^* \Big)\,,
\end{equation*}
which concludes the proof.
\end{proof}

\subsection{Proofs of Proposition~\ref{propos1}}
\label{sec:s1}
\begin{proof}
Consider the function set $\mathcal{F}_R$ with $R>0$ by
\begin{equation*}
    \mathcal{F}_R := \left\{ \mathcal{T}\big(y,\pi_B(k)(\bm x, \bm x')\big) - \mathcal{T}\big(y,k_{\rho}(\bm x, \bm x')\big) : k \in \mathcal{B}_R \right\}\,.
\end{equation*}
Each function $g \in \mathcal{F}_R$ has the form $g(\bm x, \bm x',y) = \mathcal{T}\big(y,\pi_B(k)(\bm x, \bm x')\big) - \mathcal{T}\big(y,k_{\rho}(\bm x, \bm x')\big)$ with some $k \in \mathcal{B}_R$.
Hence, $S_1$ can be bounded by
\begin{equation}\label{S2dec}
\begin{split}
 S_1(\bm z, \lambda) & \leq \left( \frac{1}{m(m-1)} \sum_{i=1}^{m} \sum_{j \neq i}^{m} g(\bm x_i, \bm x_j, y_{ij}) - \mathbb{E}g +  \frac{1}{m^2} \sum_{i=1}^{m}  g(\bm x_i, \bm x_i, y_{ii}) \right) \,.
 \end{split}
\end{equation}
We can easily see that $\| g\|_{\infty} \leq B+M^*$, and thus we have $| g-\mathbb{E}g| \leq B+M^*$.
By Lemma~\ref{lemmas3}, the variance-expectation condition of $\xi(z)$ is satisfied with $\theta$ given by Eq.~\eqref{theta} and
$c_1=C_{\theta}\max\{B,M^*\}^{2-\theta}$.

First, we consider the $i \neq j$ case by Lemma~\ref{lemma4}.
The Lipschitz property of the $\varepsilon$-insensitive loss yields $\mathscr{N}(\mathcal{F}_R,\epsilon) \leq \mathscr{N}(\mathcal{B}_1,\epsilon)$.
So applying Lemma~\ref{lemma4} to the function set $\mathcal{F}_R$ with the covering number condition in Eq.~\eqref{assumpN}, we have
  \begin{equation*}
  \begin{split}
  &\mathop{\mathrm{Prob}} \limits_{\bm z \in Z^{m \times m}} \Bigg\{ \sup_{k \in \mathcal{F}_R}  \frac{\mathbb{E}g - \frac{1}{m(m-1)} \sum_{i=1}^{m} \sum_{j \neq i}^{m} g(\bm x_i, \bm x_j, y_{ij})  }{\sqrt{(\mathbb{E}g)^{\theta} +\epsilon^{\theta}}} \geq 4\epsilon^{1-\frac{\theta}{2}} \Bigg\} \\
  &\qquad \qquad \leq \mathscr{N}(\mathcal{B}_1,\epsilon) \exp \bigg\{ -\frac{(m-1) \epsilon^{2-\theta}}{4C_{\theta}\max\{B,M^*\}^{2-\theta} +\frac{16}{3}M^* \epsilon^{1-\theta}} \bigg\} \\
  &\qquad \qquad \leq \exp \left\{ C_s \Big( \frac{R}{\epsilon} \Big)^s -\frac{(m-1) \epsilon^{2-\theta}}{4C_{\theta}\max\{B,M^*\}^{2-\theta} +\frac{4}{3}(B+M^*) \epsilon^{1-\theta}} \right\}\,,
  \end{split}
\end{equation*}
where $\mathbb{E}g = \mathcal{E}\big(\pi_B(k)\big) - \mathcal{E}(k_{\rho})$.
%  \begin{equation*}
%  \begin{split}
%  &\mathop{\mathrm{Prob}} \limits_{\bm z \in Z^{m^2}} \Bigg\{ \sup_{k \in \mathcal{F}_R}  \frac{\Big\{ \mathcal{E}\big(\pi(k)\big) - \mathcal{E}(k_{\rho}) \Big\} -\Big\{ \mathcal{E}_{\bm{z}}\big(\pi(k)\big) - \mathcal{E}_{\bm{z}}(k_{\rho}) \Big\}  }{\sqrt{\Big(\mathcal{E}\big(\pi(k)\big) - \mathcal{E}(k_{\rho})\Big) ^{\theta} +\epsilon^{\theta}}} > 4\epsilon^{1-\frac{\theta}{2}} \Bigg\} \\
%  &\leq \mathscr{N}(\mathcal{B}_1,\epsilon) \exp \bigg\{ -\frac{m \epsilon^{2-\theta}}{4C_{\theta}M^{2-\theta} +\frac{16}{3}M \epsilon^{1-\theta}} \bigg\} \leq \exp \left\{ C_s \Big( \frac{R}{\epsilon} \Big)^s -\frac{m \epsilon^{2-\theta}}{4C_{\theta}M^{2-\theta} +\frac{16}{3}M \epsilon^{1-\theta}} \right\}
%  \end{split}
%  \,.
%\end{equation*}
Hence there holds a subset $Z_2$ of $Z^{m \times m}$ with confidence at least $1-\delta/4$ such that
\begin{equation*}
  \sup_{k \in \mathcal{F}_R}  \frac{\mathbb{E}g - \frac{1}{m(m-1)} \sum_{i=1}^{m} \sum_{j \neq i}^{m} g(\bm x_i, \bm x_j, y_{ij})  }{\sqrt{(\mathbb{E}g)^{\theta} +\Big(\epsilon^*(m,R,\frac{\delta}{4})\Big)^{\theta}}} \leq  4\big(\epsilon^*(m,R,\frac{\delta}{4})\big)^{1-\frac{\theta}{2}} \quad \forall \bm z \in Z_2 \cap \mathscr{W}(R) \,,
\end{equation*}
where $\epsilon^*(m,R,\frac{\delta}{4})$ is the smallest positive number $\epsilon$ satisfying
\begin{equation*}
  C_s \Big( \frac{R}{\epsilon} \Big)^s -\frac{m \epsilon^{2-\theta}}{4C_{\theta}\max\{B,M^*\}^{2-\theta} +\frac{4}{3}(M^*+B) \epsilon^{1-\theta}} = \log \frac{\delta}{4}\,,
\end{equation*}
%By some straightforward algebraic manipulations, the above equation is formulated as:
%\begin{equation*}
%    \epsilon^{2-\theta+s}-\frac{4(M+B)\log\frac{3}{\delta}}{3m}\epsilon^{1-\theta+s}
%-\frac{4C_{\theta}M^{2-\theta}\log\frac{3}{\delta}}{m}  \epsilon^s - \frac{4(M+B) C_s R^s}{3m}\epsilon^{1-\theta} - \frac{4C_{\theta}C_s R^s M^{2-\theta}}{3m} =0\,,
%\end{equation*}
using Lemma 7.2 in \cite{cucker2007learning}, we have
\begin{equation*}
\begin{split}
\epsilon^* & \leq \max\Bigg\{\frac{16(M^*+B)}{3m}\log\frac{4}{\delta},\left(\frac{16C_{\theta}
\max\{B,M^*\}^{2-\theta}}{m}\log\frac{4}{\delta}\right)^{\frac{1}{2-\theta}}, \left( \frac{16(M^*+B)}{3m}C_sR^s \right)^{\frac{1}{1+s}}, \\
&~\quad~~\quad~~\quad~ \left( \frac{16\max\{B,M^*\}^{2-\theta}}{3m}C_sR^sC_{\theta} \right)^{\frac{1}{2-\theta+s}} \Bigg\} \\
& \leq 16(M^*+B)(C_{\theta}+1)\log\frac{4}{\delta} m^{-\frac{1}{2-\theta}} + 16C_s(M^*+B)(C_{\theta}+1)m^{-\frac{1}{2+s-\theta}}R^{\frac{s}{1+s}}\,,
\end{split}
\end{equation*}
 where we use $\frac{1}{m} \leq m^{-\frac{1}{2-\theta}}$, $M^*\geq 1$ and $(M^*)^{\frac{1}{1+s}} \leq (M^*)^{\frac{2-\theta}{2-\theta+s}}$.

 Next we consider the $i = j$ case in $S_1(\bm z, \lambda)$.
 Since $\| g\|_{\infty} \leq B+M^*$, we have
\begin{equation*}
  \frac{1}{m^2} \sum_{i=1}^{m}  g(\bm x_i, \bm x_i, y_{ii})  \leq \frac{1}{m} \Big( B+M^* \Big)\,.
\end{equation*}
Combining above two equations, for $\bm z \in \mathcal{B}(R) \cap Z_2$, we have
 \begin{equation*}
 \begin{split}
 S_1(\bm z, \lambda) &= \Big\{ \mathcal{E}\big(\pi_B(k^{(\varepsilon)}_{\bm{z},\lambda})\big)\! - \! \mathcal{E}(k_{\rho}) \Big\} \!-\! \Big\{ \mathcal{E}_{\bm{z}}\big(\pi_B(k^{(\varepsilon)}_{\bm{z},\lambda})\big) \!- \! \mathcal{E}_{\bm{z}}(k_{\rho}) \Big\}  \\
 & \leq 4\big[\epsilon^*(m,R,\frac{\delta}{4})\big]^{1-\frac{\theta}{2}} \sqrt{\Big( \mathcal{E}\big(\pi(k_{\bm{z},\lambda})\big) - \mathcal{E}(k_{\rho}) \Big) ^{\theta} +\Big(\epsilon^*(m,R,\frac{\delta}{4})\Big)^{\theta}} + \frac{1}{m}(B+M^*) \\
 & \leq (1-\frac{\theta}{2})4^{2/(2-\theta)}\epsilon^*(m,R,\frac{\delta}{4}) + \frac{\theta}{2} \Big( \mathcal{E}\big(\pi_B(k^{(\varepsilon)}_{\bm{z},\lambda})\big) - \mathcal{E}(k_{\rho}) \Big) + 4\epsilon^*(m,R,\frac{\delta}{4}) + \frac{1}{m}(B+M^*) \\
 & \leq 20 \epsilon^*(m,R,\frac{\delta}{4}) +  \frac{1}{2}\Big\{ \mathcal{E}\big(\pi_B(k^{(\varepsilon)}_{\bm{z},\lambda})\big) - \mathcal{E}(k_{\rho}) \Big\} + \frac{1}{m}(B+M^*) \,,
 %& \leq 1280M(C_{\theta}+1)\bigg( \log\frac{3}{\delta} m^{-\frac{1}{2-\theta}} + C_sR^sm^{-\frac{1}{2+s-\theta}}R^{\frac{s}{1+s}} \bigg)+ \frac{1}{2}\Big\{ \mathcal{E}\big(\pi(f_{\bm{z},\lambda})\big) - \mathcal{E}(f^*) \Big\}
 \end{split}
 \end{equation*}
 where the second inequality  is from Young's inequality.
Finally, we complete the proof.
 \end{proof}

\subsection{Proofs of Proposition~\ref{proexcessfin}}
\label{sec:excessfin}
\begin{proof}
Combining the bounds in Proposition~\ref{errdec},~\ref{propos2},~\ref{propos1}, Eq.~\eqref{flambound} and Eq.~\eqref{boundyy}, the excess error $ \mathcal{E}\big(\pi_B(k^{(\varepsilon)}_{\bm{z},\lambda})\big) - \mathcal{E}(k_{\rho})$ can be bounded by
\begin{equation}\label{finalb}
\begin{split}
  \mathcal{E}\big(\pi(k^{(\varepsilon)}_{\bm{z},\lambda})\big) - \mathcal{E}(k_{\rho}) & \leq 2 \varepsilon + 3C_0\lambda^{r} + 32(C_{\theta}+1)\Big(M^*+\mathcal{G} \sqrt{C_0}\lambda^{\frac{r-1}{2}}\Big) \log\frac{4}{\delta}m^{-\frac{1}{2-\theta}} + \frac{2\mathcal{G} \sqrt{C_0}\lambda^{\frac{r-1}{2}}}{m} \\
  & + 640 (C_{\theta}+1)(M^*+B)\bigg( \log\frac{4}{\delta} m^{-\frac{1}{2-\theta}} + C_sm^{-\frac{1}{2+s-\theta}}R^{\frac{s}{1+s}} \bigg) \\
  & + \frac{2B+4M^*}{m} + 2c \Big\{ (d+1)! + 2^{d+2} d^d \big\} M^{d+1} B^{-d} + \frac{12M 2^{d+1}}{m} \log \frac{4}{\delta}\,.
    \end{split}
\end{equation}

In the next, we attempt to find a $R>0$ by giving a bound for $\lambda \| k^{(\varepsilon)}_{\bm{z},\lambda} \|^2_{\underline{\mathcal{H}}}$. Form the definition of $k^{(\varepsilon)}_{\bm{z},\lambda}$ in Eq.~\eqref{fzrs}, we have
\begin{equation*}
  \lambda \| k^{(\varepsilon)}_{\bm{z},\lambda} \|^2_{\underline{\mathcal{H}}} \leq \mathcal{E}_{\bm{z}}\big(k^{(\varepsilon)}_{\bm{z},\lambda}\big) +  \lambda \| k^{(\varepsilon)}_{\bm{z},\lambda} \|^2_{\underline{\mathcal{H}}} \leq \mathcal{E}_{\bm{z}}(0) \leq \frac{1}{m^2} \sum_{i,j=1}^{m} |Y_{ij}|\,.
\end{equation*}
Using Eq.~\eqref{boundyy} with confidence
$1-\delta/4$, we have
\begin{equation}\label{bound16}
 \frac{1}{m^2} \sum_{i,j=1}^{m} |Y_{ij}| \leq cM+4M(1+\sqrt{2c})\frac{\log
\frac{4}{\delta}}{\sqrt{m}} \leq (3cM+4M)\log
\frac{4}{\delta}:=M_{\delta}.
\end{equation} This yields the measure
of the set $\mathscr{W}(\frac{M_{\delta}}{\lambda})$ is at least
$1-\delta/4$, thus the measure of the set
$\mathscr{W}(\frac{M_{\delta}}{\lambda})\cap Z_3 \cap Z_2 \cap Z_1$
is at least $1-\delta$. We substitute $R=\frac{M_{\delta}}{\lambda}$
to Eq.~\eqref{finalb} and let Eq.~\eqref{assumpN} with $s>0$, Eq.~\eqref{Dlambda} with $0 < r \leq 1$, take $\lambda=m^{-\alpha}$ with $ 0 < \alpha \leq 1$ and $\alpha < \frac{1+s}{s(2+s-\theta)}$. Set $\varepsilon = m^{-\gamma}$ with $\alpha r \leq \gamma \leq \infty$, we have
\begin{equation*}
\begin{split}
  \mathcal{E}\big(\pi(k^{(\varepsilon)}_{\bm{z},\lambda})\big) - \mathcal{E}(k_{\rho}) & \leq 2 m^{-\gamma} + 3C_0m^{-\alpha r} +\widetilde{C}_1\log\frac{4}{\delta}m^{-\frac{1}{2-\theta}}
  + \widetilde{C}_2 \log\frac{4}{\delta}m^{-\big[ \frac{1}{2-\theta}
  + \frac{\alpha(r-1)}{2-\theta} \big]}\\
   &+ 2\mathcal{G} \sqrt{C_0}m^{-\big[\frac{\alpha(r-1)}{2}+1\big]}
   + \widetilde{C}_3(3cM+4M)m^{\frac{\alpha s}{1+s}-\frac{1}{2+s-\theta}}\\
  & + \frac{2B+4M^*}{m} + 2c \Big\{ (d+1)! + 2^{d+2} d^d \big\} M^{d+1} B^{-d} + \frac{12M 2^{d+1}}{m} \log \frac{4}{\delta}\,,
    \end{split}
\end{equation*}
where $\widetilde{C}_1$, $\widetilde{C}_2$ and $\widetilde{C}_3$ are constants given by
\begin{eqnarray*}
\widetilde{C}_1 = (C_{\theta}+1)(672M^*+640B),~~~
  \widetilde{C}_2 =  32(C_{\theta}+1)\mathcal{G} \sqrt{C_0},~~~
  \widetilde{C}_3 = 640 (C_{\theta}+1)(M^*+B) C_s\,.
\end{eqnarray*}

Formally, we choose
\begin{equation*}
  \widetilde{C}  = \max \Big\{3C_0, \widetilde{C}_1,  \widetilde{C}_2, 2\mathcal{G} \sqrt{C_0}, \widetilde{C}_3(3cM+4M), 2B+4M^*\Big \} =  \max \Big\{3C_0, \widetilde{C}_1,  \widetilde{C}_2, 2\mathcal{G} \sqrt{C_0}, \widetilde{C}_3(3cM+4M)\Big \} \,,
\end{equation*}
and
the power index $\Theta$
\begin{equation*}
\begin{split}
  \Theta &= \min \left\{ \gamma, {\alpha r}, \frac{1}{2-\theta}, \frac{1}{2-\theta}
  + \frac{\alpha(r-1)}{2-\theta}, \frac{\alpha(r-1)}{2}+1, \frac{1}{2+s-\theta} - \frac{\alpha s}{1+s} \right\} \\
 & = \min \left\{ {\alpha r},   \frac{1+\alpha r -\alpha}{2-\theta}
 , \frac{1}{2+s-\theta} - \frac{\alpha s}{1+s}  \right\}\,,
  \end{split}
\end{equation*}
which concludes the proof.
\end{proof}

\newpage

%\vskip 0.2in
%\bibliography{E:/Me/refs.bib}

\end{document}